\newcommand{\shrink}[1]{}
\begin{document}
\title{A New Class of Explanations for Classifiers with Non-Binary Features}
%
%
\author{Chunxi Ji\orcidID{0000-0002-4475-1987} \and
Adnan Darwiche\orcidID{0000-0003-3976-6735}}
\authorrunning{C. Ji et al.}
%
\institute{University of California, Los Angeles, CA 90095, USA}
\maketitle              
\begin{abstract}
Two types of explanations have been receiving increased attention in the literature when analyzing the decisions made
by classifiers. The first type explains why a decision was made and 
is known as a sufficient reason for the decision, also an abductive 
explanation or a PI-explanation.
The second type explains why some other decision was not made and is known
as a necessary reason for the decision, also a contrastive or counterfactual explanation.
These explanations were defined for classifiers with binary, discrete
and, in some cases, continuous features. We show that these
explanations can be significantly improved in the presence of non-binary features, leading to a new class of explanations that
relay more information about decisions and the underlying classifiers.
Necessary and sufficient reasons were also shown to be the prime implicates
and implicants of the complete reason for a decision, which can be obtained
using a quantification operator. We show that our improved notions of necessary and sufficient reasons are also prime implicates and implicants but
for an improved notion of complete reason obtained by a new quantification operator that we also define and study.

\keywords{Explainable AI  \and  Decision Graphs \and Prime Implicants/Implicates.}
\end{abstract}

\thispagestyle{firstpage}

\section{Introduction}

Explaining the decisions of classifiers
has been receiving significant attention in the AI literature recently. 
Some explanation methods operate
directly on classifiers, e.g.,~\cite{ANCHOR,LIME}, while some other
methods operate on symbolic encodings of their 
input-output behavior, e.g.,~\cite{DBLP:conf/cikm/BoumazouzaAMT21,IgnatievNM19a,DBLP:conf/aaai/0001I22,NarodytskaKRSW18}, 
which may be compiled into tractable circuits~\cite{uai/ChanD03,ijcai/ShihCD18,aaai/ShihCD19,kr/ShiSDC20,kr/AudemardKM20,corr/abs-2107-01654}. 
When explaining the decisions of classifiers, two particular notions have been receiving
increased attention in the literature: The \text{sufficient} and \text{necessary} reasons for a decision on an instance.

A \textit{sufficient reason} for a decision~\cite{ecai/DarwicheH20} 
is a minimal subset of the instance which is guaranteed to trigger the decision. It was first introduced under the name \textit{PI-explanation} in~\cite{ijcai/ShihCD18} and later called an \textit{abductive explanation}~\cite{IgnatievNM19a}.\footnote{We will use sufficient reasons and
PI/abductive explanations interchangeably.}
Consider the classifier in Figure~\ref{fig:disease-dg} and a patient, Susan, with
the following characteristics:
${\Age}\GE{55}, \eql{\BType}{\tA}$ and $\eql{\Weight}{\OWeight}$.
Susan is judged as susceptible to disease by this classifier, and a sufficient reason for this decision is $\{{\Age}\GE{55}, \eql{\BType}{\tA}\}$. Hence, the classifier will judge Susan as susceptible to disease as long as she has these two characteristics, regardless of how the feature
$\Weight$ is set.\footnote{See, 
e.g.,~\cite{ijar/ChoiXD12,ANCHOR,ijcai/WangKB21} for some approaches that can be viewed as approximating sufficient reasons
and~\cite{JoaoApp} for a study of the quality of some of these approximations.}

A \textit{necessary reason} for a decision~\cite{DBLP:conf/aaai/DarwicheJ22} 
is a minimal subset of the instance that will flip the
decision if changed appropriately.
It was formalized earlier in~\cite{aiia/IgnatievNA020} under the
name \textit{contrastive explanation} which is discussed initially in~\cite{lipton_1990,DBLP:journals/ai/Miller19}.\footnote{We will use necessary reasons and contrastive explanations interchangeably in this paper. Counterfactual explanations are related but have alternate definitions in the literature. For example, as defined in~\cite{kr/AudemardKM20}, they correspond to length-minimal necessary reasons; see~\cite{DBLP:conf/aaai/DarwicheJ22}. But according to some other definitions, they include contrastive explanations (necessary reasons) as a special case; see Section~5.2 in~\cite{DBLP:journals/logcom/LiuL23}. See also~\cite{DBLP:conf/ijcai/AlbiniRBT20} for counterfactual explanations that are directed towards Bayesian network classifiers and \cite{ijar/Amgoud23} for a relevant recent study and survey.}
Consider again the patient Susan and the classifier in Figure~\ref{fig:disease-dg}.
A necessary reason for the decision on Susan is 
$\{{\Age}\GE{55}\}$, which means that she would not be judged as susceptible to disease if she were younger than $55$. The other necessary reason is $\{\eql{\Weight}{\OWeight}, \eql{\BType}{\tA}\}$ so the decision on Susan can be flipped by changing these two characteristics (and this cannot be achieved by changing only one of them). Indeed, if Susan had $\eql{\Weight}{\Nom}$ and $\eql{\BType}{\tAB}$, she will not be judged as susceptible. However, since $\Weight$ and $\BType$ are discrete variables, there are multiple ways for changing them and some changes may not flip the decision (e.g., $\eql{\Weight}{\UWeight}$ and $\eql{\BType}{\tB}$).

\begin{figure}[tb]
        \centering
        \begin{minipage}[t]{0.45\textwidth}
        \centering
        \scalebox{0.6}{
        \begin{tikzpicture}[
        roundnode/.style={text width = 0.55cm, circle ,draw=black, thick, text badly centered},
        squarednode/.style={rectangle, draw=black, thick, text badly centered},
        ]
        \node[squarednode]      (Age)                              {\large \fAge};
        \node[squarednode]        (Weight)       [below=of Age, xshift = -0.9cm, yshift = 0.4cm] {\large \fWeight};
        \node[roundnode]        (No1)       [below=of Age, xshift = 0.9cm, yshift = 0.5cm] {\fno};
        \node[roundnode]        (Yes1)       [below=of Weight, xshift = -2.3cm, yshift = 0.2cm] {\fyes};
        \node[squarednode]        (BloodType1)       [below=of Weight, yshift = 0cm] {\large \fBType};
        \node[squarednode]        (BloodType2)       [below=of Weight, xshift = 3cm, yshift = 0cm] {\large \fBType};
        \node[roundnode] (Yes2) [below=of BloodType1, xshift = -1.3cm, yshift = 0.3cm] {\fyes};
        \node[roundnode] (No2) [below=of BloodType1, xshift = 0cm, yshift = 0.3cm] {\fno};
        \node[roundnode] (Yes3) [below=of BloodType2, xshift = -1cm, yshift = 0.3cm] {\fyes};
        \node[roundnode] (No3) [below=of BloodType2, xshift = 0.5cm, yshift = 0.3cm] {\fno};
        
        \draw[-latex, thick] (Age.240) -- node [anchor = center, xshift = -5mm, yshift = 1mm] {\large $\geq 55$} (Weight.north);
        \draw[-latex, thick] (Age.300) -- node [anchor = center, xshift = 5mm, yshift = 1mm] {\large $< 55$} (No1.north);
        \draw[-latex, thick] (Weight.240) --  node [anchor = center, xshift = -12mm] {\large \fOWeight} (Yes1.north);
        \draw[-latex, thick] (Weight.270) --  node [anchor = center, xshift = -0mm] {\large \fUWeight} (BloodType1.north);
        \draw[-latex, thick] (Weight.300) --  node [anchor = center, xshift = 10mm] {\large \fNom} (BloodType2.north);
        \draw[-latex, thick] (BloodType1.240) --  node [anchor = center, xshift = -9mm] {\large \ftA, \ftB, \ftAB} (Yes2.north);
        \draw[-latex, thick] (BloodType1.270) --  node [anchor = center, xshift = 3mm] {\large \ftO} (No2.north);
        \draw[-latex, thick] (BloodType2.240) --  node [anchor = center, xshift = -6mm] {\large \ftA, \ftB} (Yes3.north);
        \draw[-latex, thick] (BloodType2.270) --  node [anchor = center, xshift = 6mm] {\large \ftAB, \ftO} (No3.north);
        \end{tikzpicture}
        }
        \subcaption{\label{fig:disease-dg}}
        \end{minipage}%
        \quad
        \begin{minipage}[t]{0.45\textwidth}
        \centering
        \scalebox{0.6}{
        \begin{tikzpicture}[
        roundnode/.style={text width = 0.55cm, circle ,draw=black, thick, text badly centered},
        squarednode/.style={rectangle, draw=black, thick, text badly centered},
        ]
        \node[squarednode]      (Age)                              {\large \fAge};
        \node[squarednode]        (Weight)       [below=of Age, xshift = -1.85cm, yshift = 0.4cm] {\large \fWeight};
        \node[squarednode]        (BloodType3)       [below=of Age, xshift = 1.85cm, yshift = 0.4cm] {\large \fBType};
        \node[roundnode] (Yes3) [below=of BloodType3, xshift = -0.3cm, yshift = 0.1cm] {\fyes};
        \node[roundnode] (No3) [below=of BloodType3, xshift = 0.8cm, yshift = 0.1cm] {\fno};
        \draw[-latex, thick] (BloodType3.240) --  node [anchor = center, xshift = -2mm] {\large \ftB} (Yes3.north);
        \draw[-latex, thick] (BloodType3.270) --  node [anchor = center, xshift = 8mm] {\large \ftA, \ftAB, \ftO} (No3.north);
        
        \node[roundnode]        (Yes1)       [below=of Weight, xshift = -2.25cm, yshift = 0.1cm] {\fyes};
        \node[squarednode]        (BloodType1)       [below=of Weight] {\large \fBType};
        \node[roundnode]        (BloodType2)       [below=of Weight, xshift = 2.25cm, yshift = 0.1cm] {\fyes};
        \node[roundnode] (Yes2) [below=of BloodType1, xshift = -1.5cm, yshift = 0.3cm] {\fyes};
        \node[roundnode] (No2) [below=of BloodType1, xshift = 0cm, yshift = 0.3cm] {\fno};
        
        \draw[-latex, thick] (Age.240) -- node [anchor = center, xshift = -8mm, yshift = 1mm] {\large $\geq 55$} (Weight.north);
        \draw[-latex, thick] (Age.300) -- node [anchor = center, xshift = 8mm, yshift = 1mm] {\large $< 55$} (BloodType3.north);
        \draw[-latex, thick] (Weight.240) --  node [anchor = center, xshift = -12mm] {\large \fOWeight} (Yes1.north);
        \draw[-latex, thick] (Weight.270) --  node [anchor = center, xshift = -0mm] {\large \fUWeight} (BloodType1.north);
        \draw[-latex, thick] (Weight.300) --  node [anchor = center, xshift = 10mm] {\large \fNom} (BloodType2.north);
        \draw[-latex, thick] (BloodType1.240) --  node [anchor = center, xshift = -6mm] {\large \ftA, \ftO} (Yes2.north);
        \draw[-latex, thick] (BloodType1.270) --  node [anchor = center, xshift = 5mm] {\large \ftB, \ftAB} (No2.north);
        \end{tikzpicture}
        }
        \subcaption{\label{fig:alternative-dg}}
        \end{minipage}
        \caption{Two classifiers of patients susceptible to a certain disease. The classifier in (b) will be discussed later in the paper.}
\end{figure}
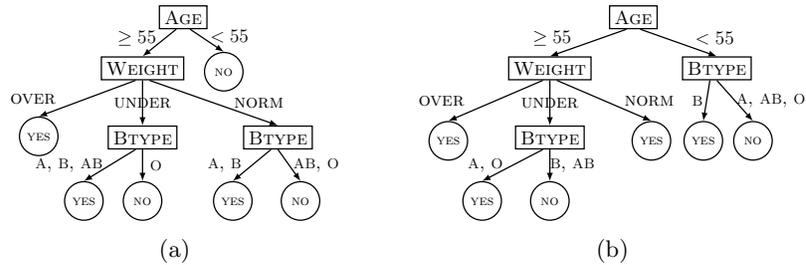

The notion of a \textit{complete reason} behind a decision was introduced in~\cite{ecai/DarwicheH20} and its prime implicants were shown to be the sufficient reasons for the decision. Intuitively, the complete reason is a particular condition on the instance that is both necessary and sufficient for the decision on that instance; see~\cite{lics/Darwiche23}. A declarative semantics for complete reasons was given in~\cite{jair/DarwicheM21} which showed how to compute them 
using \textit{universal literal quantification}.
Furthermore, the prime implicates of a complete reason where shown to be the necessary reasons for the decision in~\cite{DBLP:conf/aaai/DarwicheJ22}.
Given these results, one would first use universal literal quantification to obtain the complete reason for a decision and then compute its prime implicates and implicants to obtain necessary and sufficient explanations. 

Necessary and sufficient reasons 
are \textit{subsets} of the instance being
explained so each reason corresponds to a set of
variable settings (Feature=Value), like $\eql{\Weight}{\UWeight}$ and $\eql{\BType}{\tB}$, which we shall call
\textit{simple literals.}
Since necessary and sufficient reasons correspond to
sets of simple literals, we will refer to them as \textit{simple}
or \textit{classical} explanations. We will show next
that these simple explanations
can be significantly improved 
if the classifier has non-binary features, leading
to more general notions of necessary, 
sufficient and complete reasons that 
provide more informative explanations of decisions.

Consider again the decision on Susan discussed above which
had the sufficient reason $\{\Age\GE 55, \eql{\BType}{\tA}\}$. 
Such an explanation can be viewed as a \textit{property} of the instance
which guarantees the decision. The property has a specific form: a conjunction
of feature settings (i.e., instance characteristics) which leaves out characteristics
of the instance that are irrelevant to the decision ($\eql{\Weight}{\OWeight}$). However, the following is a weaker property of the instance which will also trigger the decision:
$\{\Age\GE 55,\BType\IN\{\tA,\tB\}\}$. This property
tells us that not only is $\eql{\Weight}{\OWeight}$ irrelevant to the decision, but
also that $\eql{\BType}{\tA}$ is not particularly  relevant since $\BType$ could have been $\tB$
and the decision would have still been triggered.
In other words, what is really relevant is that $\BType \IN \{\tA, \tB\}$ or, alternatively,
$\BType \NIN \{\tAB, \tO\}$. 
Clearly, this kind of explanation reveals more information about
why the classifier made its decision.
We will later formalize and study a new class of explanations for this purpose, called \textit{general
sufficient reasons,}  which arise only when the classifier has
non-binary features.

A necessary reason for a decision can also be 
understood as a property of the instance, but one that will flip the decision if violated in a \textit{certain} manner~\cite{DBLP:conf/aaai/DarwicheJ22}. 
As mentioned earlier, $\{\eql{\Weight}{\OWeight}, \eql{\BType}{\tA}\}$ is a necessary reason for the decision on Susan. This reason corresponds to the property $(\eql{\Weight}{\OWeight}$ or $\eql{\BType}{\tA})$.
We can flip the decision by violating this property through changing
the values of $\Weight$ and $\BType$ in the instance. 
Since these variables are non-binary, there are multiple changes (six total) that will violate 
the property. Some violations will flip the decision, others will not (we are only guaranteed that at least one violation will flip the decision). 
For example, $\eql{\Weight}{\Nom}, \eql{\BType}{\tO}$ and $\eql{\Weight}{\UWeight}, \eql{\BType}{\tAB}$
will both violate the property but only the first one will flip the decision. 
However, 
the following weaker property is guaranteed to flip the decision regardless of how it is violated: 
$(\eql{\Weight}{\OWeight}$ or $\BType \IN \{\tA, \tB, \tAB\})$. 
We can violate this property using two different settings of $\Weight$ and $\BType$, both of
which will flip the decision. This property corresponds to the
{\em general necessary reason}
$\{\eql{\Weight}{\OWeight}, \BType \IN \{\tA, \tB, \tAB\}\}$, a new notion that
we introduce and study later. 
Similar to general sufficient reasons,
general necessary reasons provide more information about the
behavior of a classifier and arise only when the classifier has non-binary features.

We stress here that using simple explanations in the presence 
of non-binary features is quite prevalent in the 
literature;
see, e.g.,~\cite{DBLP:journals/dke/AudemardBBKLM22,DBLP:journals/corr/abs-2209-07740,DBLP:conf/cikm/BoumazouzaAMT21,DBLP:conf/aaai/DarwicheJ22,DBLP:conf/aaai/IgnatievIS022,DBLP:conf/ijcai/Izza021,DBLP:conf/icml/0001GCIN21}.
Two notable exceptions are \cite{corr/abs-2007-01493,DBLP:journals/jair/IzzaIM22} which we discuss in more detail later.\footnote{Interestingly, the axiomatic study of explanations in~\cite{DBLP:conf/ijcai/AmgoudB22} allows non-binary features, yet Axiom 4 (\textit{feasibility}) implies that explanations must be simple.}

Our study of general necessary and sufficient reasons  
follows a similar structure to recent developments on classical necessary
and sufficient reasons. In particular, we define a new quantification operator
like the one defined in~\cite{jair/DarwicheM21} and show how it can 
be used to compute the \textit{general reason} of a decision, and that its
prime implicates and implicants contain the general necessary and sufficient reasons. Complete reasons are known to be monotone formulas.
We show that 
general reasons are \textit{fixated formulas} which include
monotone ones. We introduce the fixation property and discuss some of its (computational) implications.

This paper is structured as follows.
We start in Section~\ref{sec:discrete} by discussing the syntax and
semantics of formulas with discrete variables which are needed to capture the input-output
behavior of classifiers with non-binary features.
We then introduce the new quantification operator in 
Section~\ref{sec:forall} where we study its properties and show how it
can be used to formulate the new notion of general reason.
The study of general necessary and sufficient reasons is conducted in
Section~\ref{sec:NS reasons} where we also relate them to their classical 
counterparts and argue further for their utility.
Section~\ref{sec:C reason} provides closed-form general reasons for
a broad class of classifiers and Section~\ref{sec:PI-IP} discusses the
computation of general necessary and sufficient reasons based on
general reasons. 
We finally close with some remarks in Section~\ref{sec:conclusion}.
Proofs of all results can be found in Appendix~\ref{sec:proofs}.
 
\section{Representing Classifiers using Class Formulas}
\label{sec:discrete}

We now discuss the syntax and semantics of 
\textit{discrete formulas,} which we use to represent the input-output behavior of classifiers. Such symbolic formulas can be automatically compiled from certain classifiers, like Bayesian networks, random forests and some types of neural networks; see~\cite{lics/Darwiche23} for a summary. 

We assume a finite set of variables \(\Sigma\) which represent 
classifier features. Each variable \(X \in \Sigma\) has a finite number of 
\textit{states} \(x_1, \ldots, x_n,\) \(n > 1\). 
A \textit{literal} \(\l\) for variable \(X\), called \(X\)-literal,
is a set of states 
such that \(\emptyset \subset \l \subset \{x_1, \ldots, x_n\}\). 
We will often denote a literal such as \(\{x_1,x_3,x_4\}\) 
by \(x_{134}\) which reads: the state of variable \(X\) 
is either \(x_1\) or \(x_3\) or \(x_4\).
A literal is \textit{simple} iff it contains
a single state. Hence, \(x_3\) is a simple literal but
\(x_{134}\) is not.
Since a simple literal corresponds to a state, 
these two notions are interchangeable. 

A \textit{formula} is either a constant \(\top\), \(\bot\),
literal \(\lit\), negation \(\NOT \alpha\), conjunction
\(\alpha \AND \beta\) or disjunction \(\alpha \OR \beta\) 
where \(\alpha\), \(\beta\) are formulas. 
The set of variables appearing in a formula $\Delta$
are denoted by $\SetV(\Delta)$.
A \textit{term} is a conjunction of literals for distinct variables. 
A \textit{clause} is a disjunction of literals for distinct variables.
A \textit{DNF} is a disjunction of terms.
A \textit{CNF} is a conjunction of clauses.
An \textit{NNF} is a formula without negations.  
These definitions imply that terms cannot be inconsistent,
clauses cannot be valid, and negations are not allowed 
in DNFs, CNFs, or NNFs.
Finally, we say a term/clause is \textit{simple} iff 
it contains only simple literals.


A \textit{world} maps each variable in \(\Sigma\) to
one of its states and is typically denoted by \(\w\).
A world \(\w\) is called a \textit{model} of formula \(\alpha\),
written \(\w \models \alpha\), iff \(\alpha\) is satisfied 
by \(\w\) (that is, \(\alpha\) is true at \(\w\)). 
The constant \(\top\) denotes a valid formula (satisfied by every world) 
and the constant \(\bot\) denotes an unsatisfiable formula (has no models). 
Formula \(\alpha\) implies formula \(\beta\), 
written \(\alpha \models \beta\), iff
every model of \(\alpha\) is also a model of \(\beta\).
A term $\tau_1$ subsumes another term $\tau_2$ iff $\tau_2 \models \tau_1$.
A clause $\sigma_1$ subsumes another clause $\sigma_2$ iff $\sigma_1 \models \sigma_2$.
Formula $\alpha$ is weaker than formula $\beta$ iff $\beta \models \alpha$ (hence $\beta$ is stronger than $\alpha$).

The \textit{conditioning} of formula \(\Delta\) on simple term \(\tau\) is denoted \(\Delta | \tau\) and obtained as follows. 
For each state \(x\) of variable \(X\) that appears in term \(\tau\),  
replace each \(X\)-literal \(\l\) in \(\Delta\) with \(\top\) 
if \(x \in \l\) and with \(\bot\) otherwise. Note that \(\Delta | \tau\) does not mention any variable that appears in term $\tau$.
A \textit{prime implicant} for a formula $\Delta$ is a term $\alpha$ such 
that $\alpha \models \Delta$, and there does not exist a distinct 
term $\beta$ such that
\(
\alpha \models \beta \models \Delta. 
\)
A \textit{prime implicate} for a formula $\Delta$ is a clause $\alpha$ 
such that $\Delta \models \alpha$, and there does not exist a distinct 
clause $\beta$ such that
\(
\Delta \models \beta \models \alpha.
\)

An \textit{instance} of a classifier will be represented by
a simple term which contains exactly one literal for each 
variable in \(\Sigma\).
A classifier with \(n\) classes will be represented by a set of mutually 
exclusive and exhaustive formulas \(\Delta^1, \ldots, \Delta^n\), where the models of formula \(\Delta^i\) capture the instances in the \(i^{th}\) class. 
That is, instance \(\instance\) is in the \(i^{th}\) class iff 
\(\instance \models \Delta^i\).
We refer to each \(\Delta^i\) as a \textit{class formula,} or simply a \textit{class,} and 
say that instance \(\instance\) is in class \(\Delta^i\) when \(\instance \models \Delta^i\). 
\begin{wrapfigure}{r}{0.25\textwidth}
    \vspace{-\baselineskip}
        \centering
        \scalebox{0.65}{
        \begin{tikzpicture}[
        roundnode/.style={circle ,draw=black, thick},
        squarednode/.style={rectangle, draw=black, thick},
        ]
        \node[squarednode]     (X)                              {\Large X};
        \node[roundnode]     (c1)       [below=of X, xshift = -0.8cm, yshift = 0.5cm] {\Large $c_1$};
        \node[squarednode]     (Y)       [below=of X, xshift = 0.8cm, yshift = 0.5cm] {\Large $Y$};
        \node[squarednode]       (Z1)   [below=of Y, xshift = -0.8cm, yshift = 0.5cm] {\Large $Z$};
        \node[squarednode]       (Z2)   [below=of Y, xshift = 0.8cm, yshift = 0.5cm] {\Large $Z$};
        \node[roundnode]       (c2)   [below=of Z1, xshift = -0.8cm, yshift = 0.5cm] {\Large $c_1$};
        \node[roundnode]       (c3)   [below=of Z1, xshift = 0.8cm, yshift = 0.5cm] {\Large $c_2$};
        \node[roundnode]       (c4)   [below=of Z2, xshift = 0.8cm, yshift = 0.5cm] {\Large $c_3$};
        
        \draw[-latex, thick] (X.240) -- node [anchor = center, xshift = -4mm, yshift = 1mm] {$x_1x_2$} (c1.north);
        \draw[-latex, thick] (X.300) -- node [anchor = center, xshift = 4mm, yshift = 1mm] {$x_3$} (Y.north);
        \draw[-latex, thick] (Y.240) -- node [anchor = center, xshift = -4mm, yshift = 1mm] {$y_1$} (Z1.north);
        \draw[-latex, thick] (Y.300) -- node [anchor = center, xshift = 4mm, yshift = 1mm] {$y_2y_3$} (Z2.north);
        \draw[-latex, thick] (Z1.240) -- node [anchor = center, xshift = -4mm, yshift = 1mm] {$z_1z_3$} (c2.north);
        \draw[-latex, thick] (Z1.300) -- node [anchor = center, xshift = -4mm] {$z_2$} (c3.north);
        \draw[-latex, thick] (Z2.240) -- node [anchor = center, xshift = 4mm] {$z_2$} (c3.north);
        \draw[-latex, thick] (Z2.300) -- node [anchor = center, xshift = 4mm, yshift = 1mm] {$z_1z_3$} (c4.north);
        \end{tikzpicture}
        }
    \vspace{-\baselineskip}
\end{wrapfigure}
Consider the decision diagram on the right
which represents a classifier with three ternary features (\(X, Y, Z\))
and three classes $c_1$, $c_2$, and $c_3$.
This classifier can be represented by the class formulas
$\Delta^1 = x_{12}\OR x_3 \AND y_1 \AND z_{13}$, 
$\Delta^2 = x_3 \AND z_2$ and 
$\Delta^3 = x_3 \AND y_{23} \AND z_{13}$. 
This classifier has \(27\) instances,
partitioned as follows:
\(20\) instances in class \(c_1\), \(3\) in class \(c_2\)
and \(4\) in class \(c_3\). For example,
instance $\instance = x_3 \AND y_2 \AND z_2$ belongs
to class \(c_2\) since $\instance \models \Delta^2$.

\section{The General Reason for a Decision}
\label{sec:forall}
  
An operator \(\forall x\) which eliminates the state \(x\) of 
a Boolean variable $X$ from a formula was
introduced and studied in~\cite{jair/DarwicheM21}. This operator, called 
universal literal quantification, was also generalized in~\cite{jair/DarwicheM21} to the states
of discrete variables but without further study. Later,
\cite{DBLP:conf/aaai/DarwicheJ22} studied this discrete
generalization, given next.

\begin{definition}\label{def:forall1}
For variable \(X\) with states \(x_1, \ldots, x_n\),
the universal literal quantification of state $x_i$ from formula $\Delta$ is defined as $\forall x_i \cdot \Delta = \Delta | x_i \AND \BAND_{j \not = i} (x_i \OR \Delta | x_j)$.
\end{definition}

The operator $\forall$ is commutative so we can equivalently 
write \(\forall x \cdot (\forall y \cdot \Delta)\), 
\(\forall y \cdot (\forall x \cdot \Delta)\),
\(\forall x,y \cdot \Delta\) or \(\forall \{x,y\} \cdot \Delta\).
It is meaningful then to quantify an instance \(\instance\) from its
class formula \(\Delta\) 
since \(\instance\) is a set of states.
As shown in~\cite{jair/DarwicheM21},
the quantified formula \(\forall \instance \cdot \Delta\) corresponds to the complete reason for the
decision on instance \(\instance\).
Hence, the prime implicants of \(\forall \instance \cdot \Delta\)
are the sufficient reasons for the decision~\cite{ecai/DarwicheH20}
and its prime implicates are the necessary reasons~\cite{DBLP:conf/aaai/DarwicheJ22}.

We next define a new operator $\iuq$ that we
call a \textit{selection operator} for reasons that will become
apparent later. This operator will lead to the
notion of a general reason for a decision which subsumes
the decision's complete reason, and provides the basis
for defining general necessary and sufficient reasons.

\begin{definition}\label{def:forall2}
For variable \(X\) with states \(x_1, \ldots, x_n\) and formula \(\Delta\),
we define $\iuq x_i \cdot \Delta$ to be
$\Delta | x_i \AND \Delta$.
\end{definition}

The selection operator \(\iuq\) is also commutative, like \(\forall\). 
\begin{proposition}\label{prop:commutative}
$\iuq x \cdot (\iuq y \cdot \Delta) = \iuq y \cdot (\iuq x \cdot \Delta)$
for states \(x, y\).
\end{proposition}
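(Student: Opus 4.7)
The plan is to expand both sides using Definition~\ref{def:forall2} and then show that the resulting conjunctions agree, splitting into two cases depending on whether $x$ and $y$ are states of the same variable or of distinct variables.

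First I would write out $\iuq x \cdot (\iuq y \cdot \Delta)$. By definition, $\iuq y \cdot \Delta = \Delta | y \AND \Delta$. Applying the operator again and distributing conditioning over conjunction gives
\[
\iuq x \cdot (\iuq y \cdot \Delta) \;=\; (\Delta|y)|x \;\AND\; \Delta|x \;\AND\; \Delta|y \;\AND\; \Delta,
\]
and symmetrically for the right-hand side. So it suffices to show $(\Delta|y)|x = (\Delta|x)|y$ in both cases, since the remaining conjuncts are already symmetric in $x$ and $y$.

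In Case 1 ($x$ and $y$ are states of distinct variables $X$ and $Y$), this is a standard property of conditioning: conditioning replaces $X$-literals by $\top$ or $\bot$ independently of $Y$-literals, so the order of conditioning is immaterial. I would note this as the analogue, in the discrete setting, of the well-known commutativity of Boolean conditioning, and either cite it or give a one-line argument by induction on the structure of $\Delta$. In Case 2 ($x$ and $y$ are states of the same variable $X$), the key observation is that $\Delta|y$ does not mention $X$ at all (as noted immediately after the definition of conditioning in Section~\ref{sec:discrete}), so $(\Delta|y)|x = \Delta|y$. By the same token $(\Delta|x)|y = \Delta|x$. Thus after expansion both sides reduce to $\Delta|x \AND \Delta|y \AND \Delta$ (using idempotency of $\AND$), and the claim follows.

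The only mild obstacle is making sure the two cases are handled uniformly and that the vanishing of $X$ from $\Delta|y$ is invoked explicitly; once that is observed, the proof is a short calculation. I would also remark that the ``extra'' conjunct $\Delta$ in the definition of $\iuq$ is precisely what makes the second case collapse cleanly, unlike for $\forall$ where the combinatorics of multiple states plays a more substantive role.
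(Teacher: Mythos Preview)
Your proof is correct and follows essentially the same approach as the paper: expand both sides via Definition~\ref{def:forall2}, distribute conditioning over conjunction, and match up the resulting conjuncts. The paper does this in a single equational chain (writing $(\Delta|x)|y$ as $\Delta|x,y$ and swapping to $\Delta|y,x$) without an explicit case split, whereas you separate the distinct-variable and same-variable cases; the extra care in your Case~2 is warranted since $(\Delta|y)|x$ and $(\Delta|x)|y$ genuinely differ there, and it is idempotency of $\AND$ (absorbing the duplicated conjunct) that saves the argument---so your earlier sentence ``it suffices to show $(\Delta|y)|x = (\Delta|x)|y$ in both cases'' slightly overstates the plan, but your execution of Case~2 is exactly right.
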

Since a term \(\tau\) corresponds to a set of states, the expression
\(\iuq \tau \cdot \Delta\) is 
well-defined just like \(\forall \tau \cdot \Delta\).
We can now define our first major notion.

\begin{definition} \label{def:gcr}
Let \(\instance\) be an instance in class \(\Delta\).
The general reason for the decision on instance \(\instance\)
is defined as \(\iuq \instance \cdot \Delta\).
\end{definition}

The complete reason \(\forall \instance \cdot \Delta\) can be 
thought of as a property/abstraction of instance \(\instance\) that justifies 
(i.e., can trigger) the decision. In fact, it is equivalent to
the weakest NNF \(\Gamma\) whose literals appear in the instance
and that satisfies \(\instance \models \Gamma \models \Delta\)~\cite{jair/DarwicheM21,DBLP:conf/aaai/DarwicheJ22}. 
The next result shows that
the general reason is a weaker
property and, hence, a further abstraction that triggers 
the decision.

\begin{proposition}\label{prop:forall-relation}
For instance $\instance$ and formula $\Delta$ 
where $\instance \models \Delta$, we have 
$\instance \models \forall \instance \cdot \Delta \models \iuq \instance \cdot \Delta \models \Delta$. (\(\instance \not \models \Delta\)
only if $\forall \instance \cdot \Delta =\iuq \instance \cdot \Delta = \bot$)
\end{proposition}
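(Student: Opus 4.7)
My plan is to split the statement into the three chained implications $\instance \models \forall \instance \cdot \Delta$, $\forall \instance \cdot \Delta \models \iuq \instance \cdot \Delta$, and $\iuq \instance \cdot \Delta \models \Delta$, plus the degenerate case $\instance \not\models \Delta$. For each chained implication I first prove a single-state version, then lift to an arbitrary instance by induction on $|\instance|$, using commutativity (Proposition~\ref{prop:commutative} for $\iuq$ and the analogous known property of $\forall$) together with the monotonicity of both operators under $\models$.

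The single-state version of the inner containment $\iuq x \cdot \Delta \models \Delta$ is immediate from Definition~\ref{def:forall2}, and the single-state base case $\instance \models \forall x \cdot \Delta$ (for $x \in \instance$) holds because $\instance \models \Delta$ together with $\instance \models x$ yields $\instance \models \Delta | x$, while each disjunct $x \OR \Delta | x'$ is satisfied by the literal $x$. The core single-state claim is $\forall x_i \cdot \Delta \models \iuq x_i \cdot \Delta$. After matching the common conjunct $\Delta | x_i$, this reduces to $\forall x_i \cdot \Delta \models \Delta$, which I would prove by a semantic case split on a model $w$ of the left side: if $w$ sets $X = x_i$, the conjunct $\Delta | x_i$ already gives $w \models \Delta$; otherwise $w$ sets $X = x_j$ for some $j \neq i$, so the conjunct $x_i \OR \Delta | x_j$ forces $w \models \Delta | x_j$, and again $w \models \Delta$.

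To lift these single-state facts to a full instance, I would first record the monotonicity lemma that $\Phi \models \Psi$ implies both $\forall x \cdot \Phi \models \forall x \cdot \Psi$ and $\iuq x \cdot \Phi \models \iuq x \cdot \Psi$; both follow from the monotonicity of conditioning under $\models$ and inspection of the two definitions. Writing $\instance = \tau \AND x$, commutativity then gives $\forall \instance \cdot \Delta = \forall x \cdot (\forall \tau \cdot \Delta)$ and $\iuq \instance \cdot \Delta = \iuq x \cdot (\iuq \tau \cdot \Delta)$, so the induction hypothesis combined with monotonicity propagates each single-step implication through $|\instance|$ steps; an analogous induction yields $\instance \models \forall \instance \cdot \Delta$.

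For the degenerate case, unrolling Definition~\ref{def:forall2} shows that $\iuq \instance \cdot \Delta$ is equivalent to $\bigwedge_{S \subseteq \instance} \Delta | S$, which contains the conjunct $\Delta | \instance$; since $\instance$ is a complete instantiation, $\instance \not\models \Delta$ forces $\Delta | \instance = \bot$ and hence $\iuq \instance \cdot \Delta = \bot$. Iterating $\forall$ over $\instance$ likewise forces $\Delta | \instance$ as a conjunct (by taking the $\Delta | x_i$ branch at each step), so $\forall \instance \cdot \Delta = \bot$ as well. I expect the main obstacle to be the short semantic case split establishing $\forall x_i \cdot \Delta \models \Delta$; the remainder is bookkeeping driven by conditioning monotonicity and the commutativity identities.
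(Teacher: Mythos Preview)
Your argument is correct, but it follows a different route from the paper's proof. The paper treats $\instance \models \forall \instance \cdot \Delta$ as already known from~\cite{jair/DarwicheM21}, gets $\iuq \instance \cdot \Delta \models \Delta$ by observing (as you do) that unrolling Definition~\ref{def:forall2} always leaves $\Delta$ as a conjunct, and---this is the main divergence---obtains $\forall \instance \cdot \Delta \models \iuq \instance \cdot \Delta$ as an immediate corollary of the selection semantics (Proposition~\ref{prop:selection semantics}): the world-level condition characterizing $\forall \tau \cdot \Delta$ visibly implies the one characterizing $\iuq \tau \cdot \Delta$. You instead establish the single-state inclusion $\forall x_i \cdot \Delta \models \iuq x_i \cdot \Delta$ by a direct semantic case split on the value of $X$, then lift all three implications to full instances using monotonicity of $\forall x$ and $\iuq x$ under $\models$ and an induction along the states in $\instance$. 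Your approach is more self-contained (it does not invoke~\cite{jair/DarwicheM21} or the selection semantics) and isolates a reusable monotonicity lemma; the paper's proof is shorter but leans on the heavier Proposition~\ref{prop:selection semantics}. You also spell out the degenerate case $\instance \not\models \Delta$ explicitly via the conjunct $\Delta|\instance$, which the paper's proof does not address; note that once you have shown $\forall \instance \cdot \Delta \models \iuq \instance \cdot \Delta$ unconditionally (your argument for this does not use $\instance \models \Delta$), the separate treatment of $\forall$ in the degenerate case is redundant.
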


The next result provides further semantics for the general
reason and highlights the key difference with the 
complete reason.

\begin{proposition}\label{prop:gr semantics}
The general reason $\iuq \instance \cdot \Delta$ is
equivalent to the weakest NNF \(\Gamma\) whose literals are
implied by instance \(\instance\) and that satisfies 
\(\instance \models \Gamma \models \Delta\).
\end{proposition}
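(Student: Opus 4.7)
The plan is to match a semantic description of $\iuq \instance \cdot \Delta$ with the class of model sets describable by NNFs whose literals are implied by $\instance$. First, iterating Definition~\ref{def:forall2} (using commutativity, Proposition~\ref{prop:commutative}) gives $\iuq \instance \cdot \Delta \equiv \BAND_{S \subseteq \instance} \Delta | S$. Unpacking conditioning pointwise yields the key characterization: a world $\omega$ satisfies $\iuq \instance \cdot \Delta$ iff for every set of variables $V$, the world $\omega_V$ obtained from $\omega$ by overwriting its values on $V$ with those of $\instance$ is a model of $\Delta$. I will call a set of worlds $M$ \emph{$\instance$-closed} when $\omega \in M$ implies $\omega_V \in M$ for every $V$; then the models of $\iuq \instance \cdot \Delta$ form the maximal $\instance$-closed subset of the models of $\Delta$.

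The technical heart of the proof is the following lemma: an NNF $\Gamma$ is equivalent to some NNF all of whose literals are implied by $\instance$ if and only if its set of models is $\instance$-closed. The forward direction is a structural induction on $\Gamma$: if a literal $\ell$ on variable $X$ satisfies $\instance(X) \in \ell$ and $\omega \models \ell$, then $\omega_V(X) \in \{\omega(X), \instance(X)\} \subseteq \ell$, and closure lifts routinely through $\AND$ and $\OR$. The backward direction is by explicit construction: for any $\instance$-closed $M$, set
\[
\Gamma^\star \;\equiv\; \bigvee_{\omega \in M}\, \BAND_{X \in \SetV} \{\omega(X), \instance(X)\},
\]
where each conjunct is read as a literal for $X$ (possibly simple when the two states coincide). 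Each such literal contains $\instance(X)$ and is therefore implied by $\instance$. The disjunct indexed by $\omega$ is satisfied exactly by the worlds $\omega_V$ for $V \subseteq \SetV$; by $\instance$-closure these all lie in $M$, and $\omega$ itself satisfies its own disjunct, so the models of $\Gamma^\star$ coincide with $M$.

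To assemble: if $\Gamma$ is any NNF whose literals are implied by $\instance$ and satisfies $\instance \models \Gamma \models \Delta$, the lemma yields that its models form an $\instance$-closed subset of the models of $\Delta$, whence the characterization from paragraph~1 gives $\Gamma \models \iuq \instance \cdot \Delta$. Conversely, Proposition~\ref{prop:forall-relation} gives $\instance \models \iuq \instance \cdot \Delta \models \Delta$, and the same characterization shows the models of $\iuq \instance \cdot \Delta$ are $\instance$-closed; the lemma then produces an equivalent NNF with all literals implied by $\instance$. Hence $\iuq \instance \cdot \Delta$ is (equivalent to) the weakest such NNF. The main obstacle is the backward direction of the lemma: the natural candidate literal $\{\omega(X), \instance(X)\}$ degenerates to a simple literal when $\omega$ and $\instance$ agree on $X$, and verifying that the models of $\Gamma^\star$ are exactly $M$ requires $\instance$-closure of $M$ to absorb every upgrade $\omega_V$ — this is precisely the condition that distinguishes the present proposition from the analogous result for $\forall \instance \cdot \Delta$, where only the simple literals of $\instance$ are allowed and closure under arbitrary upgrades is unavailable.
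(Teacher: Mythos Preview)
Your proof is correct and takes a genuinely different route from the paper's. The paper argues syntactically in one direction and semantically in the other: it invokes Proposition~\ref{prop:alternative_gr} (the alternative NNF form of $\iuq x_i \cdot \Delta$) iteratively to exhibit $\iuq \instance \cdot \Delta$ as an NNF with $\instance$-implied literals, then uses the selection semantics of Proposition~\ref{prop:selection semantics} to show that any NNF $\Gamma$ with $\instance$-implied literals and $\Gamma \models \Delta$ must satisfy $\Gamma \models \iuq \instance \cdot \Delta$. You instead derive the selection semantics yourself from the expanded form $\BAND_{S \subseteq \instance} \Delta|S$, and replace the appeal to Proposition~\ref{prop:alternative_gr} by an explicit DNF construction $\Gamma^\star$ built from the $\instance$-closed model set. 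The payoff of your route is the standalone characterization lemma (model sets expressible by NNFs with $\instance$-implied literals are exactly the $\instance$-closed sets), which is cleaner than what the paper isolates; the paper's route is shorter because Propositions~\ref{prop:alternative_gr} and~\ref{prop:selection semantics} are proven anyway for other purposes.

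One small point to patch: in your construction of $\Gamma^\star$, the set $\{\omega(X),\instance(X)\}$ is the full state set of $X$ whenever $X$ is binary and $\omega(X)\neq\instance(X)$, and the paper's definition of literal excludes the full state set. This is harmless---just drop that conjunct (it is $\top$)---but you should say so, since you only flagged the degenerate case where the two states coincide.
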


The complete and general reasons are abstractions of
the instance that explain why it belongs to its class. The former
can only reference simple literals in the instance but
the latter can reference any literal that is implied by the instance.
The complete reason can be recovered from the general reason
and the underlying instance. Moreover, the two types of reasons are equivalent when all variables are binary since
\(\forall x \cdot \Delta = \iuq x \cdot \Delta\) when \(x\)
is the state of a binary variable. 

We next provide a number of results that further our understanding of general reasons, particularly their
semantics and how to compute them.
We start with the following alternative definition of the operator $\iuq x_i$.

\begin{proposition}\label{prop:alternative_gr}
For formula $\Delta$ and variable $X$ with states $x_1, \ldots, x_n$, 
$\iuq x_i \cdot \Delta$ is equivalent to $(\Delta | x_i) \AND \BAND_{j \not = i}( \ell_{j} \OR (\Delta | x_j))$,
where $\ell_{j}$ is the literal $\{x_1, \ldots, x_n\}\setminus \{x_j\}$.
\end{proposition}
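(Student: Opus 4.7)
The plan is to reduce the claim to a standard Shannon-style expansion of $\Delta$ along the variable $X$, and then to absorb one redundant conjunct using the factor $\Delta | x_i$ sitting outside. By Definition~\ref{def:forall2}, $\iuq x_i \cdot \Delta = \Delta | x_i \AND \Delta$, so the task reduces to rewriting $\Delta$ in a form that matches the target right-hand side modulo absorption.

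First, I would prove the auxiliary identity
\[
\Delta \;\equiv\; \BAND_{k=1}^{n}\bigl(\ell_k \OR \Delta | x_k\bigr),
\]
where $\ell_k = \{x_1,\ldots,x_n\}\setminus\{x_k\}$ is the $X$-literal ``$X \neq x_k$''. The verification is a one-world check: fix a world $\w$ in which $X$ takes state $x_m$. Since conditioning $\Delta$ on $x_m$ replaces every $X$-literal $\l$ by $\top$ if $x_m \in \l$ and by $\bot$ otherwise, the value of $\Delta$ at $\w$ coincides with the value of $\Delta | x_m$ at $\w$ (the latter is $X$-free). On the right-hand side, every factor with $k \neq m$ is satisfied because $x_m \in \ell_k$, while the factor with $k = m$ satisfies $\w \not\models \ell_m$ and therefore reduces to $\Delta | x_m$. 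Hence both sides agree at $\w$.

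Substituting this expansion into $\iuq x_i \cdot \Delta = \Delta | x_i \AND \Delta$ gives
\[
\iuq x_i \cdot \Delta \;\equiv\; (\Delta | x_i) \AND \bigl(\ell_i \OR \Delta | x_i\bigr) \AND \BAND_{j \neq i} \bigl(\ell_j \OR \Delta | x_j\bigr).
\]
The middle conjunct is absorbed by $\Delta | x_i$ because $\Delta | x_i \models \ell_i \OR \Delta | x_i$, yielding exactly the claimed expression $(\Delta | x_i) \AND \BAND_{j \neq i}(\ell_j \OR \Delta | x_j)$.

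I do not anticipate a real obstacle here: the only subtle point is justifying the multi-valued Shannon expansion, which is routine once one is careful that $\Delta | x_k$ mentions no $X$-literals and that the literals $\ell_k$ jointly partition the states of $X$ in the sense that exactly one of them is falsified in any given world. Everything else is propositional absorption.
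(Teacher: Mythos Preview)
Your proof is correct. It differs from the paper's argument in organization rather than in underlying facts: the paper proves the equivalence by a direct two-sided world check, using the observations that $\w \models x_k \AND \Delta$ implies $\w \models \Delta | x_k$ and conversely $\w \models x_k \AND (\Delta | x_k)$ implies $\w \models \Delta$, and then splitting on whether $\w \models x_i$ or $\w \models x_k$ for some $k \neq i$. You instead isolate the multi-valued Shannon expansion $\Delta \equiv \BAND_{k}(\ell_k \OR \Delta | x_k)$ as a standalone identity and then finish with a one-line absorption. Both routes rest on the same semantic fact (that $\Delta$ and $\Delta | x_m$ agree at any world where $X = x_m$), but your packaging is more modular: the Shannon identity is reusable, and the remaining step is purely algebraic rather than another case split. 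The paper's version is slightly more self-contained in that it never names the expansion explicitly, at the cost of repeating essentially the same case analysis inside each direction of the biconditional.
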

According to this definition, we can always express $\iuq x_i \cdot \Delta$ 
as an NNF in which every $X$-literal includes state $x_i$ (recall that
$\Delta | x_i$ and $\Delta | x_j$ do not mention variable $X$). This
property is used in the proofs and has a number of implications.\footnote{For example, we can use it to
provide \textit{forgetting} semantics for the dual operator
$\ieq x_i \cdot \Delta = \NOT{\iuq x_i \cdot \NOT{\Delta}}$.
Using Definition~\ref{def:forall2}, we get 
$\ieq x_i \cdot \Delta = \Delta \OR \Delta | x_i$.
Using Proposition~\ref{prop:alternative_gr}, we get
$\ieq x_i \cdot \Delta =
\Delta | x_i \OR \BOR_{j \not = i} (x_j \AND \Delta | x_j)$.
We can now easily show that 
(1)~$\Delta \models \ieq x_i \cdot \Delta$ and 
(2)~$\ieq x_i \cdot \Delta$ is equivalent to an NNF 
whose $X$-literals do not mention state $x_i$. 
That is, $\ieq x_i$ can be understood as forgetting the information
about state $x_i$ from $\Delta$.
This is similar to the dual operator $\exists x_i \cdot \Delta
= \NOT{\forall x_i \cdot \NOT{\Delta}}$ studied in~\cite{jair/LangLM03,jair/DarwicheM21} except that $\ieq x_i$ erases less information from $\Delta$ since one can show that
$\Delta \models \ieq x_i \cdot \Delta \models \exists x_i \cdot \Delta$.
}

When \(\Delta\) is a class formula, 
\cite{jair/DarwicheM21} showed that the application of \(\forall x\)
to \(\Delta\) can be 
understood as \textit{selecting} a specific set of instances from the corresponding
class. This was shown for states \(x\) of Boolean variables. We next
generalize this to discrete variables and
provide a selection semantics for the new operator \(\iuq\).

\begin{proposition}
\label{prop:selection semantics}
Let \(\tau\) be a simple term, $\Delta$ be a formula
and \(\w\) be a world.
Then
$\w \models \forall \tau \cdot \Delta$ 
iff $\w \models \Delta$ and
$\w' \models \Delta$ for any world \(\w'\) obtained from \(\w\)
by changing the states of some variables that are set differently in \(\tau\). 
Moreover, $\w \models \iuq \tau \cdot \Delta$ iff 
$\w \models \Delta$ and \(\w' \models \Delta\) for any world
\(\w'\) obtained from \(\w\) by setting some variables in \(\w\) to 
their states in \(\tau\). 
\end{proposition}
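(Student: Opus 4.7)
The plan is to prove both claims by establishing a single-variable base case and then lifting to arbitrary simple terms via commutativity (Proposition~\ref{prop:commutative} for $\iuq$, and the analogous property for $\forall$). The key tool throughout is the standard conditioning identity: for any formula $\alpha$, variable $X$, and state $x$ of $X$, a world $\w$ satisfies $\alpha \mid x$ iff the world $\w'$ obtained from $\w$ by resetting $X$ to $x$ satisfies $\alpha$. This is immediate from the syntactic definition of conditioning, which replaces each $X$-literal $\ell$ by $\top$ or $\bot$ according to whether $x \in \ell$, precisely matching evaluation at $\w'$.

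For the $\iuq$ base case with $\tau = \{x\}$, Definition~\ref{def:forall2} gives $\iuq x \cdot \Delta = \Delta \mid x \AND \Delta$, so via the identity $\w \models \iuq x \cdot \Delta$ iff $\w \models \Delta$ and the $X$-reset of $\w$ to $x$ also satisfies $\Delta$. These are exactly the two worlds arising on the right-hand side: the empty choice of ``variables set to their $\tau$-states'' yields $\w$ itself, and the choice $\{X\}$ yields the $X$-reset. For the $\forall$ base case $\tau = \{x_i\}$, I split on whether $\w \models x_i$. If yes, then no variable is set differently in $\tau$, the right-hand side collapses to $\w \models \Delta$, and on the formula side each disjunct $x_i \OR \Delta\mid x_j$ is trivially true while $\Delta\mid x_i$ at $\w$ equals $\w \models \Delta$. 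If $\w \not\models x_i$, then $X$ is the unique differently-set variable, so the right-hand side demands $\Delta$ at every world obtained by changing $X$ to any state; on the formula side $\Delta\mid x_i$ forces $\Delta$ at the $x_i$-reset, and each $x_i \OR \Delta\mid x_j$ (which collapses to $\Delta\mid x_j$ since $\w \not\models x_i$) forces $\Delta$ at the $x_j$-reset, together covering all $X$-resets including $\w$ itself.

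For the inductive step, write $\tau = x \AND \tau'$ and use commutativity to obtain $\iuq \tau \cdot \Delta = \iuq x \cdot (\iuq \tau' \cdot \Delta)$ and $\forall \tau \cdot \Delta = \forall x \cdot (\forall \tau' \cdot \Delta)$. Applying the base case with $\Phi$ the inner expression reduces each claim to a condition on $\Phi$ at $\w$ and at $X$-resets of $\w$; the induction hypothesis then unfolds each such condition into a statement about $\Delta$ at $\tau'$-resets of the world in question, and the two resulting families (indexed by whether $X$ is reset or not) partition the relevant subsets of $\tau$. The main obstacle will be the bookkeeping in the $\forall$ step, where the ``differently set in $\tau$'' clause must be tracked through both the base-case dichotomy and the induction hypothesis. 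The saving observation is that the base case localizes this dependence entirely to whether $\w \models x_i$ for the current variable, so commutativity composes the per-variable statements into the desired global semantics without cross-interference.
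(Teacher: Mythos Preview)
Your proposal is correct and follows essentially the same approach as the paper: induction on the length of the simple term $\tau$, with a single-state base case (handled via the conditioning identity, which the paper phrases as observations (A) and (B)) and an inductive step that peels off one state and applies the base case to the inner quantified formula. The only cosmetic differences are that you state the conditioning identity more directly and make the $\w \models x_i$ dichotomy explicit in the $\forall$ base case, whereas the paper folds both cases into a single uniform statement before proceeding to the inductive step.
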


That is, $\forall \tau \cdot \Delta$ selects all instances in class
\(\Delta\) whose membership in the class does not depend on 
characteristics that are inconsistent with \(\tau\).
These instances are also selected by $\iuq \tau \cdot \Delta$ which
further selects instances that remain in class $\Delta$ when any of their characteristics are changed to agree with $\tau$. 

The complete reason is monotone which has
key computational implications as shown in~\cite{ecai/DarwicheH20,jair/DarwicheM21,DBLP:conf/aaai/DarwicheJ22}.
The general reason satisfies 
a weaker property called \textit{fixation} which has also
key computational implications as we show in Section~\ref{sec:PI-IP}.

\begin{definition}\label{def:fixation}
An NNF is locally fixated on instance $\instance$  
iff its literals are consistent with~$\instance$.
A formula is fixated on instance $\instance$ iff it is
equivalent to an NNF that is locally fixated on $\instance$.
\end{definition}
We also say in this case that the formula is \(\instance\)-fixated. 
For example, if $\instance = x_1\AND y_1 \AND z_2$ then 
the formula $x_{12} \AND y_1 \OR z_2$ is (locally) $\instance$-fixated 
but $x_{12} \AND z_1$ is not.
By the selection semantic we discussed earlier, 
a formula $\Delta$ is $\instance$-fixated only if for every model $\w$ of $\Delta$, changing the states of some variables in $\w$ to their states
in $\instance$ guarantees that the result remains a model of $\Delta$.
Moreover, if \(\Delta\) is \(\instance\)-fixated, then
\(\instance \models \Delta\) but the opposite does not hold (e.g.,
$\Delta = x_1 \OR y_1$ and $\instance = x_1 \AND y_2$).
We now have the following corollary of Proposition~\ref{prop:gr semantics}.

\begin{corollary}
The general reason $\iuq \instance \cdot \Delta$ is
$\instance$-fixated.
\end{corollary}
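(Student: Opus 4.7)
The plan is to derive the corollary directly from Proposition~\ref{prop:gr semantics}, which already gives us an NNF representation of $\iuq \instance \cdot \Delta$ with exactly the kind of literal constraint we need. The only work is translating between ``literal implied by $\instance$'' (the condition in Proposition~\ref{prop:gr semantics}) and ``literal consistent with $\instance$'' (the condition in Definition~\ref{def:fixation}).

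Concretely, I would argue as follows. By Proposition~\ref{prop:gr semantics} there exists an NNF $\Gamma$ equivalent to $\iuq \instance \cdot \Delta$ such that every literal occurring in $\Gamma$ is implied by $\instance$. Now fix any such literal $\ell$, an $X$-literal for some variable $X$, and let $x$ be the (unique) state of $X$ in the instance $\instance$. Since $\instance \models \ell$ and $\ell$ is a subset of the states of $X$, we must have $x \in \ell$, which is precisely the statement that $\ell$ is consistent with $\instance$. Hence every literal of $\Gamma$ is consistent with $\instance$, so $\Gamma$ is locally fixated on $\instance$, and therefore $\iuq \instance \cdot \Delta$ is $\instance$-fixated by Definition~\ref{def:fixation}.

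As a sanity check, one could obtain the same conclusion syntactically via Proposition~\ref{prop:alternative_gr} and commutativity (Proposition~\ref{prop:commutative}). Writing $\instance = x^{(1)}_{i_1} \AND \cdots \AND x^{(k)}_{i_k}$ and applying $\iuq$ one variable at a time, each application $\iuq x^{(r)}_{i_r}$ produces an NNF in which every $X^{(r)}$-literal contains the state $x^{(r)}_{i_r}$ (because the newly introduced literals have the form $\{x^{(r)}_1,\ldots,x^{(r)}_n\}\setminus\{x^{(r)}_j\}$ with $j\neq i_r$, all of which contain $x^{(r)}_{i_r}$, while the conditionings $\Delta|x^{(r)}_j$ no longer mention $X^{(r)}$); subsequent applications $\iuq x^{(s)}_{i_s}$ for $s\neq r$ do not disturb the $X^{(r)}$-literals since conditioning on a state of $X^{(s)}$ only touches $X^{(s)}$-literals. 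This again yields an NNF locally fixated on $\instance$.

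There is no real obstacle here: Proposition~\ref{prop:gr semantics} does essentially all the work, and the corollary amounts to recognising that the two notions of ``literal compatible with the instance'' coincide. The only point worth stating carefully in the write-up is that ``implied by $\instance$'' and ``consistent with $\instance$'' agree on single literals precisely because $\instance$ is a simple term that assigns exactly one state to every variable.
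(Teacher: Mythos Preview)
Your proposal is correct and matches the paper's approach exactly: the paper simply states this as an immediate corollary of Proposition~\ref{prop:gr semantics}, and your argument spells out the one-line translation (a literal implied by the simple term $\instance$ is automatically consistent with $\instance$) that makes the corollary follow from Definition~\ref{def:fixation}. Your optional syntactic route via Proposition~\ref{prop:alternative_gr} is in fact how the paper's proof of Proposition~\ref{prop:gr semantics} establishes the existence of the required NNF in the first place, so both routes coincide.
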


The next propositions show that the new operator \(\iuq\)
has similar computational properties to \(\forall\) which we
use in Section~\ref{sec:C reason} to compute general reasons.

\begin{proposition}\label{prop:quantify-b}
For state $x$ and literal $\lit$ of variable $X$,  
$\iuq x \cdot \lit = \lit$ if $x \in \lit$ ($x \models \lit$); 
else $\iuq x \cdot \lit = \bot$. Moreover,
$\iuq x \cdot \Delta = \Delta$ if $X$ does not appear in $\Delta$.
\end{proposition}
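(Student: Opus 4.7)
The plan is to unfold the definition $\iuq x \cdot \Delta = \Delta|x \AND \Delta$ (Definition~\ref{def:forall2}) and chase through what conditioning does in each of the three cases; no clever argument is needed, only a careful application of the conditioning rule for literals.

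First I would handle the literal cases. Let $\lit$ be an $X$-literal, i.e. a subset of the states of $X$, and let $x$ be a state of $X$. By the definition of conditioning given in Section~\ref{sec:discrete}, $\lit | x$ replaces the $X$-literal $\lit$ by $\top$ if $x \in \lit$ and by $\bot$ otherwise. Hence in case $x \in \lit$ we get $\iuq x \cdot \lit = \top \AND \lit = \lit$, which is part one; in case $x \not\in \lit$ we get $\iuq x \cdot \lit = \bot \AND \lit = \bot$, which is part two.

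For the third statement, suppose $X$ does not appear in $\Delta$. Then there are no $X$-literals in $\Delta$ to rewrite, so conditioning leaves $\Delta$ syntactically (and hence semantically) unchanged: $\Delta | x = \Delta$. Plugging into the definition, $\iuq x \cdot \Delta = \Delta \AND \Delta = \Delta$, as required.

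There is no real obstacle here; the only mild subtlety is making sure one is using the definition of conditioning on an arbitrary (not necessarily simple) literal exactly as given in Section~\ref{sec:discrete}, which is precisely the rewrite-by-$\top$-or-$\bot$ rule applied above. One could alternatively argue semantically by checking models of $\iuq x \cdot \Delta$, but the syntactic unfolding is shorter and matches the style of the preceding propositions.
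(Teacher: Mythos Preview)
Your proof is correct and essentially identical to the paper's own proof: both unfold Definition~\ref{def:forall2} as $\iuq x \cdot \Delta = \Delta|x \AND \Delta$, evaluate $\lit|x$ to $\top$ or $\bot$ according to whether $x \in \lit$, and observe that $\Delta|x = \Delta$ when $X$ does not occur in $\Delta$. There is nothing to add.
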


\begin{proposition}\label{prop:distribute-and-or}
For formulas \(\alpha\), \(\beta\) and state \(x_i\) of variable \(X\), we have
\(\iuq x_i \cdot (\alpha \AND \beta)  = (\iuq x_i \cdot \alpha) \AND (\iuq x_i \cdot \beta)\).
Moreover, if variable \(X\) does not occur in both \(\alpha\) and \(\beta\), then
 \(\iuq x_i \cdot (\alpha \OR \beta) = (\iuq x_i \cdot \alpha) \OR (\iuq x_i \cdot \beta)\).
\end{proposition}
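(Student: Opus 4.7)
The plan is to unfold both identities directly from Definition~\ref{def:forall2}, namely $\iuq x_i \cdot \Delta = \Delta | x_i \AND \Delta$, and then rely on the fact that conditioning on a simple term is a homomorphism with respect to the Boolean connectives: $(\alpha\AND\beta)|x_i = (\alpha|x_i)\AND(\beta|x_i)$ and $(\alpha\OR\beta)|x_i = (\alpha|x_i)\OR(\beta|x_i)$, which follows immediately from the syntactic definition of conditioning.

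For the conjunctive case, I would expand the left-hand side as
\[
\iuq x_i\cdot(\alpha\AND\beta) \;=\; (\alpha\AND\beta)|x_i \AND (\alpha\AND\beta) \;=\; (\alpha|x_i)\AND(\beta|x_i)\AND \alpha \AND \beta,
\]
and the right-hand side as
\[
(\iuq x_i\cdot\alpha)\AND(\iuq x_i\cdot\beta) \;=\; (\alpha|x_i)\AND\alpha\AND(\beta|x_i)\AND\beta.
\]
These agree by commutativity and associativity of conjunction. This part is routine and requires no hypothesis on the variables of $\alpha,\beta$.

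The disjunctive case is the main obstacle, since in general $\iuq x_i$ does not distribute over disjunction when $X$ occurs in both disjuncts (e.g.\ taking $\alpha = x_i$ and $\beta = \ell$ for some $X$-literal $\ell$ missing $x_i$ quickly breaks naive distribution). Under the hypothesis, I would assume without loss of generality that $X$ does not occur in $\beta$, so by Proposition~\ref{prop:quantify-b} (or directly from the definition of conditioning) we have $\beta|x_i = \beta$. Then the left-hand side becomes
\[
\iuq x_i\cdot(\alpha\OR\beta) \;=\; \bigl((\alpha|x_i)\OR\beta\bigr)\AND(\alpha\OR\beta),
\]
and distributing the conjunction over disjunction yields
\[
\bigl((\alpha|x_i)\AND\alpha\bigr)\;\OR\;\bigl((\alpha|x_i)\AND\beta\bigr)\;\OR\;(\beta\AND\alpha)\;\OR\;\beta,
\]
which simplifies to $\bigl((\alpha|x_i)\AND\alpha\bigr)\OR\beta$ because each of the three middle/right disjuncts is subsumed by $\beta$. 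The right-hand side is
\[
(\iuq x_i\cdot\alpha)\OR(\iuq x_i\cdot\beta) \;=\; \bigl((\alpha|x_i)\AND\alpha\bigr)\OR(\beta\AND\beta) \;=\; \bigl((\alpha|x_i)\AND\alpha\bigr)\OR\beta,
\]
matching the simplified left-hand side. The symmetric case ($X$ absent from $\alpha$) is analogous. The only delicate point is justifying the absorption step $(\alpha|x_i\AND\beta)\OR(\alpha\AND\beta)\OR\beta \equiv \beta$, which holds because $\beta$ implies each of the two discarded disjuncts via the standard tautology $\gamma\models\delta\OR\gamma$; I would state this explicitly and note that without the hypothesis on $X$ the term $(\alpha|x_i)\AND\beta$ no longer collapses into $\beta$, which is precisely why the distributivity over $\OR$ fails in general.
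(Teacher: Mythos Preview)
Your proposal is correct and follows essentially the same approach as the paper: unfold the definition $\iuq x_i\cdot\Delta = \Delta|x_i\AND\Delta$, use that conditioning distributes over $\AND$ and $\OR$, and for the disjunctive case use that the formula not mentioning $X$ is fixed by conditioning, then distribute and absorb. One small slip: in your justification of the absorption step you write that ``$\beta$ implies each of the two discarded disjuncts,'' but the direction you actually need (and clearly intend) is the reverse---each discarded disjunct $(\alpha|x_i)\AND\beta$ and $\alpha\AND\beta$ implies $\beta$, which is what lets you drop them from the disjunction.
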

An NNF is \(\vee\)-decomposable if its disjuncts do not share variables. According to these propositions, we can 
apply \(\iuq \instance\) to an \(\vee\)-decomposable NNF in linear time,
by simply applying \(\iuq \instance\) to each literal in the NNF (the result is \(\vee\)-decomposable).

\section{General Necessary and Sufficient Reasons}
\label{sec:NS reasons}

We next introduce generalizations of necessary and sufficient
reasons and show that they are prime implicates and implicants
of the general reason for a decision. These new notions have more explanatory power and subsume
their classical counterparts, particularly when explaining
the behavior of a classifier beyond a specific instance/decision. 
For example, when considering the classifier
in Figure~\ref{fig:alternative-dg}, 
which is a variant of the one in Figure~\ref{fig:disease-dg}, we will
see that the two classifiers will make identical decisions on
some instances, leading to identical simple necessary and sufficient
reasons for these decisions but distinct general necessary and
sufficient reasons. Moreover, we will see that general necessary
and sufficient reasons are particularly critical when explaining
the behavior of classifiers with (discretized) numeric features.

\subsection{General Sufficient Reasons (GSRs)}

We start by defining the classical notion of a (simple) 
sufficient reason 
but using a different formulation than~\cite{ijcai/ShihCD18}
which was the first to introduce this notion under the name of a PI-explanation.
Our formulation is meant to highlight a symmetry with the proposed
generalization.

\begin{definition}[SR] \label{def:sr}
A sufficient reason for the decision on instance \(\instance\) in
class \(\Delta\) is a weakest simple term $\tau$ 
s.t. $\instance \models \tau \models \Delta$.
\end{definition}
This definition implies that each literal in \(\tau\)
is a variable setting (i.e., characteristic) that appears in instance \(\instance\).
That is, the (simple) literals of sufficient reason \(\tau\) are
a subset of the literals in instance \(\instance\).
We now define our generalization.

\begin{definition}[GSR]\label{def:gsr}
A general sufficient reason for the decision on instance \(\instance\) in
class \(\Delta\) is a term \(\tau\) which satisfies
(1)~\(\tau\) is a weakest term s.t. $\instance \models \tau \models \Delta$ and (2)~no term $\tau'$ satisfies the previous condition if $\SetV(\tau') \subset \SetV(\tau)$.
\end{definition}
This definition does not require the GSR \(\tau\) to be a simple term, but it requires that it has  
a minimal set of variables.
Without this minimality condition, a GSR will be redundant in the sense of the upcoming 
Proposition~\ref{prop:redundant sr}. 
For a term \(\tau\) and instance \(\instance\) s.t. \(\instance \models \tau\), we will use 
\(\instance \mycap \tau\) to denote the smallest subterm
in \(\instance\) that implies \(\tau\). For example,
if \(\instance = x_2 \AND y_1 \AND z_3\) and \(\tau = x_{12} \AND y_{13}\), 
then \(\instance \mycap \tau = x_2 \AND y_1\). 

\begin{proposition}
\label{prop:redundant sr}
Let $\instance$ be an instance in class $\Delta$
and \(\tau\) be a weakest term s.t. 
$\instance \models \tau \models \Delta$.
If $\tau'$ is a weakest term  s.t. $\instance \models \tau' \models \Delta$ and $\SetV(\tau') \subset \SetV(\tau)$,
then $\instance \mycap \tau \models \instance \mycap \tau' \models \Delta$. Also, $\instance \mycap \tau$ is a SR iff such a term $\tau'$ does not exist.
\end{proposition}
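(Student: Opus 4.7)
The first assertion $\instance\mycap\tau\models\instance\mycap\tau'\models\Delta$ is essentially bookkeeping. By the definition of $\mycap$, both $\instance\mycap\tau$ and $\instance\mycap\tau'$ are simple terms consisting of the literals of the simple instance $\instance$ for the variables in $\SetV(\tau)$ and $\SetV(\tau')$ respectively. Since $\SetV(\tau')\subset\SetV(\tau)$, the term $\instance\mycap\tau'$ is a proper subterm of $\instance\mycap\tau$, giving $\instance\mycap\tau\models\instance\mycap\tau'$. And $\instance\mycap\tau'\models\tau'\models\Delta$ follows from the defining property of $\mycap$ and the hypothesis $\tau'\models\Delta$.

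For the iff I would argue each direction by contrapositive. If a $\tau'$ with the listed properties exists, then by the first assertion $\instance\mycap\tau'$ is a simple term with $\instance\models\instance\mycap\tau'\models\Delta$, and as a proper subterm of $\instance\mycap\tau$ it is strictly weaker; hence $\instance\mycap\tau$ is not a weakest simple term with the implication property, i.e.\ not a SR. Conversely, suppose $\instance\mycap\tau$ is not a SR. Then some simple term $\sigma$ is strictly weaker than $\instance\mycap\tau$ yet still satisfies $\instance\models\sigma\models\Delta$; since both are simple subterms of $\instance$, the strict weakening can only be effected by using strictly fewer variables, so $\SetV(\sigma)\subset\SetV(\tau)$. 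I now need to promote $\sigma$ to a \emph{weakest} (not just simple) term $\tau'$ still satisfying $\SetV(\tau')\subset\SetV(\tau)$.

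This promotion is where the only nontrivial step lies, and I would handle it by a finite-maximum argument. Consider the set $S=\{\gamma : \sigma\models\gamma\models\Delta\}$: it is nonempty (since $\sigma\in S$) and finite, because every $\gamma\in S$ satisfies $\SetV(\gamma)\subseteq\SetV(\sigma)$ and each variable admits only finitely many literals. Choose $\tau'\in S$ maximal under the implication preorder. Then $\SetV(\tau')\subseteq\SetV(\sigma)\subset\SetV(\tau)$ and $\instance\models\sigma\models\tau'\models\Delta$. The main obstacle is to verify that $\tau'$ is weakest \emph{globally}, not merely within $S$: if some $\gamma$ with $\instance\models\gamma\models\Delta$ were strictly weaker than $\tau'$, then the chain $\sigma\models\tau'\models\gamma$ would place $\gamma$ back into $S$, contradicting the maximality of $\tau'$. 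So $\tau'$ is a weakest term of the required form, completing the contrapositive.
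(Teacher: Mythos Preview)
Your argument is correct. For the first assertion and for the direction ``$\tau'$ exists $\Rightarrow$ $\instance\mycap\tau$ is not a SR'' you do exactly what the paper does.

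The difference is in the converse direction. The paper proves Proposition~\ref{prop:gsr_no_lost} first (it does not depend on Proposition~\ref{prop:redundant sr}) and then dispatches this direction in one line: if no such $\tau'$ exists then $\tau$ is a GSR by Definition~\ref{def:gsr}, and Proposition~\ref{prop:gsr_no_lost} immediately gives that $\instance\mycap\tau$ is a SR. You instead argue the contrapositive directly: from a witness $\sigma$ to ``$\instance\mycap\tau$ is not a SR'' you manufacture the required $\tau'$ by taking a weakest element of the finite set $\{\gamma:\sigma\models\gamma\models\Delta\}$, and you correctly observe that maximality in this set already forces global weakness because any strictly weaker $\gamma$ with $\gamma\models\Delta$ automatically satisfies $\sigma\models\gamma$. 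Your route is self-contained and avoids the GSR machinery entirely; the paper's route is shorter but relies on having Proposition~\ref{prop:gsr_no_lost} already in hand. Both are clean; yours is arguably the more elementary proof of this particular proposition in isolation.
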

According to this proposition, the term \(\tau\) is redundant
as an explanation
in that
the subset of instance \(\instance\) which it identifies as being
a culprit for the decision (\(\instance \mycap \tau\)) is dominated by a smaller subset that is identified by the 
term \(\tau'\) ($\instance \mycap \tau'$).

Consider the classifiers in Figures~\ref{fig:disease-dg} and~\ref{fig:alternative-dg} and the patient Susan:
$\Age \GE 55$, $\eql{\BType}{\tA}$ and
$\eql{\Weight}{\OWeight}$.
Both classifiers will make the same decision $\yes$ on Susan
with the same SRs:
$(\Age \GE 55 \AND \eql{\BType}{\tA})$ and  $(\Age \GE 55 \AND \eql{\Weight}{\OWeight})$. The GSRs are different
for these two (equal) decisions. For the first classifier,
they are 
$(\Age\GE 55 \AND \BType \IN \{\tA, \tB\})$ and $(\Age \GE 55 \AND \eql{\Weight}{\OWeight})$.
For the second, they are
$(\Age\GE 55 \AND \BType \IN \{\tA, \tO\})$ and $(\Age\GE 55 \AND \Weight \IN \{\OWeight, \Nom\})$.
GSRs
encode all SRs and
contain more information.\footnote{Unlike SRs, two GSRs may mention the same set of variables. Consider the class formula $\Delta = (x_1 \AND y_{12}) \OR (x_{12} \AND y_1)$ and instance $\instance = x_1 \AND y_1$. There are two GSRs for the decision on $\instance$, $x_1 \AND y_{12}$ and $x_{12} \AND y_1$, and both mention the same variables $X,Y$.}

\begin{proposition}\label{prop:gsr_no_lost}
Let $\tau$ be a simple term. Then $\tau$ is a SR for the 
decision on instance \(\instance\) iff $\tau = \instance \mycap \tau'$ for some GSR $\tau'$.
\end{proposition}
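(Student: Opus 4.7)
The reverse direction is essentially Proposition~\ref{prop:redundant sr}: if $\tau'$ is a GSR then no weakest term has $\SetV$ strictly contained in $\SetV(\tau')$, so by that proposition $\instance \mycap \tau'$ is a SR, which is exactly $\tau$.

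For the forward direction I would rely on a lemma: for every simple term $\beta$ with $\instance \models \beta \models \Delta$, there exists a globally weakest term $\beta^{\star}$ with $\instance \models \beta^{\star} \models \Delta$ and $\SetV(\beta^{\star}) \subseteq \SetV(\beta)$. I would prove this by taking $\beta^{\star}$ to be any maximal element, in the weakness order, of the nonempty finite set of terms $\gamma$ satisfying $\instance \models \gamma \models \Delta$ and $\SetV(\gamma) \subseteq \SetV(\beta)$. The key step is that if any term $\gamma$ with $\instance \models \gamma \models \Delta$ were strictly weaker than $\beta^{\star}$, then $\SetV(\gamma) \subseteq \SetV(\beta^{\star})$ (a term cannot be entailed by one that leaves one of its variables unconstrained), so $\gamma$ would lie in our set and contradict maximality; hence $\beta^{\star}$ is weakest among all terms.

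With the lemma in hand, let $\tau$ be a SR and apply the lemma to $\tau$ to obtain a weakest term $\tau^{\star}$ with $\SetV(\tau^{\star}) \subseteq \SetV(\tau)$. First I would argue $\SetV(\tau^{\star}) = \SetV(\tau)$: otherwise $\instance \mycap \tau^{\star}$ is a simple $\instance$-subterm implying $\tau^{\star}$, hence $\Delta$, and mentions strictly fewer variables than $\tau$, making it strictly weaker and contradicting the SR condition on $\tau$. It then follows that $\instance \mycap \tau^{\star} = \tau$, since both are the unique simple $\instance$-subterm on the variable set $\SetV(\tau)$. Next I would verify that $\tau^{\star}$ is a GSR: if some weakest term $\gamma$ satisfied $\SetV(\gamma) \subsetneq \SetV(\tau^{\star}) = \SetV(\tau)$, then $\instance \mycap \gamma$ would again be a simple $\instance$-subterm implying $\Delta$ with fewer variables than $\tau$, contradicting the SR condition on $\tau$ once more.

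The main obstacle is the lemma, and inside it, the claim that $\SetV(\gamma) \subseteq \SetV(\beta^{\star})$ whenever $\beta^{\star} \models \gamma$. This is a semantic fact about term implication: if $\gamma$ had a variable $Y \notin \SetV(\beta^{\star})$, then a model of $\beta^{\star}$ with $Y$ assigned outside the $Y$-literal of $\gamma$ would violate $\beta^{\star} \models \gamma$. Once that is pinned down, the rest is routine bookkeeping about simple $\instance$-subterms and variable sets.
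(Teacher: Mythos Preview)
Your argument is correct, but it is organized differently from the paper's, and one dependency deserves a caution.

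\textbf{Reverse direction.} You invoke Proposition~\ref{prop:redundant sr} to conclude that $\instance \mycap \tau'$ is a SR whenever $\tau'$ is a GSR. This is valid, but note that in the paper Proposition~\ref{prop:redundant sr} is proved \emph{after} Proposition~\ref{prop:gsr_no_lost} and its Part~2(b) actually cites Proposition~\ref{prop:gsr_no_lost}. So your ordering creates a circularity relative to the paper's proof structure. The fix is easy: Part~2(b) of Proposition~\ref{prop:redundant sr} can be proved directly (if no weakest term has a strictly smaller variable set, any simple $\instance$-subterm strictly weaker than $\instance\mycap\tau'$ would extend to such a weakest term, contradiction), and once you do that your reverse direction stands on its own. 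The paper instead proves this direction from scratch, using the already-established forward direction in a bootstrap: if $\instance\mycap\tau'$ were not a SR, a strictly weaker simple SR would exist, and by the forward direction it would come from a GSR with strictly fewer variables than $\tau'$, contradicting variable-minimality of $\tau'$.

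\textbf{Forward direction.} Here your route genuinely differs. The paper starts from the SR $\tau$, asserts the existence of a GSR $\tau'$ with $\tau \models \tau'$, and then argues $\SetV(\tau')=\SetV(\tau)$ using the SR property. You instead build the witness explicitly: take a maximal (weakest) element $\tau^\star$ of the finite set $\{\gamma : \instance \models \gamma \models \Delta,\ \SetV(\gamma)\subseteq\SetV(\tau)\}$, observe that term implication forces $\SetV(\gamma)\subseteq\SetV(\tau^\star)$ for any weaker $\gamma$, so $\tau^\star$ is globally weakest, and then use the SR property of $\tau$ twice to get $\SetV(\tau^\star)=\SetV(\tau)$ and variable-minimality. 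Your construction is more self-contained and avoids the somewhat implicit step in the paper's proof (that a GSR weaker than $\tau$ exists). The paper's version is shorter because it leans on the ambient notion of GSR; yours makes the existence argument fully explicit, which is arguably cleaner.
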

Consider the instance Susan again, $\instance = (\Age \GE 55) \AND (\eql{\BType}{\tA}) \AND (\eql{\Weight}{\OWeight})$ and the classifier in
Figure~\ref{fig:alternative-dg}. As mentioned, 
the GSRs for the decision on Susan are
$\tau'_1=(\Age\GE 55 \AND \BType \IN \{\tA, \tO\})$ 
and 
$\tau'_2=(\Age\GE 55 \AND \Weight \IN \{\OWeight, \Nom\})$
so $\tau_1 = \instance \mycap \tau'_1 = (\Age \GE 55 \AND \eql{\BType}{\tA})$
and 
$\tau_2 = \instance \mycap \tau'_2 = (\Age \GE 55 \AND \eql{\Weight}{\OWeight})$,
which are the two SRs for the decision on Susan.

The use of general terms to explain the decision on an
instance \(\instance\) in class \(\Delta\) was first suggested in~\cite{corr/abs-2007-01493}. This work proposed the notion of a general PI-explanation as a prime implicant of 
\(\Delta\) that is consistent with instance \(\instance\).
This definition is equivalent to Condition~(1) in
our Definition~\ref{def:gsr} which has a second condition
relating to variable minimality. Hence, the definition
proposed by~\cite{corr/abs-2007-01493} does not satisfy
the desirable properties stated in Propositions~\ref{prop:redundant sr}
and~\ref{prop:gsr_no_lost} which require this minimality condition. 
The merits of using general terms were also discussed when explaining decision trees in~\cite{DBLP:journals/jair/IzzaIM22}, which introduced the notion of an \textit{abductive path explanation (APXp).} 
In a nutshell, each path in a decision tree
corresponds to a general term \(\tau\) that implies the
formula \(\Delta\) of the path's class. Such a term is usually used to explain the decisions made on instances that follow that path. As observed in~\cite{DBLP:journals/jair/IzzaIM22}, such a term can often be shortened, leading
to an APXp that still implies the class formula \(\Delta\) and hence provides a better explanation. 
An APXp is an implicant of the class formula $\Delta$ but not necessarily a prime implicant (or a variable-minimal prime implicant). Moreover, an APXp is a property of the specific decision tree (syntax) instead of its underlying classifier (semantics). See Appendix~\ref{sec:path exp diff} for further discussion of these limitations.\footnote{A dual notion, contrastive path explanation (CPXp), was also proposed in~\cite{DBLP:journals/jair/IzzaIM22}.}

\subsection{General Necessary Reasons (GNRs)}

We now turn to simple necessary reasons and their
generalizations. A necessary reason is a property of the instance
that will flip the decision if violated in a certain way (by changing the instance).
As mentioned earlier, the difference
between the classical necessary reason and the generalized one
is that the latter comes with stronger guarantees. 
Again, we start with a definition of classical necessary reasons
using a different phrasing than~\cite{aiia/IgnatievNA020} which formalized them
under the name of contrastive explanations~\cite{lipton_1990}. Our phrasing, based on~\cite{DBLP:conf/aaai/DarwicheJ22},
highlights a symmetry with the generalization and requires the following
notation.

For a clause \(\sigma\) and instance \(\instance\) s.t. \(\instance \models \sigma\),
we will use \(\instance \myminus \sigma\) to denote the largest
subterm of \(\instance\) that does not imply \(\sigma\). For example,
if \(\instance = x_2 \AND y_1 \AND z_3\) and 
\(\sigma = x_{12} \OR y_{13}\) then
\(\instance \myminus \sigma = z_3\).
We will also write \(\instance \mymodels \sigma\) to mean that
instance \(\instance\) implies every literal in clause \(\sigma\).
For instance
\(\instance = x_2 \AND y_1 \AND z_3\), we have \(\instance \mymodels x_{12} \OR y_{13}\)
but \(\instance \notmymodels x_{12} \OR y_{23}\) even though \(\instance \models x_{12} \OR y_{23}\).

\begin{definition}[NR]
\label{def:nr}
A necessary reason for the decision on instance \(\instance\) in class 
\(\Delta\) is a strongest simple clause 
$\sigma$ s.t. 
\(\instance \mymodels \sigma\) and
$(\instance \myminus \sigma) \AND \NOT{\sigma} \not \models \Delta$
(if we minimally change the instance to violate \(\sigma\), it is no longer
guaranteed to stay in class \(\Delta\)).
\end{definition}

A necessary reason guarantees that \textit{some}
minimal change to the instance which violates the reason will flip the decision.
But it does not guarantee that \textit{all} such changes will. 
A general necessary reason comes with a stronger guarantee.

\begin{definition}[GNR]
\label{def:gnr}
A general necessary reason for the decision on instance \(\instance\) in class 
\(\Delta\) is a strongest clause $\sigma$ s.t.
$\instance \mymodels \sigma$,
$(\instance \myminus \sigma) \AND \NOT{\sigma} \models \NOT{\Delta}$, and
no clause $\sigma'$ satisfies the previous conditions if $\SetV(\sigma') \subset \SetV(\sigma)$.
\end{definition}
The key difference between Definitions~\ref{def:nr} and~\ref{def:gnr} are the conditions
\((\instance \myminus \sigma) \AND \NOT{\sigma} \not \models \Delta\) and \((\instance \myminus \sigma) \AND \NOT{\sigma} \models \NOT{\Delta}\). The first condition
guarantees that \textit{some} violation of a NR will flip the decision
(by placing the modified instance outside class \(\Delta\))
while the second condition guarantees that \textit{all} violations of a GNR
will flip the decision.

The next proposition explains why we require GNRs to be
variable-minimal. Without this condition, the changes identified by a GNR to flip the decision may not be 
minimal (we 
can flip the decision by changing a strict subset of variables).

For instance $\instance$ and clause $\sigma$ s.t. $\instance \models \sigma$, we will use $\instance \mycap \sigma$ to denote
the disjunction of states that appear in both $\instance$ and $\sigma$ (hence, $\instance \mycap \sigma \models \sigma$).
For example, if $\instance = x_1 \AND y_1 \AND z_1$ and $\sigma = x_{12} \OR y_{23} \OR z_1$, then $\instance \mycap \sigma = x_1 \OR z_1$.

\begin{proposition}
\label{prop:redundant nr}
Let $\instance$ be an instance in class $\Delta$
and let \(\sigma\) be a strongest clause s.t. 
$\instance \mymodels \sigma$ and $(\instance \myminus \sigma) \AND \NOT \sigma \models \NOT{\Delta}$. 
If $\sigma'$ is 
another strongest clause satisfying these conditions
and $\SetV(\sigma') \subset \SetV(\sigma)$,
then $\instance \myminus \sigma' \models \instance \myminus \sigma$. Moreover, $\instance \mycap \sigma$ is a NR iff such a clause $\sigma'$ does not exist.
\end{proposition}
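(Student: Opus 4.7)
The plan is to route everything through a lemma: for any clause $\sigma^\ast$ with $\instance \mymodels \sigma^\ast$ and $(\instance \myminus \sigma^\ast) \AND \NOT{\sigma^\ast} \models \NOT{\Delta}$, the simple clause $\instance \mycap \sigma^\ast$ already satisfies the (weaker) NR conditions $\instance \mymodels (\instance \mycap \sigma^\ast)$ and $(\instance \myminus (\instance \mycap \sigma^\ast)) \AND \NOT{(\instance \mycap \sigma^\ast)} \not\models \Delta$. To prove this lemma I would rely on two observations: $\SetV(\instance \mycap \sigma^\ast) = \SetV(\sigma^\ast)$, so the $\myminus$ terms coincide; and $\instance \mymodels \sigma^\ast$ places each instance-state inside the corresponding literal of $\sigma^\ast$, yielding $\instance \mycap \sigma^\ast \models \sigma^\ast$ and hence $\NOT{\sigma^\ast} \models \NOT{(\instance \mycap \sigma^\ast)}$; a model of the satisfiable term $(\instance \myminus \sigma^\ast) \AND \NOT{\sigma^\ast}$ that lies outside $\Delta$ therefore also witnesses that $(\instance \myminus (\instance \mycap \sigma^\ast)) \AND \NOT{(\instance \mycap \sigma^\ast)} \not\models \Delta$.

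With the lemma in hand I would first dispatch the statement $\instance \myminus \sigma' \models \instance \myminus \sigma$. Since $\instance \mymodels \sigma$ and $\instance \mymodels \sigma'$, each $\myminus$ term is just the restriction of $\instance$ to variables outside the clause's scope, and $\SetV(\sigma') \subset \SetV(\sigma)$ makes $\instance \myminus \sigma'$ strictly extend $\instance \myminus \sigma$ with additional simple literals of $\instance$. Next, for the forward direction of the iff, I would apply the lemma with $\sigma^\ast = \sigma'$: this gives that $\instance \mycap \sigma'$ is a simple clause on strictly fewer variables than $\instance \mycap \sigma$ satisfying the NR conditions, and since both are disjunctions of instance-states, strictly fewer disjuncts means strictly stronger. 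This exhibits a simple clause dominating $\instance \mycap \sigma$ on the NR conditions, so $\instance \mycap \sigma$ is not a strongest simple clause and hence not a NR.

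The hard part will be the backward direction, where I must upgrade a simple NR-style witness (only "some-flip") into a general GNR-style witness ("all-flip"). Assuming $\instance \mycap \sigma$ is not a NR, the lemma applied to $\sigma$ itself already establishes the NR conditions on $\instance \mycap \sigma$, so the failure must lie at "strongest"; hence some simple clause $\sigma^\ast$ strictly stronger than $\instance \mycap \sigma$ satisfies the NR conditions, with $\SetV(\sigma^\ast) \subset \SetV(\sigma)$. The NR condition on $\sigma^\ast$ yields a specific model $\omega$ of $(\instance \myminus \sigma^\ast) \AND \NOT{\sigma^\ast}$ lying outside $\Delta$, and I would tailor a general clause to $\omega$: define $\sigma^\circ = \BOR_{X \in \SetV(\sigma^\ast)} \ell_X$ with $\ell_X$ equal to all states of $X$ except $\omega(X)$, so that $\NOT{\sigma^\circ}$ pins each $X \in \SetV(\sigma^\ast)$ exactly to $\omega(X)$. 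Then $(\instance \myminus \sigma^\circ) \AND \NOT{\sigma^\circ}$ has $\omega$ as its unique model, which lies outside $\Delta$, establishing the all-flip condition; and $\instance \mymodels \sigma^\circ$ holds because $\omega(X)$ differs from the instance-state on every $X \in \SetV(\sigma^\ast)$. To finish, I would take a $\models$-minimal clause $\sigma'$ below $\sigma^\circ$ among clauses satisfying these two conditions; it exists by finiteness, is globally $\models$-minimal (any strictly stronger competitor would also strictly strengthen $\sigma^\circ$), and satisfies $\SetV(\sigma') \subseteq \SetV(\sigma^\circ) = \SetV(\sigma^\ast) \subset \SetV(\sigma)$, contradicting the assumed non-existence of such a clause.
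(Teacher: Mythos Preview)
Your proof is correct. The argument for $\instance \myminus \sigma' \models \instance \myminus \sigma$ and the forward direction of the biconditional match the paper's Parts~1 and~2(b) essentially verbatim. The difference lies in the backward direction: the paper does not argue directly but instead invokes Proposition~\ref{prop:gnr_no_lost} (proved separately and earlier in the appendix), observing that if no such $\sigma'$ exists then $\sigma$ is a GNR, whence $\instance \mycap \sigma$ is a NR. Your route is self-contained: you take a strictly stronger simple witness $\sigma^\ast$, extract a world $\omega$ from its ``some-flip'' guarantee, and build the clause $\sigma^\circ$ whose negation pins each variable of $\sigma^\ast$ to $\omega$, thereby upgrading to an ``all-flip'' clause. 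This is precisely the construction the paper carries out inside the proof of Proposition~\ref{prop:gnr_no_lost} (the clause equivalent to $\NOT{\omega \setminus (\omega \myminus \sigma)}$), so you have effectively inlined the relevant portion of that proof. The paper's modular approach keeps the present proof short at the cost of a forward reference; your approach is longer here but avoids the dependency and makes the proposition stand on its own.
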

That is, if violating \(\sigma\) requires changing
some characteristics $C$ of instance \(\instance\), then \(\sigma'\) can be violated by changing a strict subset of these characteristics $C$.

Consider the classifiers in Figures~\ref{fig:disease-dg} and~\ref{fig:alternative-dg} which make the same decision, $\yes$, on
Susan ($\Age \GE 55$, $\eql{\BType}{\tA}$, $\eql{\Weight}{\OWeight}$).
The NRs for these equal decisions 
are the same: 
$(\Age \GE 55)$ and 
$(\eql{\Weight}{\OWeight} \OR \eql{\BType}{\tA})$.
The GNRs for the classifier in 
Figure~\ref{fig:disease-dg} are 
$(\Age\GE 55)$, 
$(\BType \IN \{\tA, \tB, \tAB\} \OR$  $\eql{\Weight}{\OWeight}\})$ and
$(\BType \IN \{\tA, \tB\} \OR \Weight\IN\{\UWeight, \OWeight\})$.
If the instance is changed to violate any of them, the decision will change. 
For example, if we set $\BType$ to $\tAB$ and $\Weight$ to $\Nom$, the third GNR will be violated and the decision on Susan becomes $\no$.
For the classifier in Figure~\ref{fig:alternative-dg},
the GNRs for the decision are 
different:
$(\Age \GE 55)$ and $(\BType \IN \{\tA, \tO\} \OR \Weight \IN \{\Nom, \OWeight\})$. 
However, both sets of GNRs contain
more information than the NRs since 
the minimal changes they identify
to flip the decision include those identified
by the NRs.

\begin{proposition}\label{prop:gnr_no_lost}
Let $\sigma$ be a simple clause.
Then $\sigma$ is a NR for the 
decision on instance \(\instance\) iff $\sigma = \instance \mycap \sigma'$ for some GNR $\sigma'$.
\end{proposition}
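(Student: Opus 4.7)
The plan is to mirror the structure of Proposition~\ref{prop:gsr_no_lost}, using the key observation that when $\instance \mymodels \sigma'$, the simple clause $\sigma = \instance \mycap \sigma'$ satisfies $\SetV(\sigma) = \SetV(\sigma')$, $\sigma \models \sigma'$, $\instance \myminus \sigma = \instance \myminus \sigma'$, and $\NOT{\sigma'} \models \NOT{\sigma}$. These identities translate the universal GNR flip condition $(\instance \myminus \sigma') \AND \NOT{\sigma'} \models \NOT{\Delta}$ into the weaker existential NR flip condition $(\instance \myminus \sigma) \AND \NOT{\sigma} \not\models \Delta$, and conversely let an NR witness seed a GNR construction.

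For the backward direction, set $\sigma = \instance \mycap \sigma'$ from a GNR $\sigma'$. The relation $\instance \mymodels \sigma$ is immediate. The flip condition follows because a world $\w$ satisfying $(\instance \myminus \sigma') \AND \NOT{\sigma'}$ exists (each literal $\ell_X$ of $\sigma'$ is a proper subset of $X$'s states, so $\NOT{\ell_X}$ is non-empty), lies in $\NOT{\Delta}$ by the GNR property, and simultaneously satisfies $(\instance \myminus \sigma) \AND \NOT{\sigma}$ by the identities above. To show $\sigma$ is a strongest simple clause, assume for contradiction a strictly stronger simple NR $\hat{\sigma}$ exists, which forces $\SetV(\hat{\sigma}) \subsetneq \SetV(\sigma')$. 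From an NR witness $\w_0$ for $(\instance \myminus \hat{\sigma}) \AND \NOT{\hat{\sigma}} \AND \NOT{\Delta}$, I construct the \emph{point-opposite} clause $\hat{\sigma}^{*}$ over $\SetV(\hat{\sigma})$ whose $X$-literal is every state of $X$ except the one $\w_0$ assigns to $X$. Then $\instance \mymodels \hat{\sigma}^{*}$ (since $\w_0$ and $\instance$ disagree on each $X \in \SetV(\hat{\sigma})$), and $(\instance \myminus \hat{\sigma}^{*}) \AND \NOT{\hat{\sigma}^{*}}$ has $\w_0$ as its unique model, which lies in $\NOT{\Delta}$; so $\hat{\sigma}^{*}$ satisfies the first two GNR conditions over a strictly smaller $\SetV$ than $\sigma'$, contradicting the variable-minimality of $\sigma'$.

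For the forward direction, given an NR $\sigma$, the same point-opposite construction applied to an NR witness yields at least one clause over $\SetV(\sigma)$ satisfying the first two GNR conditions, so by finiteness a strongest-over-$\SetV(\sigma)$ such clause $\sigma'$ exists. Variable-minimality of $\sigma'$ comes from the symmetric contradiction: if some $\rho$ with $\SetV(\rho) \subsetneq \SetV(\sigma)$ met the first two GNR conditions, then $\instance \mycap \rho$ would be a simple NR strictly stronger than $\sigma$, since $\SetV(\instance \mycap \rho) = \SetV(\rho) \subsetneq \SetV(\sigma)$ and the previous backward-style analysis establishes the NR conditions. Because any clause strictly stronger than $\sigma'$ has variable set $\subseteq \SetV(\sigma)$, maximal strength on $\SetV(\sigma)$ combined with variable-minimality promote $\sigma'$ to strongest overall, so $\sigma'$ is a GNR. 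Finally, $\instance \mycap \sigma' = \sigma$ because $\SetV(\sigma') = \SetV(\sigma)$ and every literal of $\sigma'$ contains the $\instance$-state of its variable.

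The main obstacle is keeping the two notions of ``strongest'' carefully aligned: for simple clauses under $\instance \mymodels$, strict strengthening reduces to strict reduction of $\SetV$, whereas for general clauses it can additionally mean shrinking literals at fixed $\SetV$. The point-opposite construction is the central device in both directions: it translates between the existential NR flip guarantee and the universal GNR flip guarantee by collapsing all violations to a single pre-chosen world $\w_0$.
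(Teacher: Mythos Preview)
Your proposal is correct and follows essentially the same approach as the paper: both proofs hinge on the point-opposite clause construction (the paper writes it as $\NOT{\w \setminus (\w \myminus \sigma)}$) to pass from an existential NR witness to a clause satisfying the universal GNR flip condition, and both use the $\instance \mycap$ projection together with variable-minimality to run the contradiction in the other direction. The only differences are organizational: you prove the backward direction first and choose $\sigma'$ as a strongest clause over the fixed variable set $\SetV(\sigma)$ before establishing variable-minimality, whereas the paper first rules out GNRs on smaller variable sets and then argues that the point-opposite clause is either already a GNR or dominated by one on the same variable set.
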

Consider the instance Susan again, $\instance = (\Age \GE 55) \AND (\eql{\BType}{\tA}) \AND (\eql{\Weight}{\OWeight})$ and the classifier in
Figure~\ref{fig:alternative-dg}. As mentioned earlier, 
the GNRs for the decision on Susan are
$\sigma'_1 = (\Age \GE 55)$ and $\sigma'_2 = (\BType \IN \{\tA, \tO\} \OR \Weight \IN \{\Nom, \OWeight\})$. Then
$\sigma_1 =\instance \mycap \sigma'_1 = (\Age \GE 55)$
and 
$\sigma_2 = \instance \mycap \sigma'_2 = (\eql{\Weight}{\OWeight} \OR \eql{\BType}{\tA})$, which are the two NRs for the decision on Susan.

GSRs and GNRs are particularly significant when
explaining the decisions of classifiers with numeric features, a topic which we discuss in Appendix~\ref{sec:numeric}.

We next present a fundamental result which allows us to compute GSRs and GNRs using
the general reason for a decision (we use this
result in Section~\ref{sec:PI-IP}). 

\begin{definition}\label{def:var-min}
A prime implicant/implicate $c$ of formula $\Delta$ is 
variable-minimal 
iff 
there is no prime implicant/implicate 
$c'$ of $\Delta$ s.t. $\SetV(c') \subset \SetV(c)$.
\end{definition}

\begin{proposition}\label{prop:PI-IP}
Let \(\instance\) by an instance in class \(\Delta\).
The GSRs/GNRs for the decision on instance
\(\instance\) are the variable-minimal prime implicants/implicates 
of the general reason $\iuq \instance \cdot \Delta$.
\end{proposition}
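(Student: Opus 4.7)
The plan is to prove the two halves of the statement (GSRs as prime implicants, GNRs as prime implicates) separately, using Proposition~\ref{prop:gr semantics} as the bridge between the semantic characterization of $G := \iuq \instance \cdot \Delta$ and the definitions of general explanations, and Proposition~\ref{prop:selection semantics} for the implicate side. First I would establish a key sublemma: if $\Gamma$ is an NNF whose literals are all implied by $\instance$, then every prime implicant $\tau$ of $\Gamma$ satisfies $\instance \models \tau$ and every prime implicate $\sigma$ satisfies $\instance \mymodels \sigma$. The argument is a local replacement. For an implicant, if some $X$-literal $\ell \in \tau$ fails $\instance \models \ell$, enlarging $\ell$ to $\ell \cup \{\instance(X)\}$ yields a strictly weaker term $\tau'$; an induction on $\Gamma$'s structure, exploiting that every $X$-literal in $\Gamma$ already contains $\instance(X)$, shows that any model of $\tau'$ satisfies at least as many literals of $\Gamma$ as a model of $\tau$ does, hence $\tau' \models \Gamma$, contradicting primality. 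The implicate case is dual: delete the offending literal from $\sigma$ rather than enlarge it. Since $G$ is $\instance$-fixated, this sublemma applies to $G$.

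For the GSR equivalence I would combine the sublemma with Proposition~\ref{prop:gr semantics}. In the forward direction a GSR $\tau$ is itself an NNF with literals implied by $\instance$ satisfying $\instance \models \tau \models \Delta$, so $\tau \models G$ by the weakest-NNF characterization; a strictly weaker implicant of $G$ would still satisfy $\instance \models \tau' \models \Delta$ (since $\tau \models \tau'$ and $\instance \models \tau$), contradicting the weakest-term clause of Definition~\ref{def:gsr}, while an implicant of $G$ on a smaller variable set would, via the sublemma, be a weakest term on a smaller variable set with $\instance \models \tau' \models \Delta$, contradicting the variable-minimality clause. The reverse direction runs the same steps backwards: a variable-minimal prime implicant $\tau$ of $G$ inherits $\instance \models \tau$ from the sublemma and $\tau \models \Delta$ from Proposition~\ref{prop:forall-relation}, and its primality and variable-minimality translate back into the two conditions of Definition~\ref{def:gsr} via Proposition~\ref{prop:gr semantics} applied in reverse.

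For the GNR equivalence I would switch to the selection semantics of Proposition~\ref{prop:selection semantics}. Given a GNR $\sigma$, any hypothetical $\w \models G$ with $\w \not \models \sigma$ can be modified by fixation into $\w'$ that equals $\instance$ outside $\SetV(\sigma)$ while still satisfying $G$ (hence $\Delta$) and still violating $\sigma$; this contradicts the GNR condition $(\instance \myminus \sigma) \AND \NOT \sigma \models \NOT \Delta$, so $G \models \sigma$. The strongest-clause and variable-minimality clauses of Definition~\ref{def:gnr} then transport, via the dual sublemma and the same selection-semantics argument, into primality and variable-minimality of $\sigma$ as an implicate of $G$. In the reverse direction, a variable-minimal prime implicate $\sigma$ of $G$ satisfies $\instance \mymodels \sigma$ by the dual sublemma; if the GNR condition failed, a witness $\w \models (\instance \myminus \sigma) \AND \NOT \sigma \AND \Delta$ would have $\w \not \models G$, so the selection semantics yields a minimal $S_0 \subseteq \SetV(\sigma)$ with $\w_{S_0} \not \models \Delta$, and the interplay between $T_0 := \SetV(\sigma) \setminus S_0$, the minimality of $S_0$, and the fixation of $G$ produces either a strictly stronger implicate of $G$ or an implicate on a strict subset of $\SetV(\sigma)$, contradicting the variable-minimal prime implicate property of $\sigma$.

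The hard part will be this last step of the GNR equivalence: the ``naive'' equivalence $G \models \sigma$ iff $(\instance \myminus \sigma) \AND \NOT \sigma \models \NOT \Delta$ only holds in the ($\Leftarrow$) direction for arbitrary clauses, so the primality and variable-minimality hypotheses on $\sigma$ must be threaded carefully through the selection semantics and the minimal-$S_0$ extraction to land each contradiction, rather than appealing to a clean bi-implication between implication in $G$ and the GNR flip condition.
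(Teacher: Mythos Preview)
Your approach is essentially the paper's: the sublemma (prime implicants/implicates of an $\instance$-fixated formula have all literals implied by $\instance$) is exactly what the paper proves inline as observations (A) and (B), and your GSR argument via Proposition~\ref{prop:gr semantics} is a clean variant of the paper's selection-semantics version. Your ``hard part'' sketch for the reverse GNR direction also works: from a witness $\w \models (\instance\myminus\sigma)\AND\NOT\sigma\AND\Delta$ and any nonempty $S_0\subseteq\SetV(\sigma)$ with $\w_{S_0}\not\models\Delta$ (which exists since $\w\not\models G$), the clause $\sigma_0=\bigvee_{X\in T_0}\{x:x\neq\w(X)\}$ on $T_0=\SetV(\sigma)\setminus S_0$ is an implicate of $G$ (any $\w^\star\models\NOT{\sigma_0}$ can be pushed toward $\instance$ to reach $\w_{S_0}\not\models\Delta$), and $T_0\subsetneq\SetV(\sigma)$ contradicts variable-minimality. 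Note that minimality of $S_0$ is not actually needed, and only the ``implicate on a strict subset'' branch arises.

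There is, however, a real gap in your GNR forward direction. You write that the strongest-clause and variable-minimality conditions of a GNR $\sigma$ ``transport, via the dual sublemma and the same selection-semantics argument, into primality and variable-minimality of $\sigma$ as an implicate of $G$.'' They do not transport that way. Suppose $G\models\sigma'\models\sigma$ with $\sigma'\neq\sigma$ a prime implicate; to contradict the GNR strongest-clause condition you would need $(\instance\myminus\sigma')\AND\NOT{\sigma'}\models\NOT\Delta$, but this implication from ``$\sigma'$ is an implicate of $G$'' to ``$\sigma'$ satisfies the flip condition'' is exactly the hard reverse direction, and it is \emph{false} for non--variable-minimal prime implicates (the paper's Appendix~D gives a prime implicate of $G$ that fails the flip condition). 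The selection-semantics argument you used for $G\models\sigma$ only runs in the direction ``flip condition $\Rightarrow$ implicate,'' not the converse. The fix is to reorder: prove the reverse direction (variable-minimal prime implicate $\Rightarrow$ GNR) first, then in the forward direction pick a variable-minimal prime implicate $\sigma''$ with $\SetV(\sigma'')\subseteq\SetV(\sigma')\subseteq\SetV(\sigma)$, invoke the reverse direction to conclude $\sigma''$ is a GNR, and contradict either the variable-minimality or the strongest-clause clause of $\sigma$. This is precisely how the paper structures the GNR half.
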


The disjunction of SRs is
equivalent to the complete reason which is equivalent to the conjunction of NRs. However, the disjunction of GSRs implies the general reason but is not equivalent to it, and the conjunction of GNRs is implied by the general reason but is not equivalent to it; see Appendix~\ref{sec:var-min}. This suggests that more information can potentially be extracted from the
general reason beyond the information provided by GSRs and GNRs.

\section{The General Reasons of Decision Graphs}
\label{sec:C reason}

Decision graphs are DAGs which include decision trees~\cite{10.2307/2985543,DBLP:books/wa/BreimanFOS84}, OBDDs~\cite{tc/Bryant86},
and can have discrete or numeric features. 
They received significant attention in the work
on explainable AI since they
can be compiled from other types of classifiers such as Bayesian 
networks~\cite{aaai/ShihCD19}, random forests~\cite{corr/abs-2007-01493} and some types of neural networks~\cite{kr/ShiSDC20}. Hence, the ability to explain decision graphs has a
direct application to explaining the decisions of a broad class of classifiers.
Moreover, 
the decisions undertaken by decision graphs have closed-form complete reasons as shown in~\cite{DBLP:conf/aaai/DarwicheJ22}.
We provide similar closed forms for the general
reasons in this section.
We first review decision graphs
to formally state our results.

Each leaf node in a decision graph is labeled with some class~\(c\). 
An internal node \(T\) that {\em tests} variable \(X\) has outgoing edges \(\DE(X,S_1,T_1)\), \(\ldots, \DE(X,S_n,T_n)\), 
\(n \geq 2\). 
The children of node \(T\) are \(T_1, \ldots, T_n\)
and \(S_1, \ldots, S_n\) is a partition of {\em some} states of variable \(X\). 
A decision graph will be represented by its root node.
Hence, each node in the graph represents a smaller decision graph.
Variables can be tested more than once on a path if they satisfy the 
{\em weak test-once property} discussed next~\cite{DBLP:conf/aaai/DarwicheJ22,DBLP:conf/kr/HuangII021}. 
Consider a path \(\ldots, T\DE(X,S_j,T_j), \ldots, T'\DE(X,R_k,T_k), \ldots\) from the root to a leaf (nodes
\(T\) and \(T'\) test~\(X\)). 
If no nodes between \(T\)
and \(T'\) on the path test variable \(X\), then \(\{R_k\}_k\) must be a partition of states \(S_j\). Moreover,
if \(T\) is the first node that tests \(X\) on the path, then \(\{S_j\}_j\)
must be a partition of {\em all} states for \(X\).
Discretized numeric variables are normally tested more than
once while satisfying the weak test-once property;
see Appendix~\ref{sec:numeric} for an illustration.

\begin{proposition}\label{prop:closed form}
Let $T$ be a decision graph, $\instance$ be an instance in class $c$, and $\instance[X]$ be the state of variable $X$ in instance~$\instance$.
Suppose $\Delta^c[T]$ is the class formula of $T$ and class $c$. 
The general reason $\iuq \instance \cdot \Delta^c[T]$ is given by the NNF circuit:\footnote{An NNF circuit is a DAG whose leaves are labeled with $\bot, \top$, or literals; and whose internal nodes are labelled with $\AND$ or $\OR$.}
\[
\Gamma^c[T] = \begin{cases}
\top & \text{if $T$ is a leaf with class $c$}\\
\bot & \text{if $T$ is a leaf with class $c' \not = c$}\\
\BAND_j(\Gamma^c[T_j] \OR \l) & \text{if $T$ has outgoing edges $ \xrightarrow{X, S_j} T_j$}\\
\end{cases}
\]
Here, $\l$ is the \(X\)-literal
$\{x_i \mid x_i \not \in S_j\}$ if $\instance[X] \not \in S_j$,
else $\l=\bot$.
\end{proposition}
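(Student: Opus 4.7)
The plan is to prove the statement by structural induction on the decision graph $T$. The base cases are immediate: if $T$ is a leaf labeled $c$, then $\Delta^c[T]=\top$ and $\iuq \instance \cdot \top=\top=\Gamma^c[T]$; if $T$ is a leaf labeled $c'\neq c$, then $\Delta^c[T]=\bot$ and $\iuq \instance \cdot \bot=\bot=\Gamma^c[T]$.

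For the inductive step, let $T$ test $X$ with outgoing edges $\xrightarrow{X,S_j}T_j$, so that $\Delta^c[T]=\BOR_j(l_{S_j}\AND \Delta^c[T_j])$, where $l_{S_j}$ denotes the $X$-literal for the states in $S_j$. Let $j^\ast$ be the unique index with $\instance[X]\in S_{j^\ast}$, and split $\instance=\instance[X]\cup\instance'$ where $\instance'$ omits $X$. Using commutativity (Proposition~\ref{prop:commutative}), I would first compute $\iuq\instance[X]\cdot\Delta^c[T]$ via Proposition~\ref{prop:alternative_gr}: this expands to $(\Delta^c[T]\,|\,\instance[X])\AND\BAND_{x\neq\instance[X]}(\ell_x\OR\Delta^c[T]\,|\,x)$, and each conditioning $\Delta^c[T]\,|\,x$ collapses to $\Delta^c[T_{j(x)}]\,|\,x$ for the branch $j(x)$ containing state $x$. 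Consolidating the per-state conjuncts branch by branch via the identity $\BAND_{x\in S_j}(\ell_x \OR \phi)=l_{\bar{S_j}}\OR\phi$ yields a conjunction indexed by branches: on branch $j^\ast$, the outer conjunct $\Delta^c[T_{j^\ast}]\,|\,\instance[X]$ absorbs every $\ell_x\OR\Delta^c[T_{j^\ast}]$ term, leaving only $\Delta^c[T_{j^\ast}]$; on each branch $j\neq j^\ast$, what remains is $l_{\bar{S_j}}\OR\Delta^c[T_j]$. Applying $\iuq\instance'$ next distributes over the outer conjunction (Proposition~\ref{prop:distribute-and-or}) and over each inner disjunction, since the literal $l_{\bar{S_j}}$ is left untouched by $\iuq\instance'$ via Proposition~\ref{prop:quantify-b}; invoking the inductive hypothesis $\iuq\instance\cdot\Delta^c[T_j]=\Gamma^c[T_j]$ on the surviving terms then yields exactly the stated formula $\BAND_j(\Gamma^c[T_j]\OR\l)$.

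The main obstacle is the weak test-once property, which allows $X$ to be re-tested inside a subgraph $T_j$: in that case $\Delta^c[T_j]\,|\,x$ is non-trivial and the disjuncts $l_{\bar{S_j}}$ and $\Delta^c[T_j]$ share the variable $X$, blocking a naive distribution of $\iuq$ across the disjunction. The key structural fact that rescues the argument is that any re-test of $X$ inside $T_j$ partitions only a subset of $S_j$, so the $X$-literals appearing within $\Delta^c[T_j]$ are always disjoint from $\bar{S_j}$; a careful case analysis exploiting this containment lets the consolidation step go through. A related subtlety is that when $\{S_j\}$ partitions only a proper subset of $X$'s states at an internal node, the equation $\Gamma^c[T_j]=\iuq\instance\cdot\Delta^c[T_j]$ holds only modulo the implicit domain restriction on $X$ inherited from the path to $T_j$, so the induction is best phrased relative to this valid range to keep the inductive invariant true at every node.
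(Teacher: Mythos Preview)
Your approach differs from the paper's in a way that creates a real gap. You start from the natural disjunctive form $\Delta^c[T]=\BOR_j(l_{S_j}\AND\Delta^c[T_j])$ and do a direct structural induction, expanding $\iuq\instance[X]$ via Proposition~\ref{prop:alternative_gr}. The paper instead first rewrites the class formula in the \emph{conjunctive} form $\Delta^c[T]=\BAND_j(\Delta^c[T_j]\OR l_{\bar S_j})$ (Lemma~\ref{lemma:decision graph NNF}), then proves a dedicated distribution lemma (Lemma~\ref{lemma:iuq or distribute}): $\iuq x_i\cdot(\alpha\OR\l)=\iuq x_i\cdot\alpha\OR\iuq x_i\cdot\l$ whenever $\l\models\l'$ for every $X$-literal $\l'$ in $\alpha$. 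Weak test-once guarantees exactly this hypothesis (since every $X$-literal inside $\Delta^c[T_j]$ has the form $l_{\bar S'_k}$ with $S'_k\subseteq S_j$, hence $l_{\bar S_j}\models l_{\bar S'_k}$). With that lemma in hand, $\iuq\instance$ distributes over the entire NNF and the closed form follows by applying $\iuq\instance$ literal-by-literal via Proposition~\ref{prop:quantify-b}; no explicit induction on the graph is needed.

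The gap in your argument is precisely at the re-test case you flag. Your claim that on branch $j^\ast$ the conjunct $\Delta^c[T_{j^\ast}]\!\mid\!\instance[X]$ ``absorbs every $\ell_x\OR\Delta^c[T_{j^\ast}]$ term, leaving only $\Delta^c[T_{j^\ast}]$'' is wrong when $X$ is re-tested: the per-state terms are $\ell_x\OR\Delta^c[T_{j^\ast}]\!\mid\! x$, and these do \emph{not} collapse to a single $\phi$ amenable to your identity $\BAND_{x\in S_j}(\ell_x\OR\phi)=l_{\bar S_j}\OR\phi$. Even if it did collapse to $\Delta^c[T_{j^\ast}]$, applying $\iuq\instance'$ (which omits $X$) afterward would yield $\iuq\instance'\cdot\Delta^c[T_{j^\ast}]$, not $\iuq\instance\cdot\Delta^c[T_{j^\ast}]=\Gamma^c[T_{j^\ast}]$; the $\iuq\instance[X]$ contribution to the subgraph is simply missing. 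Your proposed fix (phrase the induction relative to a restricted $X$-domain) may ultimately be workable, but it is nowhere near spelled out, and the consolidation you describe does not hold as stated. The paper's conjunctive rewrite plus Lemma~\ref{lemma:iuq or distribute} is the missing idea that makes the re-test case go through cleanly.
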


The following proposition identifies some
properties of the above closed form, which
have key computational implications that
we exploit in the next section. 

\begin{proposition}\label{prop:gr properties}
The NNF circuit in Proposition~\ref{prop:closed form} is locally fixated on instance $\instance$.
Moreover, every disjunction in this circuit has the form 
\(\l \OR \Delta\) where \(\l\) is an $X$-literal, and 
for every $X$-literal $\l'$ in \(\Delta\) we have
$\l' \neq \l$ and $\l \models \l'$.
\end{proposition}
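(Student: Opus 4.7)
My plan is to prove both claims by structural induction on the decision graph $T$, leveraging the closed-form expression of Proposition~\ref{prop:closed form} together with the weak test-once property.

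For local fixation, the leaf cases are vacuous since $\top$ and $\bot$ contribute no literals. In the inductive step, at an internal node $T$ testing variable $X$, the only literals newly introduced in $\Gamma^c[T] = \BAND_j(\Gamma^c[T_j] \OR \l_j)$ are the $\l_j$'s. By construction, $\l_j$ is a non-$\bot$ literal only when $\instance[X] \notin S_j$, in which case $\l_j = \{x_i : x_i \notin S_j\}$, so $\instance[X] \in \l_j$ and $\instance \models \l_j$. The literals inside each subcircuit $\Gamma^c[T_j]$ are handled by the induction hypothesis.

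For the second claim, each non-trivial disjunction $\l \OR \Delta$ in the circuit (trivial disjunctions $\Delta \OR \bot$ simplify to $\Delta$) is produced at some internal node $T$ testing a variable $X$, with $\Delta = \Gamma^c[T_j]$ for some $j$ satisfying $\instance[X] \notin S_j$ and $\l = \{x_i : x_i \notin S_j\}$. I need to show that every $X$-literal $\l'$ occurring in $\Delta$ satisfies the strict inclusion $\l \subsetneq \l'$, since that yields both $\l \neq \l'$ and $\l \models \l'$. Such an $\l'$ must be introduced at some descendant $T'$ of $T_j$ that also tests $X$. Fix any root-to-leaf path passing through $T$, $T_j$, and $T'$, and let $T^\star$ be the first $X$-testing node strictly below $T$ on this path. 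By the weak test-once property, the partition $\{R_k\}_k$ at $T^\star$ partitions $S_j$ into at least two cells, so every $R_k \subsetneq S_j$; moreover, since $\instance[X] \notin S_j \supseteq R_k$, every literal introduced at $T^\star$ is non-$\bot$ and has the form $\{x_i : x_i \notin R_k\} \supsetneq \l$. Iterating this reasoning along the successive $X$-tests from $T^\star$ to $T'$, each test refines the previously chosen cell, so every cell $R$ appearing at $T'$ is a proper subset of $S_j$, giving an $X$-literal $\l' = \{x_i : x_i \notin R\}$ with $\l \subsetneq \l'$, as required.

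The main obstacle is managing the refinement chain correctly in the DAG setting: I need to verify (i) that the weak test-once property chains through arbitrarily many intermediate $X$-tests so that every cell at $T'$ is strictly inside $S_j$ — here the requirement of at least two outgoing edges per internal node is essential in guaranteeing \emph{strict} inclusion at each refinement step — and (ii) that every $X$-literal appearing in $\Delta$ really is produced at some $X$-testing descendant of $T_j$, which is immediate from the recursive form of $\Gamma^c$.
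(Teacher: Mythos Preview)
Your proposal is correct and follows essentially the same approach as the paper. The paper's own proof is extremely terse: for local fixation it observes directly that every literal $\l = \{x_i \mid x_i \notin S_j\}$ with $\instance[X]\notin S_j$ contains $\instance[X]$, and for the second claim it simply asserts ``$S_k' \subset S_j$ by the weak test-once property'' without spelling out the chaining through intermediate $X$-tests. Your version is more careful on two points the paper glosses over: you explicitly iterate the weak test-once property along successive $X$-tests to obtain the strict inclusion at an arbitrary descendant $T'$ (using $n\geq 2$ to force strictness at each step), and you explicitly set aside the $\Gamma^c[T_j]\OR\bot$ conjuncts, which the paper's statement tacitly excludes since $\bot$ is not a literal. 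These are refinements rather than a different route.
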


\section{Computing Prime Implicants \& Implicates}
\label{sec:PI-IP}

Computing the prime implicants/implicates of 
Boolean formulas was studied extensively for decades; see,~e.g.,~\cite{1671510,10.1145/131214.131223,kean1990incremental}. 
The classical methods are based on
\textit{resolution} when computing the prime implicates of CNFs,
and \textit{consensus} when computing the prime implicants
of DNFs; see, e.g.,~\cite{gurvich1999generating,Crama2011BooleanF}. More modern approaches are based on passing encodings to SAT-solvers; see, e.g.,~\cite{DBLP:conf/ijcai/PrevitiIMM15,9643520,DBLP:conf/ijcai/Izza021}.
In contrast, the computation of prime implicants/implicates
of discrete formulas has received very little attention 
in the literature. One recent exception is~\cite{corr/abs-2007-01493} which showed how an algorithm for computing prime implicants of Boolean formulas can be used to compute simple prime implicants of discrete formulas given an appropriate encoding.
Computing prime implicants/implicates of NNFs also received relatively little attention; 
see~\cite{ramesh1997cnf,DBLP:conf/aaai/DarwicheJ22,ijcai/ColnetM22} for some exceptions. 
We next provide methods for computing variable-minimal prime implicants/implicates 
of some classes of discrete formulas that are relevant to GSRs and GNRs.

A set of terms \(S\) will be interpreted as a 
DNF \(\sum_{\tau \in S} \tau\)
and a set of clauses \(S\) will be interpreted as a CNF
\(\prod_{\sigma \in S} \sigma\).
If $S_1$ and $S_2$ are two sets of terms, then
\(
S_1 \times S_2 = 
\{\tau_1 \AND \tau_2 \mid \tau_1 \in S_1, \tau_2 \in S_2\}.
\)
For a set of terms/clauses $S$, $\subsum{S}$ denotes
the result of removing subsumed terms/clauses from $S$.

\def\ALGPI{2}
\def\CACHE{\text{CACHE}}

\begin{algorithm}[tb]
\begin{footnotesize}
\caption{$\GSR(\Delta)$ --- without Line~\ref{ln:GSR-vmin}, this is \textbf{Algorithm~\ALGPI} $\PI(\Delta)$
\label{alg:PI inc var-min}}
\begin{algorithmic}[1]
\Require NNF circuit $\Delta$ which satisfies the properties in Proposition~\ref{prop:gr properties}
\If{$\CACHE(\Delta) \not = \text{NIL}$}
\Return $\CACHE(\Delta)$
\ElsIf{$\Delta = \top$}
\Return $\{\top\}$
\ElsIf{$\Delta = \bot$}
\Return $\emptyset$
\ElsIf{$\Delta$ is a literal}
\Return $\{\Delta\}$
\ElsIf{$\Delta = \alpha \AND \beta$}
\State $S \gets \subsum{\GSR(\alpha) \times \GSR(\beta)}$ \label{ln:GSR-sub-and}
\ElsIf{$\Delta = \alpha \OR \beta$}
\State $S \gets \subsum{\GSR(\alpha) \cup \GSR(\beta)}$ \label{ln:GSR-sub-or}
\EndIf 
\State $S \gets \PIIVM{S}{\ivars{\Delta}}$ \label{ln:GSR-vmin}
\State $\CACHE(\Delta) \gets S$
\State \Return $S$
\end{algorithmic}
\end{footnotesize}
\end{algorithm}

\subsection{Computing General Sufficient Reasons}

Our first result is Algorithm~\ref{alg:PI inc var-min} which computes the variable-minimal prime
implicants of an NNF circuit that satisfies the properties
in Proposition~\ref{prop:gr properties} and, hence, is applicable to the 
general reasons of Proposition~\ref{prop:closed form}. If we remove Line~\ref{ln:GSR-vmin} from Algorithm~\ref{alg:PI inc var-min}, it becomes Algorithm~\ALGPI\ which computes all prime implicants instead of only the variable-minimal ones. Algorithm~\ALGPI\ is the 
same algorithm used to convert an NNF into a DNF (i.e., no consensus is invoked), yet 
the resulting DNF is guaranteed to be in prime-implicant
form. Algorithm~\ALGPI\ is justified by the following two results, where the 
first result generalizes Proposition~40 in \cite{marquis2000consequence}.

In the next propositions, $\sPI(\Delta)$
denotes the prime implicants of formula $\Delta$.
\begin{proposition}\label{prop:PI_AND}
$\sPI(\alpha \AND \beta) = 
\subsum{\sPI(\alpha) \times \sPI(\beta)}.$
\end{proposition}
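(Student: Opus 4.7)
The plan is to prove the equality by double inclusion, relying throughout on the standard fact that every implicant of a formula is implied by some prime implicant of it.

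\emph{Forward direction} ($\subseteq$): Let $\tau \in \sPI(\alpha \AND \beta)$. Since $\tau \models \alpha$ and $\tau \models \beta$, pick prime implicants $\tau_\alpha \in \sPI(\alpha)$ and $\tau_\beta \in \sPI(\beta)$ with $\tau \models \tau_\alpha$ and $\tau \models \tau_\beta$. Combine them into a single term $\tau_\alpha \AND \tau_\beta$, where the literals on any shared variable are intersected; this intersection is non-empty (and hence the product is a well-defined term) because $\tau$ implies both. Then $\tau \models \tau_\alpha \AND \tau_\beta \models \alpha \AND \beta$, so primality of $\tau$ forces $\tau \equiv \tau_\alpha \AND \tau_\beta$. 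Hence $\tau$ already appears, up to equivalence, in $\sPI(\alpha) \times \sPI(\beta)$; removing subsumed elements cannot eliminate it, because any element of the product strictly weaker than $\tau$ would itself be an implicant of $\alpha \AND \beta$ strictly weaker than $\tau$, contradicting primality.

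\emph{Reverse direction} ($\supseteq$): Let $\tau = \tau_\alpha \AND \tau_\beta$ survive $\subsum{\cdot}$ with $\tau_\alpha \in \sPI(\alpha)$ and $\tau_\beta \in \sPI(\beta)$. Clearly $\tau \models \alpha \AND \beta$ (and $\tau$ is consistent, else it would be subsumed by any other product element). For primality, suppose some term $\tau'$ satisfies $\tau \models \tau' \models \alpha \AND \beta$ with $\tau' \not\equiv \tau$. Extract $\tau'_\alpha \in \sPI(\alpha)$ and $\tau'_\beta \in \sPI(\beta)$ with $\tau' \models \tau'_\alpha$ and $\tau' \models \tau'_\beta$. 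Then $\tau'_\alpha \AND \tau'_\beta$ lies in $\sPI(\alpha) \times \sPI(\beta)$ and $\tau \models \tau' \models \tau'_\alpha \AND \tau'_\beta$, so it subsumes $\tau$. Because $\tau$ survives $\subsum{\cdot}$, this subsumer must be equivalent to $\tau$, which then forces $\tau' \equiv \tau$, a contradiction.

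The main subtlety is the discrete-case bookkeeping around the product $\tau_\alpha \AND \tau_\beta$: when $\tau_\alpha$ and $\tau_\beta$ share a variable, this product must be read as the term obtained by intersecting the two $X$-literals, and one needs to verify non-emptiness precisely in the situations where the product arises. In the forward direction this is automatic from the common witness $\tau$; in the reverse direction the chosen $\tau = \tau_\alpha \AND \tau_\beta$ is stipulated to be in the product set (so consistent by construction). Beyond this, the argument is the standard one used for the analogous Boolean identity (Proposition~40 in~\cite{marquis2000consequence}), with the only extra care being to distinguish ``subsumed and removed'' from ``subsumed only by an equivalent'', which is harmless since we work up to logical equivalence of prime implicants.
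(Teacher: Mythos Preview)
Your proof is correct and follows essentially the same double-inclusion argument as the paper: in both directions one extracts prime implicants $\tau_\alpha\in\sPI(\alpha)$ and $\tau_\beta\in\sPI(\beta)$ from a term implying $\alpha\AND\beta$, forms the product $\tau_\alpha\AND\tau_\beta$, and uses primality (respectively, survival under $\subsum{\cdot}$) to force equivalence. Your explicit handling of the shared-variable case (intersecting the two $X$-literals and checking non-emptiness via the common witness) is a welcome clarification that the paper's proof leaves implicit.
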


\begin{proposition}\label{prop:PI_OR}
For any disjunction $\alpha \OR \beta$ that satisfies the 
property of Proposition~\ref{prop:gr properties},
$\sPI(\alpha \OR \beta) = \subsum{\sPI(\alpha) \cup \sPI(\beta)}$.
\end{proposition}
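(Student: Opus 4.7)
Without loss of generality take $\alpha = \l$ to be the distinguished $X$-literal, so that $\alpha \OR \beta = \l \OR \Delta$ with every $X$-literal $\l'$ in $\Delta$ satisfying $\l \models \l'$ and $\l' \neq \l$. The plan is to reduce both inclusions to a single \emph{key lemma}: every implicant $\tau$ of $\l \OR \Delta$ satisfies $\tau \models \l$ or $\tau \models \Delta$. This lemma is what rules out any ``consensus'' prime implicants that might otherwise arise from a disjunction, and is precisely what makes the naive union with subsumption correct here but not in general.

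To prove the lemma, I would assume $\tau \not\models \l$ and show $\tau \models \Delta$. Fix any $w \models \tau$. Since $\tau \not\models \l$, either $\tau$ does not mention $X$ at all or its $X$-literal $\l_\tau$ is not contained in $\l$; in either case pick a state $x_0 \notin \l$ (from $\l_\tau \setminus \l$ when $X$ is present in $\tau$) and let $w^{\star}$ be $w$ with $X$ reset to $x_0$. Then $w^{\star} \models \tau$ but $w^{\star} \not\models \l$, so $w^{\star} \models \Delta$. Now compare $w$ and $w^{\star}$. If $w[X] \notin \l$, then $w \not\models \l$ and $\tau \models \l \OR \Delta$ gives $w \models \Delta$ directly. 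If instead $w[X] \in \l$, then because $\l \models \l'$ for every $X$-literal $\l'$ in $\Delta$ we have $w[X] \in \l'$, so every such $\l'$ is true at $w$; the set of literals of $\Delta$ satisfied at $w$ therefore contains the set satisfied at $w^{\star}$, and since $\Delta$ is an NNF and hence monotone in the set of satisfied leaves, $w^{\star} \models \Delta$ forces $w \models \Delta$. This monotonicity transport is the one place where the structural hypothesis is essential, and it is the main obstacle; everything else is combinatorial bookkeeping.

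Given the lemma the two inclusions follow routinely. For $\subseteq$, let $\tau \in \sPI(\l \OR \Delta)$. By the lemma $\tau \models \l$ or $\tau \models \Delta$. In the first case, $\l$ itself is an implicant of $\l \OR \Delta$ with $\tau \models \l$, so primality forces $\tau \equiv \l$. In the second case, any strictly weaker implicant of $\Delta$ would still implicate $\l \OR \Delta$, so primality of $\tau$ in $\l \OR \Delta$ lifts to $\tau \in \sPI(\Delta)$. In either case $\tau \in \sPI(\l) \cup \sPI(\Delta)$, and primality in $\l \OR \Delta$ prevents $\tau$ from being subsumed by any other element of that union, so $\tau \in \subsum{\sPI(\l) \cup \sPI(\Delta)}$. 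For $\supseteq$, take $\tau \in \subsum{\sPI(\l) \cup \sPI(\Delta)}$; it is clearly an implicant of $\l \OR \Delta$, and if some strictly weaker implicant $\tau'$ existed the lemma would yield $\tau' \models \l$ or $\tau' \models \Delta$, each of which contradicts either primality of $\tau$ in its source formula or the non-subsumption guaranteed by the $\subsum$ operator.
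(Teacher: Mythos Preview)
Your proposal is correct and follows essentially the same approach as the paper: both isolate the key lemma that every implicant of $\l \OR \Delta$ implies $\l$ or implies $\Delta$, and both establish it via NNF monotonicity together with the hypothesis $\l \models \l'$ for all $X$-literals $\l'$ in $\Delta$. The only cosmetic difference is that the paper argues by contradiction (assume $w \not\models \Delta$, change $X$ to a state outside $\l$, and show the resulting $w'$ violates $\tau \models \l \OR \Delta$), whereas you argue directly (show $w^{\star} \models \Delta$ and transport truth back to $w$); these are dual presentations of the same monotonicity transfer, and the remaining inclusion arguments are identical in structure.
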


We will next explain 
Line~\ref{ln:GSR-vmin} of Algorithm~\ref{alg:PI inc var-min}, $S \gets \PIIVM{S}{\ivars{\Delta}}$, which is responsible for
pruning prime implicants that are not variable-minimal (hence, computing GSRs).
Here, \(\Delta\) is a node in the NNF circuit passed in the 
first call to Algorithm~\ref{alg:PI inc var-min},
and \(\ivars{\Delta}\) denotes
variables that appear only in the sub-circuit rooted
at node \(\Delta\). 
Moreover, $\PIIVM{S}{V}$ is the set of terms
obtained from terms $S$ by removing every term $\tau \in S$ that satisfies $\SetV(\tau) \supset \SetV(\tau')$ and $V \cap (\SetV(\tau) \setminus \SetV(\tau')) \neq \emptyset$
for some other term $\tau' \in S$.\footnote{The condition $V \cap (\SetV(\tau) \setminus \SetV(\tau')) \neq \emptyset$ is trivially satisfied when $\Delta$ is the root of the NNF circuit since $V$ will include all circuit variables in
this case.}
That is, term $\tau$ will be removed only if
some variable $X$ in $\vars(\tau)\setminus \vars(\tau')$ 
appears only in the sub-circuit rooted at node
$\Delta$ (this ensures that term $\tau$ will not participate in constructing
any variable-minimal prime implicant).
This incremental pruning technique is
enabled by the local fixation
property (Definition~\ref{def:fixation}).

\begin{proposition}\label{prop:PI inc-var-min}
Algorithm~\ref{alg:PI inc var-min}, $\GSR(\Delta)$, returns the 
variable-minimal prime implicants of NNF circuit $\Delta$.
\end{proposition}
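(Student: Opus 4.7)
The plan is a two-stage structural induction on the NNF circuit $\Delta$. Stage one shows that Algorithm~\ALGPI\ (Algorithm~\ref{alg:PI inc var-min} without Line~\ref{ln:GSR-vmin}) computes exactly $\sPI(\Delta)$ at every node. Stage two shows that the incremental pruning of Line~\ref{ln:GSR-vmin} preserves every variable-minimal prime implicant of the root and that the closing $\PIIVM$ call at the root extracts precisely these.

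For stage one, the base cases ($\top$, $\bot$, a literal) are immediate. For an $\AND$-node, Line~\ref{ln:GSR-sub-and} realizes the recursion of Proposition~\ref{prop:PI_AND}; the local fixation on $\instance$ assumed in Proposition~\ref{prop:gr properties} ensures each product $\tau_\alpha \AND \tau_\beta$ is consistent, since any shared variable $X$ has both literals containing $\instance[X]$. For an $\OR$-node, Proposition~\ref{prop:PI_OR} gives Line~\ref{ln:GSR-sub-or}, provided the structural precondition of Proposition~\ref{prop:gr properties} holds: every disjunction has the form $\ell \OR \Delta'$ with each $X$-literal in $\Delta'$ strictly weaker than $\ell$. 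This property descends to sub-circuits, so the induction closes.

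The heart of stage two is a \emph{dominance lemma}: if $\tau,\tau' \in \sPI(\Delta)$ satisfy $\SetV(\tau') \subsetneq \SetV(\tau)$ and some variable $X \in \SetV(\tau) \setminus \SetV(\tau')$ lies in $\ivars{\Delta}$, then no variable-minimal prime implicant of the root uses $\tau$ in its derivation. To prove it, fix any prime implicant $\pi$ of the root whose derivation via Propositions~\ref{prop:PI_AND} and~\ref{prop:PI_OR} selects $\tau$ at node $\Delta$, and substitute $\tau'$ for $\tau$. Walking back up the circuit, each enclosing $\AND$ still yields an implicant (monotonicity of conjunction), and each enclosing $\OR$ picks the same disjunct as before; the resulting term $\pi^{\star}$ implies the root formula. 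Since $X \in \ivars{\Delta}$ appears in no other branch of the derivation, $\SetV(\pi^{\star}) \subseteq \SetV(\pi) \setminus \{X\}$. Extracting any prime implicant $\pi'$ of the root with $\pi' \models \pi^{\star}$ then gives $\SetV(\pi') \subsetneq \SetV(\pi)$, contradicting variable-minimality of $\pi$.

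Combining the two stages by a joint induction, the set $S$ computed at each node is a subset of $\sPI(\Delta)$ which still contains every term needed to witness a variable-minimal PI of the root. Since $\ivars{\Delta_0}$ equals all circuit variables at the root, the final $\PIIVM$ call strips away any remaining PI whose variable set strictly contains another's, leaving exactly the variable-minimal prime implicants. The main obstacle I foresee is verifying rigorously that the invariant $S \subseteq \sPI(\Delta)$ survives the pruning step --- that a term $\tau'_\alpha \AND \tau_\beta$ surviving in $\subsum{S_\alpha \times S_\beta}$ is still a PI of $\alpha \AND \beta$, even though a pruned $\tau_\alpha$ could have blocked it in the full product. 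This reduces to incomparability under implication: the pruned $\tau_\alpha$ cannot imply its dominating witness $\tau'_\alpha$ (the extra $X$-literal in $\tau_\alpha$ cannot be forced from a term avoiding $X$), and $\tau'_\alpha$ cannot imply $\tau_\alpha$ without contradicting primality of $\tau_\alpha$; hence their respective products with any $\tau_\beta$ remain incomparable, and removing $\tau_\alpha$ does not elevate a previously subsumed combination to PI status.
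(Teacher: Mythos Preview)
Your two-stage plan and the dominance lemma track the paper's argument closely: stage one is exactly Propositions~\ref{prop:PI_AND} and~\ref{prop:PI_OR}, and your dominance lemma is the paper's ``first direction'' (every variable-minimal prime implicant of the root survives all pruning).  The divergence is in how you establish the converse---that every survivor at the root is a variable-minimal prime implicant.

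You try to carry the invariant $S \subseteq \sPI(\gamma)$ through every node, and you correctly flag this as the crux.  But the last-paragraph argument does not close it.  You establish only that the pruned $\tau_\alpha$ and its dominating witness $\tau'_\alpha$ are incomparable, hence that $\tau'_\alpha \wedge \tau_\beta$ was never subsumed by $\tau_\alpha \wedge \tau_\beta$.  That says nothing about a \emph{third} product $\rho = \tau''_\alpha \wedge \tau''_\beta$ (with $\tau''_\alpha \neq \tau_\alpha$ surviving the pruning at $\alpha$) that happened to be subsumed in the full $\sPI(\alpha) \times \sPI(\beta)$ only by some $\tau_\alpha \wedge \tau_\beta$; such a $\rho$ would now survive $\subsum{S_\alpha \times S_\beta}$ while failing to be prime.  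Your reduction to pairwise incomparability of $\tau_\alpha$ and $\tau'_\alpha$ is therefore too narrow to justify the invariant.

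The paper avoids this entirely by \emph{not} maintaining $S \subseteq \sPI(\gamma)$.  Its only inductive invariant is the weak lemma that every term returned at a node is an \emph{implicant} of that node (immediate by monotonicity).  The converse at the root is then argued directly: a survivor $\tau$ is an implicant; if some distinct prime implicant $\tau'$ of the root subsumed it, then either $\tau'$ is variable-minimal---so by the first direction $\tau' \in \GSR(\Delta)$ and the root-level $\subsum$ would have killed $\tau$---or $\tau'$ is not, whence some variable-minimal prime implicant has strictly fewer variables than $\tau$, contradicting survival of the final $\PIIVM$ (since at the root $\ivars{\Delta}$ contains all variables).  Swapping your strong invariant for this root-level argument repairs the proof; everything else in your outline (including the use of local fixation to keep the substituted term $\pi^\star$ consistent, which you should make explicit) goes through.
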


\subsection{Computing General Necessary Reasons}

We can convert an NNF circuit into a CNF using a dual
of Algorithm~\ALGPI\ but the result will not be
in prime-implicate form, even for ciruits  that satisfy 
the properties 
Proposition~\ref{prop:gr properties}.\footnote{The number of clauses in this CNF will be no more than the number of NNF nodes if the NNF is the general reason
of a decision tree (i.e., the NNF has a tree structure).}
Hence,
we next propose a generalization of the Boolean resolution
inference rule to discrete variables, which can be
used to convert a CNF into its prime-implicate 
form.
Recall first that Boolean resolution 
derives the clause
\(\alpha + \beta\) from the clauses \(x \OR \alpha\)
and \(\NOT{x} + \beta\) where \(X\) is a Boolean variable.

\begin{definition}\label{def:resolution}
Let $\alpha = \l_1 \OR \sigma_1$, 
$\beta = \l_2 \OR \sigma_2$ be two clauses
where $\l_1$ and $\l_2$ are \(X\)-literals s.t. $\l_1 \not \models \l_2$ and $\l_2 \not \models \l_1$.
If \(\sigma = (\l_1 \AND \l_2) \OR \sigma_1 \OR \sigma_2 \neq \top\), then
the \(X\)-resolvent of clauses $\alpha$ and $\beta$ is defined as the clause equivalent to \(\sigma\).
\end{definition}
We exclude the cases $\l_1 \models \l_2$ and
$\l_2 \models \l_1$ to ensure that the resolvent is not
subsumed by clauses $\alpha$ and $\beta$.
If \(\sigma = \top\), it cannot be represented by
clause since a clause is a disjunction of literals
over distinct variables so it cannot be trivial.

\begin{proposition} \label{prop:resolution}
Closing a (discrete) CNF under resolution and removing subsumed clauses
yields the CNF's prime implicates.
\end{proposition}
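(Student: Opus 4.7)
My plan is to establish the two expected directions: \emph{soundness}, every clause appearing in the closure under resolution is an implicate of the original CNF $\Delta$ (so the subsumption-minimization produces only implicates); and \emph{completeness}, every prime implicate of $\Delta$ is (equivalent to) some clause that survives the minimized closure. Together these two facts identify the minimized closure with the set of prime implicates of $\Delta$.

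\textbf{Soundness.} Given clauses $\alpha = \l_1 \OR \sigma_1$ and $\beta = \l_2 \OR \sigma_2$ both implied by $\Delta$, I would verify directly that the resolvent $\sigma = (\l_1 \AND \l_2) \OR \sigma_1 \OR \sigma_2$ is implied too. Let $w$ be any model of $\Delta$ and let $x$ be the state $w$ assigns to $X$. If $x \in \l_1 \cap \l_2$ then $\l_1 \AND \l_2$ holds at $w$; if $x \in \l_1 \setminus \l_2$ then $\l_2$ fails at $w$, so $\beta$ forces $\sigma_2$; symmetrically when $x \in \l_2 \setminus \l_1$; and if $x$ lies in neither set, both $\sigma_1$ and $\sigma_2$ are forced. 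In every case $\sigma$ holds at $w$. Since subsumption preserves implication, removing subsumed clauses preserves soundness.

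\textbf{Completeness.} The leverage here is a short observation: by soundness, every clause in the closure is an implicate of $\Delta$, so if the closure ever contains a clause $\gamma$ with $\gamma \models \pi$ for a prime implicate $\pi$, then primality of $\pi$ (no distinct stronger implicate) forces $\gamma = \pi$; hence $\pi$ itself appears and survives subsumption pruning. It therefore suffices to derive, for each prime implicate $\pi$, \emph{some} clause subsuming $\pi$. I would do this by induction on $|\SetV(\Delta) \setminus \SetV(\pi)|$: for $X \in \SetV(\Delta)\setminus \SetV(\pi)$, the core lemma is that iterated $X$-resolution among clauses of the current closure, followed by subsumption pruning, yields a CNF semantically equivalent to the discrete forgetting of $X$ from $\Delta$ over the remaining variables, and this CNF still has $\pi$ as a prime implicate. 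Applying the lemma to every variable outside $\SetV(\pi)$ reduces to a CNF whose variables are contained in $\SetV(\pi)$; a direct combinatorial check over those finitely many clauses then exhibits a resolvent subsuming $\pi$.

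\textbf{Main obstacle.} The hard step is the $X$-elimination lemma just described. Unlike Boolean resolution, where $x$ and $\NOT x$ always resolve to drop $X$ entirely, the discrete rule of Definition~\ref{def:resolution} leaves a residual $X$-literal $\l_1 \AND \l_2$ whenever the two parent $X$-literals overlap. Fully eliminating $X$ therefore requires a carefully ordered cascade of resolutions that progressively shrinks the $X$-literals appearing in the current closure, together with an argument that the side-conditions $\l_1 \not\models \l_2$, $\l_2 \not\models \l_1$, and $\sigma \neq \top$ only exclude resolvents that are already subsumed by an available clause and hence cost nothing. Once this lemma is secured, the rest of the completeness argument follows the classical Quine--Blake template and the proposition follows.
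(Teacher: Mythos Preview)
Your route is quite different from the paper's. You aim to prove completeness directly on the CNF side by Davis--Putnam style variable elimination; the paper instead passes to the dual problem---prime \emph{implicants} of the DNF $\NOT\Delta$ via a discrete \emph{consensus} operation (Definition~\ref{def:consensus})---proves completeness there by a maximal-counterexample argument that crucially uses a multi-term consensus (Definition~\ref{def:multi consensus}), and then transfers back by a one-line duality. The paper's argument is not an induction on variables: it assumes a prime implicant $\tau_0$ is missing from the closure, picks a \emph{longest} term $\tau_m$ that implies $\tau_0$ yet is not subsumed by anything in the closure, and shows that a consensus of terms already in the closure must subsume $\tau_m$, a contradiction.

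There is a genuine gap in your plan, beyond the acknowledged obstacle. Your induction on $|\SetV(\Delta)\setminus\SetV(\pi)|$ bottoms out when $\SetV(\Delta)=\SetV(\pi)$, and the promised ``direct combinatorial check'' there is not direct: when $\pi$ mentions every variable of the reduced CNF, deriving a clause that subsumes $\pi$ still requires full discrete resolution on all those variables---this is the original completeness problem on a smaller instance, not a finite check. (For ternary $X,Y$, take $\Delta'=(x_{12}\OR y_{12})\AND(x_{13}\OR y_{12})\AND(x_{12}\OR y_{13})\AND(x_{13}\OR y_{13})$ with prime implicate $\pi=x_1\OR y_1$: two rounds of resolution on different variables are needed.) Moreover, your $X$-elimination lemma, as stated, is essentially the special case of completeness for $X$-free implicates; you correctly flag it as the hard step but offer no independent argument for it. A repaired variable-elimination proof would need (i) a self-contained demonstration that for any family of clauses whose $X$-literals jointly cover all states of $X$, a cascade of pairwise resolutions yields a clause subsuming the disjunction of their $X$-free parts, and (ii) a genuine base-case argument---typically refutation completeness for $\Delta\AND\NOT\pi$ plus a lifting lemma. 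The paper's multi-term consensus handles (i) in one stroke, and its maximal-length argument removes the need for (ii) entirely.
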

 
The following proposition shows that we can incrementally
prune clauses that are not variable-minimal after each resolution step. This is significant computationally and is enabled by the property of local fixation (Definition~\ref{def:fixation}) which is satisfied by the general reasons in Proposition~\ref{prop:closed form} and their CNFs.

\begin{proposition}\label{prop:inc vd clauses}
Let $S$ be a set of clauses (i.e., CNF) that is locally
fixated. For any clauses $\sigma$ and $\sigma'$ in $S$, if $\SetV(\sigma') \subset \SetV(\sigma)$, then the variable-minimal prime implicates of $S$ are the variable-minimal prime implicates of $S \setminus \{\sigma\}$.
\end{proposition}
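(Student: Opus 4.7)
The plan is to invoke Proposition~\ref{prop:resolution}, which characterizes the prime implicates of a CNF as its non-subsumed closure under discrete resolution. Two ingredients will do the work: a variable-monotonicity lemma for discrete resolution on locally fixated clauses, and a ``small witness'' prime implicate extracted from $\sigma'$.

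First I will prove the monotonicity lemma: if $\alpha, \beta$ are locally fixated clauses and $\gamma$ is their $X$-resolvent in the sense of Definition~\ref{def:resolution}, then $\SetV(\gamma) = \SetV(\alpha) \cup \SetV(\beta)$. The only subtle point is that variable $X$ survives in $\gamma$: both $X$-literals $\ell_1,\ell_2$ contain the state $\instance[X]$, so their intersection $\ell_1 \cap \ell_2$ is a non-empty proper subset of the states of $X$ and appears as an honest $X$-literal in $\gamma$. Local fixation is preserved along resolution, so induction on derivation trees yields $\SetV(c) = \bigcup_i \SetV(c_i)$ for any clause $c \in \text{Res}(S)$ derived from source clauses $c_1,\dots,c_k \in S$. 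Writing $S' = S \setminus \{\sigma\}$, any clause in $\text{Res}(S) \setminus \text{Res}(S')$ must use $\sigma$ in every derivation and hence has variable set containing $\SetV(\sigma) \supsetneq \SetV(\sigma')$. The second ingredient is immediate: since $\sigma' \in S'$, Proposition~\ref{prop:resolution} applied to $S'$ yields some $\hat c \in \text{PI}(S')$ with $\hat c \models \sigma'$ and $\SetV(\hat c) \subseteq \SetV(\sigma') \subsetneq \SetV(\sigma)$.

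With these in hand, the equality $\text{var-min PI}(S) = \text{var-min PI}(S')$ splits into two matched arguments. For $\text{var-min PI}(S) \subseteq \text{var-min PI}(S')$: given $c \in \text{var-min PI}(S)$, if $c \notin \text{Res}(S')$ then the monotonicity lemma gives $\SetV(c) \supsetneq \SetV(\sigma')$ and the analogue of $\hat c$ inside $\text{PI}(S)$ (any prime implicate of $S$ dominating $\sigma'$) contradicts variable-minimality of $c$; hence $c \in \text{Res}(S')$, and since $c$ is not subsumed in the larger $\text{Res}(S)$ it cannot be subsumed in $\text{Res}(S')$ either, giving $c \in \text{PI}(S')$; variable-minimality transfers because any $c' \in \text{PI}(S')$ with $\SetV(c') \subsetneq \SetV(c)$ is an implicate of $S$ whose dominating prime implicate of $S$ has $\SetV$ strictly inside $\SetV(c)$, contradicting $c \in \text{var-min PI}(S)$. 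For the converse, given $c \in \text{var-min PI}(S')$, any witness $d \in \text{Res}(S)$ that either subsumes $c$ or is a prime implicate of $S$ with $\SetV(d) \subsetneq \SetV(c)$ satisfies $\SetV(d) \subseteq \SetV(c)$; if $d \in \text{Res}(S')$ it contradicts primality or variable-minimality of $c$ in $\text{PI}(S')$, and if $d \notin \text{Res}(S')$ the monotonicity lemma forces $\SetV(c) \supsetneq \SetV(\sigma')$ so that $\hat c$ supplies a prime implicate of $S'$ with strictly smaller variable set, again contradicting $c \in \text{var-min PI}(S')$.

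The main obstacle will be the converse inclusion: one must carefully split on whether the hypothetical witness already lives in $\text{Res}(S')$ (where primality and variable-minimality of $c$ in $\text{PI}(S')$ close the case) or genuinely uses $\sigma$, since only in the second case does the monotonicity lemma produce a variable set large enough for the $\sigma'$-witness $\hat c$ to bite. The monotonicity lemma itself, which hinges on the non-trivial intersection of two locally fixated $X$-literals, is the main technical input and is precisely what the local fixation hypothesis is designed to enable.
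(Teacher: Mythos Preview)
Your proposal is correct and follows essentially the same approach as the paper. The paper isolates your monotonicity observation as two lemmas (that the $X$-resolvent of two locally fixated clauses satisfies $\SetV(\gamma)=\SetV(\alpha)\cup\SetV(\beta)$, and hence that any descendant resolvent of $\sigma$ has variable set containing $\SetV(\sigma)$), and then runs the same two-direction argument with the same case split on whether a derivation uses $\sigma$; your packaging via the witness $\hat c$ and the unified treatment of the subsumption and variable-minimality cases in the converse direction is a mild reorganization of the same ideas.
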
 

\def\GNR{\text{GNR}}
In summary, to compute GNRs, we first
convert the general reason in Proposition~\ref{prop:closed form} into a CNF,
then close the CNF under resolution while removing subsumed clauses and ones that are not variable-minimal after each resolution step.

\section{Conclusion}
\label{sec:conclusion}
We considered the notions of sufficient, necessary and
complete reasons which have been playing a fundamental role
in explainable AI recently. We provided generalizations of
these notions for classifiers with non-binary
features (discrete or discretized). We argued that these generalized
notions have more explanatory power and reveal more information
about the underlying classifier. We further provided results
on the properties and computation of these new notions.

\section*{Acknowledgments}
This work has been partially supported by NSF 
grant ISS-1910317.

%
%
%
\bibliographystyle{splncs04}
\bibliography{DA_AJ}

\clearpage
\appendix

\section{Proofs}
\label{sec:proofs}

\subsection*{Proposition~\ref{prop:alternative_gr}}

We prove Proposition~\ref{prop:alternative_gr} and~\ref{prop:selection semantics} first, which do not depend on Propositions~\ref{prop:commutative},~\ref{prop:forall-relation},~and~\ref{prop:gr semantics}.

\begin{proof}[of Proposition~\ref{prop:alternative_gr}]
The proof will use the following observations:
\begin{eqnarray*}
\text{(A)} & \w \models x_i \AND \Delta \text{ only if } \w \models \Delta|x_i \\
\text{(B)} & \w \models x_i \AND (\Delta|x_i) \text{ only if } \w \models \Delta \end{eqnarray*}
(A)~is justified as follows:
 $\w \models \Delta$ iff $\Delta | \w = \top$, 
 and $(\Delta | x_i) | \w = \Delta | \w$ since $\w \models x_i$; thus, $(\Delta | x_i) | \w = \top$ and $\w \models \Delta | x_i$.
(B)~is justified as follows:
$\w \models \Delta | x_i$ implies $(\Delta | x_i) | \w = \top$ and $\w \models x_i$ implies $\Delta | \w = (\Delta | x_i) | \w$; hence, $\Delta | \w = \top$ and $\w \models \Delta$.

We next prove both directions of the equivalence while noting that
$\ell_j$ in the proposition statement is equivalent to $\NOT{x_j}$.
\begin{description}
\item $\iuq x_i \cdot \Delta \models (\Delta | x_i) \AND \BAND_{j \not = i}( \NOT{x_j} \OR (\Delta | x_j))$.
Suppose $\w \models \iuq x_i \cdot \Delta$.
Then $\w \models\Delta \AND (\Delta | x_i)$ by Definition~\ref{def:forall2}. 
If $\w \models x_k$ for some \(k\), then 
(1)~$\w \models (\NOT{x_j} \OR \Delta | x_j)$ for all $j \not = k$ since $x_k \models \NOT{x_j}$ and
(2)~$\w \models (\NOT{x_k} \OR \Delta | x_k)$ since $\w \models \Delta | x_k$ which follows from $\w \models \Delta$ and $\w \models x_k$ by~(A).
Hence, $\w \models (\NOT{x_k} \OR \Delta | x_k)$ for all $k$ and,
therefore, $\w \models \BAND_{j \not = i} (\NOT{x_j} \OR (\Delta | x_j))$
and $\w \models (\Delta | x_i) \AND \BAND_{j \not = i}( \NOT{x_j} \OR (\Delta | x_j))$. Hence,
$\iuq x_i \cdot \Delta \models (\Delta | x_i) \AND \BAND_{j \not = i}( \NOT{x_j} \OR (\Delta | x_j))$.

\item $(\Delta | x_i) \AND \BAND_{j \not = i}( \NOT{x_j} \OR (\Delta | x_j)) \models \iuq x_i \cdot \Delta$.
Suppose
$\w \models (\Delta | x_i) \AND \BAND_{j \not = i}( \NOT{x_j} \OR (\Delta | x_j))$. 
If $\w \models x_i$, then $\w \models \Delta$ since 
$\w \models \Delta|x_i$ and given~(B). 
If $\w \not \models x_i$, then $\w \models x_k$ for some \(k \neq i\), and
$\w \models \Delta | x_k$ since $\w \models (\NOT{x_k} \OR (\Delta | x_k))$, which implies $\w \models \Delta$ given~(B). 
Hence, $\w \models \Delta$ in either case and also
$\w \models \Delta \AND \Delta | x_i = \iuq x_i \cdot \Delta.$
Therefore, 
$(\Delta | x_i) \AND \BAND_{j \not = i}( \NOT{x_j} \OR (\Delta | x_j)) \models \Delta \AND \Delta | x_i$. 
\end{description}
\end{proof}

\subsection*{Proposition~\ref{prop:selection semantics}}

\begin{proof}[of Proposition~\ref{prop:selection semantics}]
We first prove the semantics of \(\forall \tau \cdot \Delta\) 
and then \(\iuq \tau \cdot \Delta\) by
induction on the length of simple term \(\tau\).

\vspace{2mm}
\noindent \textit{Semantics of \(\forall \tau \cdot \Delta\).}

\begin{description}
\item Base case: \(\tau = x_i\).

By definition of \(\forall\), 
a world $\w \models \forall x_i \cdot \Delta$
iff 
$\w \models (\Delta | x_i) \AND \BAND_{j \not = i} 
(x_i \OR \Delta | x_j).$ 
If $\w \models x_i$, then $\w \models \forall x_i \cdot \Delta$
iff $\w \models \Delta$ by observations (A) and (B) in the proof of Proposition~\ref{prop:alternative_gr}. 
If $\w \not \models x_i$, then $\w \models \forall x_i \cdot \Delta$  
iff $\w \models \Delta | x_j$ for all $j$. 
Hence, $\w \models \forall x_i \cdot \Delta$ iff
$\w \models \Delta$ and ($\w \not \models x_i$ only if
$\w \models \Delta | x_j$ for all $j$).
The condition ``$\w \models \Delta | x_j$ for all $j$''
is equivalent to 
``$\w' \models \Delta$
for \(\w'\) obtained from \(\w\) by changing its state $x_j$ 
if $x_j \neq x_i$.'' 
If $\w \models x_i$, the previous property holds trivially
as there is no such $\w'$.
Hence, $\w \models \forall x_i \cdot \Delta$ iff
$\w \models \Delta$ and $\w' \models \Delta$
for \(\w'\) obtained from \(\w\) by changing its state $x_j$ to any other state
if $x_j \neq x_i$.
The semantics of $\forall x_i \cdot \Delta$ holds.

\item Inductive step: \(\tau = x_i \AND \tau'\).

Suppose the proposition holds for \(\forall \tau' \cdot \Delta\).
We next show that it holds for \(\forall \tau \cdot \Delta\).
Let $\Gamma = \forall \tau' \cdot \Delta$. By the base case,
$\w \models \forall x_i \cdot \Gamma$ iff
(1)~$\w \models \Gamma$ and
(2)~$\w' \models \Gamma$ for \(\w'\) obtained from \(\w\) by changing its state $x_j$ if $x_j \neq x_i$.
By the induction hypothesis, 
(1)~can be replaced by 
``$\w \models \Delta$ and
$\w' \models \Delta$ for \(\w'\) obtained from \(\w\) by changing
the states of variables set differently in $\tau'$.''
Moreover, (2)~can be replaced by
``$\w' \models \Delta$ and $\w'' \models \Delta$ 
for \(\w'\) obtained from \(\w\) by changing its state $x_j$ if $x_j \neq x_i$ and for
$\w''$ obtained from \(\w'\) by changing the states of variables set 
differently in $\tau'$.''
Replacing (1), (2) as suggested above gives:
$\w \models \forall \tau \cdot \Delta$ iff 
$\w \models \Delta$ and
$\w' \models \Delta$ for \(\w'\) obtained from \(\w\) by changing the states of variables set differently in $\tau$.
The semantics of \(\forall \tau \cdot \Delta\) holds.
\end{description}

\noindent \textit{Semantics of \(\iuq \tau \cdot \Delta\).}

\begin{description}
\item Base case \(\tau = x_i\).

By definition of \(\iuq\),  
$\w \models \iuq x_i \cdot \Delta$ iff 
$ \w \models\Delta \AND (\Delta | x_i)$. 
We next prove: if \(\w \models \Delta\), then
\(\w \models \Delta | x_i\) is equivalent to
``$\w' \models \Delta$ for $\w'$ obtained by setting 
$X$ to $x_i$ in $\w$,'' which proves the semantics
of \(\iuq x_i \cdot \Delta\).
Suppose \(\w \models \Delta\).
We next show both directions of the equivalence.
\begin{description}
\item Suppose \(\w \models \Delta | x_i\) and let \(\w'\) be a
world obtained by setting variable $X$ to $x_i$ in world $\w$.
Then $\w' \models \Delta | x_i$ given 
$\w \models \Delta | x_i$ and since 
$\Delta | x_i$ does not mention variable $X$. 
Hence, $\w' \models \Delta$ by observation (B) in the proof of Proposition~\ref{prop:alternative_gr}.

\item Suppose $\w' \models \Delta$ for $\w'$ obtained by setting 
$X$ to $x_i$ in $\w$.
Then $\w' \models \Delta | x_i$ by observation (A) in the proof of Proposition~\ref{prop:alternative_gr}. 
Moreover, $\w \models \Delta | x_i$ given $\w' \models \Delta | x_i$
and since $\Delta | x_i$ does not mention variable $X$.
\end{description}
This proves the semantics of \(\iuq x_i \cdot \Delta\).

\item Inductive step: \(\tau = x_i \AND \tau'\).

Suppose the proposition holds for \(\iuq \tau' \cdot \Delta\).
We next show that it holds for \(\iuq \tau \cdot \Delta\).
Let $\Gamma = \iuq \tau' \cdot \Delta$. By the base case,
$\w\models\iuq x_i \cdot \Gamma$ iff
(1)~$\w \models \Gamma$ and
(2)~$\w' \models \Gamma$ for \(\w'\) obtained from \(\w\) by setting
\(X\) to  $x_i$. By the induction hypothesis,
(1)~can be replaced by ``\(\w \models \Delta\) and \(\w' \models \Delta\) for \(\w'\) obtained from \(\w\) by setting some variables to their states in \(\tau'\).''
Moreover, (2)~can be replaced by ``\(\w' \models \Delta\) 
and \(\w'' \models \Delta\)
for \(\w'\) obtained from \(\w\) by setting \(X\) to \(x_i\)
and \(\w''\) obtained from \(\w'\) by
setting some variables to their states in \(\tau'\).''
Replacing (1), (2) as suggested above gives:
$\iuq \tau \cdot \Delta$ iff
$\w \models \Delta$ and
$\w' \models \Delta$
for \(\w'\) obtained from \(\w\) by setting some variables to their states in $\tau,$ which proves the semantics of 
\(\iuq \tau \cdot \Delta\). 
\end{description}
\end{proof}

\subsection*{Proposition~\ref{prop:commutative}}

\begin{proof}[of Proposition~\ref{prop:commutative}]
We have:
\begin{align*}
\iuq y \cdot (\iuq  x \cdot \Delta) &= \iuq y \cdot (\Delta \AND \Delta | x)\\
&= (\Delta \AND \Delta | x) \AND ((\Delta \AND \Delta | x) | y)\\
&= (\Delta \AND \Delta | x) \AND (\Delta | y \AND \Delta | x, y)\\
&= (\Delta \AND \Delta | y) \AND (\Delta | x \AND \Delta | y, x)\\
&= (\Delta \AND \Delta | y) \AND ((\Delta \AND \Delta | y) | x)\\
&= (\iuq  y \cdot \Delta) \AND ((\iuq  y \cdot \Delta) | x) \\
&= \iuq x \cdot (\iuq  y \cdot \Delta). 
\end{align*}
\end{proof}

\subsection*{Proposition~\ref{prop:forall-relation}}

\begin{proof}[of Proposition~\ref{prop:forall-relation}]
It suffices to prove that $\forall \instance \cdot \Delta \models \iuq \instance \cdot \Delta \models \Delta$ since $\instance \models \forall \instance \cdot \Delta$~\cite{jair/DarwicheM21}.
By multiple applications of Definition~\ref{def:forall2}, $\iuq \instance \cdot \Delta$ is equivalent to $\Delta \AND \Gamma$ for some formula $\Gamma$. Thus, $\iuq \instance \cdot \Delta \models \Delta$. 
Moreover, $\forall \instance \cdot \Delta \models \iuq \instance \cdot \Delta$
by Proposition~\ref{prop:selection semantics} (already proven). 
\end{proof}

\subsection*{Proposition~\ref{prop:gr semantics}}

\begin{proof}[of Proposition~\ref{prop:gr semantics}]
By Proposition~\ref{prop:alternative_gr} (already proven),
\(\iuq x_i \cdot \Delta\) can be written as an NNF over formulas
that either do not mention variable \(X\) 
or are \(X\)-literals implied by \(x_i\).
Hence, $\iuq \instance \cdot \Delta$
can always be written as an NNF whose literals are implied by
instance \(\instance\) (by repeated application of the previous observation).
We also have $\instance \models \iuq \instance \cdot \Delta \models \Delta$ by Proposition~\ref{prop:forall-relation}. Hence, it suffices to show that if \(\Gamma\) is an NNF such that 
(1)~\(\instance\) satisfies the literals of \(\Gamma\) and
(2)~$\instance \models \Gamma \models \Delta$, then 
$\Gamma \models \iuq \instance \cdot \Delta$.
We next prove this by contradiction.
Suppose $\Gamma$ is an NNF that satisfies properties~(1) and~(2),
and \(\Gamma \not \models \iuq \instance \cdot \Delta\).
Then
$\w \models \Gamma$ and $\w \not \models \iuq \instance \cdot \Delta$ for some world \(\w\).
Let $\w'$ be a world obtained from $\w$ by setting some variables in $\w$ to their states in $\instance$. Then $\w' \models \Gamma$ 
since \(\instance\) satisfies all
literals of \(\Gamma\) by~(1), and \(\instance \models \Gamma\) by~(2).
We now have \(\w \models \Gamma \models \Delta\) by~(2), and 
$\w' \models \Gamma \models \Delta$ for all such worlds \(\w'\), 
which implies
$\w \models \iuq \instance \cdot \Delta$ by
Proposition~\ref{prop:selection semantics} (already proven).
This is a contradiction so
$\Gamma \models \iuq \instance \cdot \Delta$.
\end{proof}

\subsection*{Proposition~\ref{prop:quantify-b}}

\begin{proof}[of Proposition~\ref{prop:quantify-b}]
If $x \models \lit$, then $\iuq x \cdot \lit = \lit \AND \lit | x = \lit \AND \top = \lit$.
If $x \not \models \lit$, then $\iuq x \cdot \lit = \lit \AND \lit | x = \lit \AND \bot = \bot$.
If \(X\) does not appear in $\Delta$, 
then $\iuq x \cdot \Delta = \Delta \AND \Delta | x = \Delta \AND \Delta = \Delta$. 
\end{proof}

\subsection*{Proposition~\ref{prop:distribute-and-or}}

\begin{proof}[of Proposition~\ref{prop:distribute-and-or}]
For the distribution over conjuncts,
\begin{align*}
\iuq x_i \cdot (\alpha \AND \beta) &= (\alpha \AND \beta) \AND (\alpha \AND \beta) | x_i\\
&= \alpha \AND \beta \AND \alpha | x_i \AND \beta | x_i\\
&= (\alpha \AND \alpha | x_i) \AND (\beta \AND \beta | x_i)\\
&= (\iuq x_i \cdot \alpha) \AND (\iuq x_i \cdot \beta).
\end{align*}
For the distribution over disjuncts, suppose variable \(X\) does not
occur in \(\alpha\). Then $\iuq x_i \cdot \alpha = \alpha$ by Proposition~\ref{prop:quantify-b}. Moreover, 
\begin{align*}
\iuq x_i \cdot (\alpha \OR \beta) 
&= (\alpha \OR \beta) \AND (\alpha \OR \beta) | x_i \\
&= (\alpha \OR \beta) \AND (\alpha | x_i \OR \beta | x_i)\\
&= (\alpha \OR \beta) \AND (\alpha  \OR \beta | x_i)\\
&= \alpha + (\alpha \AND (\beta | x_i)) + (\alpha \AND \beta) + (\beta \AND (\beta | x_i)) \\ 
&= \alpha \OR (\beta \AND (\beta | x_i))\\
&= \iuq x_i \cdot \alpha \OR \iuq x_i \cdot \beta.
\end{align*}
The proof is symmetric when $X$ does not occur in $\beta$.
\end{proof}

\subsection*{Proposition~\ref{prop:gsr_no_lost}}

We prove Proposition~\ref{prop:gsr_no_lost} first, which does not depend on Proposition~\ref{prop:redundant sr}.

\begin{lemma}\label{lemma:gsr_no_lost}
Let $\instance$ be an instance, $\tau$ be a simple term and $\tau'$
be a GSR for the decision on $\instance$. Then $\tau = \instance \mycap \tau'$ iff $\instance \models \tau \models \tau'$ and $\SetV(\tau) = \SetV(\tau')$.
\end{lemma}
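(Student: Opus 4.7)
The plan is to prove both directions of the biconditional by exploiting the fact that the instance $\instance$ is a simple term (exactly one simple literal per variable), so $\instance$ has a unique subterm over any given set of variables. The definition $\instance \mycap \tau'$ picks out the smallest subterm of $\instance$ that implies $\tau'$, and since $\tau'$ is a GSR we already have $\instance \models \tau'$, so this object is well-defined.

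For the forward direction, I assume $\tau = \instance \mycap \tau'$. By construction, $\tau$ is a subterm of $\instance$, so $\instance \models \tau$; and by the defining property of $\instance \mycap \tau'$, we have $\tau \models \tau'$. It remains to show $\SetV(\tau) = \SetV(\tau')$. The inclusion $\SetV(\tau) \subseteq \SetV(\tau')$ follows from minimality: dropping from $\tau$ any literal whose variable does not appear in $\tau'$ would still yield a subterm of $\instance$ implying $\tau'$, contradicting minimality. For the reverse inclusion, note that for any variable $X \in \SetV(\tau')$ with $X$-literal $\lit$ in $\tau'$, a subterm of $\instance$ omitting $X$ cannot force the $X$-literal of $\tau'$; since $\instance \models \tau'$, $\instance$'s $X$-literal is in $\lit$, and this literal must be present in any subterm of $\instance$ implying $\tau'$. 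Hence $\SetV(\tau') \subseteq \SetV(\tau)$.

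For the reverse direction, suppose $\instance \models \tau \models \tau'$ and $\SetV(\tau) = \SetV(\tau')$. Because $\instance$ is a simple term and $\tau$ is a simple term with $\instance \models \tau$, the term $\tau$ must consist precisely of the $X$-literals appearing in $\instance$ for each $X \in \SetV(\tau)$. By the same reasoning applied to $\instance \mycap \tau'$, it consists precisely of the $X$-literals in $\instance$ for each $X \in \SetV(\tau')$. Since $\SetV(\tau) = \SetV(\tau')$, the two terms are identical.

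The main obstacle, though minor, is arguing cleanly that $\SetV(\instance \mycap \tau') = \SetV(\tau')$, which relies on a double-inclusion argument using both the minimality in the definition of $\mycap$ and the fact that an $X$-literal of $\tau'$ can only be implied by the corresponding $X$-literal of $\instance$; everything else is bookkeeping that exploits the uniqueness of subterms of the simple term $\instance$ over a fixed variable set.
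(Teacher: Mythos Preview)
Your proof is correct and follows essentially the same approach as the paper. You spell out the double-inclusion argument for $\SetV(\instance \mycap \tau') = \SetV(\tau')$ explicitly, whereas the paper treats this as immediate from the definition of $\mycap$; and your reverse direction uses the cleaner observation that two subterms of the simple term $\instance$ over the same variable set must coincide, rather than the paper's phrasing via minimality, but these are presentational differences rather than a different route.
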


\begin{proof}
Let $\instance$ be an instance, $\tau$ be a simple term and $\tau'$
be a GSR for the decision on $\instance$.
We next prove both directions of the equivalence.

\begin{description}
\item $\tau = \instance \mycap \tau'$ only if $\instance \models \tau \models \tau'$ and $\SetV(\tau) = \SetV(\tau')$.

Suppose $\tau = \instance \mycap \tau'$. Recall that \(\instance \mycap \tau'\) denotes the smallest subterm in \(\instance\) that implies \(\tau'\). Hence, $\instance \models \instance \mycap \tau' \models \tau'$ and $\instance \models \tau \models \tau'$. Moreover, $\SetV(\instance \mycap \tau') = \SetV(\tau')$ by definition of $\mycap$ so
$\SetV(\tau) = \SetV(\tau')$.

\item $\instance \models \tau \models \tau'$ and $\SetV(\tau) = \SetV(\tau')$ only if  $\tau = \instance \mycap \tau'$.

Suppose $\instance \models \tau \models \tau'$ and $\SetV(\tau) = \SetV(\tau')$. Since $\instance \models \tau \models \tau'$ and term $\tau$ is simple, then (1)~$\tau$ is a subterm in $\instance$ and (2)~~$\tau$ implies $\tau'$. It then suffices to show that no strict subset of $\tau$ satisfies~(1) and~(2). Since $\SetV(\tau) = \SetV(\tau')$, and $\SetV(\instance \mycap \tau') = \SetV(\tau')$ by definition of $\mycap$, we get $\SetV(\tau) = \SetV(\instance \mycap \tau')$. Hence, no strict subset of $\tau$ satisfies~(1) and~(2), so $\tau = \instance \mycap \tau'$.
\end{description}
\end{proof}

\begin{proof}[of Proposition~\ref{prop:gsr_no_lost}]
Let $\instance$ be an instance and $\tau$ be a simple term. Given Lemma~\ref{lemma:gsr_no_lost}, it suffices to show that $\tau$ is a SR iff $\instance \models \tau \models \tau'$ and $\SetV(\tau) = \SetV(\tau')$ for some GSR $\tau'$. Recall that $\tau$ is a SR iff 
(1)~$\instance \models \tau \models \Delta$ and 
(2)~$\tau \models \tau'' \models \Delta$ for simple term $\tau''$ only if $\tau = \tau''$.  We next prove both directions of equivalence.

\begin{description}
\item $\tau$ is a SR only if $\instance \models \tau \models \tau'$ and $\SetV(\tau) = \SetV(\tau')$ for some GSR $\tau'$. 

Suppose $\tau$ is a SR.
Then $\instance \models \tau \models \Delta$ and $\tau \models \tau'' \models \Delta$ for simple term $\tau''$ only if $\tau = \tau''$.  If $\tau$ is a GSR, then $\instance \models \tau \models \tau$ and $\SetV(\tau) = \SetV(\tau)$ so the result holds trivially. 
Suppose $\tau$ is not a GSR.
By definition of a GSR and 
$\instance \models \tau \models \Delta$, there must exist a GSR $\tau'$ such that $\tau \models \tau' \models \Delta$ and $\tau' \not = \tau$. Since $\tau \models \tau'$, $\SetV(\tau') \subseteq \SetV(\tau)$. Moreoever, by Proposition~\ref{prop:redundant sr}, $\instance \mycap \tau' \models \Delta$ and $\instance \mycap \tau'$ is a simple term. Therefore, $\SetV(\instance \mycap \tau') = \SetV(\tau') \subseteq \SetV(\tau)$.
Since $\SetV(\instance \mycap \tau') \subseteq \SetV(\tau)$ and
both $\instance \mycap \tau'$ and $\tau$ are simple terms implied by $\instance$,  we get
$\tau \models \instance \mycap \tau' \models \Delta$. 
Since \(\tau\) is a SR, we now have
$\tau = \instance \mycap \tau'$, 
so $\SetV(\tau') = \SetV(\tau)$. Hence, $\instance \models \tau \models \tau'$ and $\SetV(\tau) = \SetV(\tau')$ for GSR $\tau'$.

\item  $\instance \models \tau \models \tau'$ and $\SetV(\tau) = \SetV(\tau')$ for some GSR $\tau'$ only if $\tau$ is a SR. 

Suppose $\instance \models \tau \models \tau'$ and $\SetV(\tau) = \SetV(\tau')$ for some GSR $\tau'$.
By definition of a GSR, $\tau' \models \Delta$
and, hence, 
(1)~$\instance \models \tau \models \tau' \models \Delta$. 
Suppose now that $\tau \models \tau'' \models \Delta$ and $\tau \not = \tau''$ for some simple term $\tau''$. We will next show a contradiction
which implies (2) $\tau \models \tau'' \models \Delta$ only if
$\tau = \tau''$ for any simple term $\tau''$.
Let $\tau''$ be the weakest simple term satisfying our supposition. 
We then have $\SetV(\tau'') \subset \SetV(\tau)$. 
Moreover, $\tau''$ must be a SR.
By the first direction, there exists a GSR $\tau'''$ where $\SetV(\tau''') = \SetV(\tau'') \subset \SetV(\tau) = \SetV(\tau')$.
Hence \(\tau'\) is not variable-minimal (compared to $\tau'''$) so
it cannot be a GSR, a contradiction.
Hence, (2) holds.
Given (1) and (2), \(\tau\) is a SR.
\end{description}

\end{proof}

\subsection*{Proposition~\ref{prop:redundant sr}}

\begin{proof}[of Proposition~\ref{prop:redundant sr}]
Suppose $\instance$ is an instance in class 
$\Delta$ and $\tau$ is a weakest term s.t.
$\instance \models \tau \models \Delta$. We next
prove both parts of the proposition. 

\textit {Part~$1$.}
Suppose $\tau'$ is a weakest term s.t.
$\instance \models \tau' \models \Delta$ and
$\vars(\tau') \subset \vars(\tau)$.
We will next show $\instance \mycap \tau \models \instance \mycap \tau' \models \Delta$.
Since $\instance \models \tau$ and $\instance \models \tau'$, then $\instance \models \l$
for every literal $\l$ in $\tau$ or $\tau'$. Hence, 
$\instance \mycap \tau$ is the subset \(\JJ\) of \(\instance\) such
that \(\SetV({\JJ}) = \SetV(\tau)\) and
$\instance \mycap \tau'$ is the subset \(\JJ'\) of \(\instance\) such
that \(\SetV(\JJ') = \SetV(\tau')\).
Since $\SetV(\tau') \subset \SetV(\tau)$, 
\(\SetV(\JJ') \subset \SetV(\JJ)\) and, hence,
$\instance  \mycap \tau = \JJ  \models \JJ' = \instance\mycap \tau'$.
Moreover, since $\instance \mycap \tau' \models \tau'$ and $\tau' \models \Delta$, we get
$\instance \mycap \tau \models \instance \mycap \tau' \models \Delta$.

\textit {Part~$2(a)$.}
Suppose $\tau'$ is a weakest term s.t.
$\instance \models \tau' \models \Delta$ and
$\vars(\tau') \subset \vars(\tau)$.
By Part~$1$, $\instance \models \instance \mycap \tau \models \instance \mycap \tau' \models \Delta$. Hence, $\instance \mycap \tau$ is not a SR since
$\instance \mycap \tau'$ is weaker than 
$\instance \mycap \tau$, yet $\instance \models \instance \mycap \tau' \models \Delta$.

\textit {Part~$2(b)$.}
Suppose there is no weakest term $\tau'$ s.t.
$\instance \models \tau' \models \Delta$ and
$\vars(\tau') \subset \vars(\tau)$.
Then $\tau$ is a GSR. By Proposition~\ref{prop:gsr_no_lost}, $\instance \mycap \tau$ is a SR. 
\end{proof}

\subsection*{Proposition~\ref{prop:gnr_no_lost}}

We prove Proposition~\ref{prop:gnr_no_lost} first, which does not depend on Proposition~\ref{prop:redundant nr}.

\begin{lemma}\label{lemma:gnr_no_lost}
Let $\instance$ be an instance, $\sigma$ be a simple clause, and $\sigma'$ be a GNR for the decision on $\instance$. Then $\sigma = \instance \mycap \sigma'$ iff $\instance \mymodels \sigma \models \sigma'$ and $\SetV(\sigma) = \SetV(\sigma')$.
\end{lemma}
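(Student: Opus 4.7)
The plan is to mirror the proof of Lemma~\ref{lemma:gsr_no_lost}, adapting the argument from terms and conjunctions to clauses and disjunctions. A single preparatory observation drives both directions: since $\sigma'$ is a GNR, Definition~\ref{def:gnr} gives $\instance \mymodels \sigma'$, which means that for every variable $X \in \SetV(\sigma')$ with $X$-literal $\l_X$ appearing in $\sigma'$, the state $\instance[X]$ of $\instance$ satisfies $\instance[X] \in \l_X$. Consequently, by the definition of $\mycap$, the clause $\instance \mycap \sigma'$ is a simple clause whose variables are exactly $\SetV(\sigma')$ and whose $X$-literal for each $X \in \SetV(\sigma')$ is the state $\instance[X]$. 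I will invoke this structural description repeatedly.

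For the forward direction, suppose $\sigma = \instance \mycap \sigma'$. Then $\sigma$ directly inherits the three properties just noted: it is simple; each of its literals is of the form $\instance[X]$, so $\instance \mymodels \sigma$; and $\SetV(\sigma) = \SetV(\sigma')$. For $\sigma \models \sigma'$, observe that each state $\instance[X]$ in $\sigma$ lies in the corresponding $\l_X$ of $\sigma'$, hence the simple $X$-literal $\instance[X]$ implies $\l_X$, and so $\sigma \models \sigma'$ follows disjunct-by-disjunct.

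For the backward direction, suppose $\instance \mymodels \sigma \models \sigma'$ and $\SetV(\sigma) = \SetV(\sigma')$. Since $\sigma$ is simple, each of its $X$-literals is a single state, and $\instance \mymodels \sigma$ forces that state to equal $\instance[X]$. The hypothesis $\SetV(\sigma) = \SetV(\sigma')$ ensures the variables appearing in $\sigma$ are precisely those in $\sigma'$, and then $\sigma \models \sigma'$ forces each $\instance[X]$ in $\sigma$ to lie in the corresponding literal $\l_X$ of $\sigma'$ (variable by variable, since both clauses mention distinct variables with the same variable set). These are exactly the states collected by $\instance \mycap \sigma'$, so $\sigma = \instance \mycap \sigma'$.

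The main point of care, and the only place the argument could slip, is the backward direction: clause implication $\sigma \models \sigma'$ does \emph{not} in general give pointwise literal containment, but here the combination of simpleness of $\sigma$, $\instance \mymodels \sigma$, and $\SetV(\sigma) = \SetV(\sigma')$ is exactly what promotes $\sigma \models \sigma'$ into the pointwise statement $\instance[X] \in \l_X$ for each $X \in \SetV(\sigma')$. Once this is in hand, matching with the definition of $\mycap$ is mechanical.
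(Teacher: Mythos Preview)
Your proof is correct and follows essentially the same approach as the paper's: both directions unpack the definition of $\instance \mycap \sigma'$ using the fact that $\instance \mymodels \sigma'$ (from the GNR definition), identify the literals of $\sigma$ as the states $\instance[X]$ for $X \in \SetV(\sigma')$, and match them against the definition. Your preparatory observation that $\instance \mycap \sigma'$ consists exactly of the states $\instance[X]$ for $X \in \SetV(\sigma')$ makes the argument slightly more streamlined than the paper's, but the substance is identical.
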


\begin{proof}
Suppose $\instance$ is an instance, $\sigma$ is a simple clause, and $\sigma'$ is a GNR for the decision on $\instance$.
We next prove both directions of the equivalence. 

\begin{description}
\item $\sigma = \instance \mycap \sigma'$ only if $\instance \mymodels \sigma \models \sigma'$ and $\SetV(\sigma) = \SetV(\sigma')$.

Suppose $\sigma = \instance \mycap \sigma'$. Recall that $\instance \mycap \sigma'$ denotes the disjunction of states that appear in both $\instance$ and $\sigma'$. Therefore, we have $\instance \mymodels \instance \mycap \sigma'$ and $\instance \mycap \sigma' \models \sigma'$, which implies $\instance \mymodels \sigma \models \sigma'$. Since $\sigma'$ is a GNR, we have $\instance \mymodels \sigma'$.
Therefore, $\SetV(\instance \mycap \sigma') = \SetV(\sigma')$, so $\SetV(\sigma) = \SetV(\sigma')$.

\item $\instance \mymodels \sigma \models \sigma'$ and $\SetV(\sigma) = \SetV(\sigma')$ only if $\sigma = \instance \mycap \sigma'$.

Suppose $\instance \mymodels \sigma \models \sigma'$ and $\SetV(\sigma) = \SetV(\sigma')$. Since $\sigma$ is a simple clause and $\instance \mymodels \sigma$, $\sigma$ is a disjunction of some states $S$ in $\instance$. By definition of $\mycap$, $\instance \mycap \sigma'$ is a disjunction of some states $S'$ in $\instance$. Since $\sigma \models \sigma'$, 
$S \subseteq S'$. 
Since $\SetV(\sigma) = \SetV(\sigma')$,
$S = S'$. Hence, $\sigma = \instance \mycap \sigma'$.
\end{description}
\end{proof}

\begin{proof}[of Proposition~\ref{prop:gnr_no_lost}]
Let instance $\instance$ be in class $\Delta$ and \(\sigma\) be a simple clause. 
By Lemma~\ref{lemma:gnr_no_lost}, it suffices to show that $\sigma$ is a NR iff $\instance \mymodels \sigma \models \sigma^\star$ and $\SetV(\sigma) = \SetV(\sigma^\star)$ for some GNR $\sigma^\star$.
Recall that 
$\sigma$ is a NR for the decision on \(\instance\) 
iff 
(1)~$\instance \mymodels \sigma$ and $(\instance \myminus \sigma) \AND \NOT{\sigma} \not \models \Delta$ and
(2)~a simple clause $\sigma'$ satisfies the previous condition and $\sigma' \models \sigma$ only if $\sigma = \sigma'$. 
We will reference (1) and (2) next as
we prove both directions of the equivalence in Proposition~\ref{prop:gnr_no_lost}.

\begin{description}
\item $\sigma$ is a NR only if $\instance \mymodels \sigma \models \sigma^\star$ and $\SetV(\sigma) = \SetV(\sigma^\star)$ for some GNR $\sigma^\star$. 

Suppose $\sigma$ is a NR. We prove this direction by finding a GNR $\sigma^\star$ that satisfies the properties above. Given (1) and (2),
there is no simple clause $\sigma'$ s.t. $\instance \mymodels \sigma'$, $(\instance \myminus \sigma') \AND \NOT{\sigma'} \not \models \Delta$, 
$\sigma' \models \sigma$ and $\sigma' \not = \sigma$
(equivalent to $\SetV(\sigma') \subset \SetV(\sigma)$).
Hence, there is no GNR $\sigma''$ such
that $\SetV(\sigma'') \subset \SetV(\sigma)$. 
Next, since $(\instance \myminus \sigma) \AND \NOT{\sigma} \not \models \Delta$, there is a world $\w$ such that $\w \models (\instance \myminus \sigma) \AND \NOT{\sigma}$ and $\w \not \models \Delta$. Our goal is to construct a clause $\sigma'$ such that the only world that satisfies $(\instance \myminus \sigma') \AND \NOT{\sigma'}$ is $\w$. This gives us $(\instance \myminus \sigma') \AND \NOT{\sigma'} \models \NOT{\Delta}$. Then, either $\sigma'$ is a GNR, or $\sigma'$ is subsumed by some GNR. If we can find such a clause $\sigma'$, we can also find the sought GNR $\sigma^\star$ that finishes this direction of the proof. This is shown next.
Consider the clause $\sigma'$ equivalent to $\NOT{\w \setminus (\w \myminus \sigma)}$. Note that $\w \models (\instance \myminus \sigma)$ and $\w \models \NOT{\sigma}$ where $\instance \mymodels \sigma$. It follows that $\sigma'$ is equivalent to the negation of a conjunction of literals in $\w$ whose variables are mentioned by $\sigma$. Every $X$-literal $\l'$ in $\sigma'$ is entailed by an $X$-literal $\l$ in $\sigma$ because $\l'$ contains all states of $X$ but the one from $\w$ and $\w \models \NOT{\sigma}$. Thus, $\instance \models \sigma'$. Then $(\instance \myminus \sigma') \AND \NOT{\sigma'} = (\instance \myminus \sigma) \AND \NOT{\sigma'}$, and the only model of $(\instance \myminus \sigma) \AND \NOT{\sigma'}$ is $\w$. Therefore, $(\instance \myminus \sigma') \AND \NOT{\sigma'} \models \NOT{\Delta}$. Note that $\SetV(\sigma') = \SetV(\sigma)$, and we already showed that there is no GNR  $\sigma''$ such that $\SetV(\sigma'') \subset \SetV(\sigma)$. Thus, either $\sigma'$ is a GNR, in which case we let $\sigma^\star = \sigma'$, or $\instance \mymodels \sigma^\star \models \sigma'$ and $\SetV(\sigma^\star) = \SetV(\sigma') = \SetV(\sigma)$ for some GNR $\sigma^\star$. Either way, $\sigma \models \sigma^\star$ follows from $\SetV(\sigma) = \SetV(\sigma^\star)$, $\instance \mymodels \sigma$ and $\instance \mymodels \sigma^\star$.

\item $\instance \mymodels \sigma \models \sigma'$ and $\SetV(\sigma) = \SetV(\sigma')$ for some GNR $\sigma'$ only if $\sigma$ is a NR.

Suppose $\instance \mymodels \sigma \models \sigma'$ and $\SetV(\sigma) = \SetV(\sigma')$ for some GNR $\sigma'$. We next prove~(1) and then prove~(2). 
Since $\sigma'$ is a GNR, $(\instance \myminus \sigma') \AND \NOT{\sigma'} \models \NOT{\Delta}$. 
Moreover, $\instance \myminus \sigma = \instance \myminus \sigma'$ since
$\SetV(\sigma) = \SetV(\sigma')$, 
$\instance \mymodels \sigma$ and $\instance \mymodels \sigma'$.
We also have $\NOT{\sigma'} \models \NOT{\sigma}$,
given $\sigma \models \sigma'$,
which implies $(\instance \myminus \sigma') \AND \NOT{\sigma'} \models (\instance \myminus \sigma) \AND \NOT{\sigma}$. Therefore, 
$\w \models (\instance \myminus \sigma') \AND \NOT{\sigma'} \models \NOT{\Delta}$ only if
$\w \models (\instance \myminus \sigma) \AND \NOT{\sigma}$ and $\w \models \NOT{\Delta}$, which implies $(\instance \myminus \sigma) \AND \NOT{\sigma} \not \models \Delta$. 
Together with $\instance \mymodels \sigma$, this
gives~(1).
We prove (2) by contradiction. Suppose (2) does not hold. Then there exists a NR $\sigma''$ such that $\sigma'' \models \sigma$ and $\sigma'' \not = \sigma$, so $\SetV(\sigma'') \subset \SetV(\sigma)$. By the first direction, there exists a GNR $\sigma'''$ such that $\SetV(\sigma''') = \SetV(\sigma'') \subset \SetV(\sigma) = \SetV(\sigma')$. This is a contradiction since $\sigma'$ is a GNR. Thus, (2)~holds. 
\end{description}
\end{proof}

\subsection*{Proposition~\ref{prop:redundant nr}}

\begin{proof}[of Proposition~\ref{prop:redundant nr}]
Let $\instance$ be an instance in class $\Delta$
and \(\sigma\) be a strongest clause s.t. 
$\instance \mymodels \sigma$ and $(\instance \myminus \sigma) \AND \NOT \sigma \models \NOT{\Delta}$. We next prove both parts of the proposition.

\textit{Part~$1$.}
Suppose $\sigma'$ is a strongest clause s.t.
$\instance \mymodels \sigma'$ and $(\instance \myminus \sigma') \AND \NOT {\sigma'} \models \NOT{\Delta}$
and $\SetV(\sigma') \subset \SetV(\sigma)$.
We next show that $\instance \myminus \sigma' \models \instance \myminus \sigma$.
Since $\instance \mymodels \sigma$ and $\instance \mymodels \sigma'$, every literal $\l$ in $\sigma$ or $\sigma'$ satisfies $\instance \models \l$. Therefore, 
$\instance \myminus \sigma$ is the subset \(\JJ\) of \(\instance\) such
that \(\SetV({\JJ}) = \SetV(\instance) \setminus \SetV(\sigma)\) and
$\instance \mycap \sigma'$ is the subset \(\JJ'\) of \(\instance\) such
that \(\SetV(\JJ') = \SetV(\instance) \setminus \SetV(\sigma')\).
Since $\SetV(\sigma') \subset \SetV(\sigma)$, we have \(\SetV(\JJ) \subset \SetV({\JJ'})\) and, hence, $\instance \myminus \sigma' = \JJ' \models \JJ = \instance \myminus \sigma$. 

\textit{Part~$2(a)$.}
Suppose there is no strongest clause $\sigma'$ s.t.
$\instance \mymodels \sigma'$ and $(\instance \myminus \sigma') \AND \NOT {\sigma'} \models \NOT{\Delta}$
and $\SetV(\sigma') \subset \SetV(\sigma)$.
Then $\sigma$ is a GNR. By Proposition~\ref{prop:gnr_no_lost}, $\instance \mycap \sigma$ is a NR.

\textit{Part~$2(b)$.}
Suppose $\sigma'$ is a strongest clause s.t.
$\instance \mymodels \sigma'$ and $(\instance \myminus \sigma') \AND \NOT {\sigma'} \models \NOT{\Delta}$
and $\SetV(\sigma') \subset \SetV(\sigma)$.
Since $\instance \mymodels \sigma$, $\instance \mymodels \sigma'$, and $\SetV(\sigma') \subset \SetV(\sigma)$, we have $\instance \mycap \sigma' \models \instance \mycap \sigma$ and $\instance \myminus \sigma' = \instance \myminus (\instance \mycap \sigma')$. Since $\NOT{\sigma'} \models \NOT{\instance \mycap \sigma'}$ and $(\instance \myminus \sigma') \AND \NOT{\sigma'} \models \NOT{\Delta}$, we have $(\instance \myminus (\instance \mycap \sigma')) \AND \NOT{\instance \mycap \sigma'} \not \models \Delta$. Hence, $\instance \mycap \sigma$ is not a NR because $\instance \mycap \sigma'$ is stronger than $\instance \mycap \sigma$, yet $\instance \mymodels \instance \mycap \sigma'$, and $(\instance \myminus (\instance \mycap \sigma')) \AND \NOT{\instance \mycap \sigma'} \not \models \Delta$.
\end{proof}

\subsection*{Proposition~\ref{prop:PI-IP}}

\begin{lemma}\label{lemma:formula equal IPs}
Let $\Delta$ be a formula with discrete variables and $\sigma_1, \ldots, \sigma_n$ be the prime implicates of $\Delta$. Then $\Delta$ is equivalent to $\BAND_{i = 1}^n \sigma_i$. That is, $\Delta$ is equivalent to the conjunction of its prime implicates.
\end{lemma}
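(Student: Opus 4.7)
The proof splits into two directions. For $\Delta \models \BAND_{i=1}^{n} \sigma_i$, I would appeal directly to the definition of a prime implicate: each $\sigma_i$ is an implicate of $\Delta$, so $\Delta \models \sigma_i$ for every $i$, and therefore $\Delta$ entails the conjunction.

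For the reverse direction $\BAND_{i=1}^n \sigma_i \models \Delta$, I would argue by contrapositive: given a world $\w$ with $\w \not\models \Delta$, I would produce some prime implicate $\sigma_j$ with $\w \not\models \sigma_j$. The key construction is the \emph{characteristic clause} of $\w$: for each variable $X \in \Sigma$ with states $x_1, \ldots, x_k$, let $\ell_{\w,X}$ be the $X$-literal consisting of all states of $X$ other than the one $\w$ assigns to $X$, and set
\[
\sigma_\w \;=\; \BOR_{X \in \Sigma} \ell_{\w,X}.
\]
By construction, $\w$ is the unique world that falsifies $\sigma_\w$ (any other world disagrees with $\w$ on some variable $X$ and therefore satisfies $\ell_{\w,X}$). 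Consequently every model of $\Delta$ satisfies $\sigma_\w$, which gives $\Delta \models \sigma_\w$.

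It then remains to extract a prime implicate of $\Delta$ that entails $\sigma_\w$. Here I would invoke a standard descent argument: if $\sigma_\w$ is not already a prime implicate of $\Delta$, Definition~\ref{def:var-min}'s underlying definition of prime implicate provides a distinct clause $\beta$ with $\Delta \models \beta \models \sigma_\w$, which is strictly stronger than $\sigma_\w$ (in classical logic, $\beta \models \sigma_\w$ together with non-equivalence means $\sigma_\w \not\models \beta$). Iterating produces a chain of strictly stronger implicates of $\Delta$, and since there are only finitely many clauses over the finite variable set $\Sigma$ up to equivalence, the chain terminates in a prime implicate $\sigma_j$ with $\sigma_j \models \sigma_\w$. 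Because $\w \not\models \sigma_\w$, contraposition of this entailment yields $\w \not\models \sigma_j$, so $\w$ falsifies $\BAND_{i=1}^n \sigma_i$, completing the argument.

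The main subtlety I expect is the discrete-variable setting: clauses are disjunctions of possibly non-simple literals, so implication between clauses must be handled semantically rather than by syntactic subsumption of literal sets. Once one verifies, via the characteristic-clause construction, that worlds correspond to definable clauses and that entailment between clauses behaves as in the Boolean case, the remainder of the argument (finiteness of the implicate lattice, descent to a prime implicate) carries over with no essential change.
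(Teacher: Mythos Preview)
Your proof is correct and follows essentially the same approach as the paper: your ``characteristic clause'' $\sigma_\w$ is precisely what the paper writes as $\NOT{\w}$, and your descent argument makes explicit the step the paper compresses into ``since $\NOT{\w}$ is a clause, it must be subsumed by some prime implicate $\sigma_j$.'' The only cosmetic difference is that the paper phrases the reverse direction as a proof by contradiction rather than by contrapositive.
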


\begin{proof}
We prove both directions of the equivalence. $\Delta \models \BAND_{i = 1}^n \sigma_i$ since $\Delta \models \sigma_i$ for all $i$. We next prove that $\BAND_{i = 1}^n \sigma_i \models \Delta$ by contradiction. Let $\w$ be a world s.t. $\w \models \BAND_{i=1}^n\sigma_i$ 
but $\w \not \models \Delta$.
Then $\w \models \NOT{\Delta}$ and, hence, 
$\Delta \models \NOT{\w}.$ Since $\NOT{\w}$ is a clause,
it must be subsumed by some prime implicate $\sigma_j.$
Hence, \(\w \models \sigma_j \models \NOT{\w}\) which is a
contradiction.
\end{proof}

\begin{proof}[of Proposition~\ref{prop:PI-IP}]

We first prove the part about GSRs, then the one for GNRs.

{\bf GSRs are the variable-minimal prime implicants of $\iuq \instance \cdot \Delta$.}
We prove two directions next.

\begin{description}
\item All variable-minimal prime implicants of $\iuq \instance \cdot \Delta$ are GSRs. 

Let $\tau$ be a variable-minimal prime implicant of $\iuq \instance \cdot \Delta$. By Proposition~\ref{prop:gr semantics}, it suffices to prove {\bf (1)}~$\tau$ is a weakest term such that $\instance \models \tau \models \Delta$ and {\bf (2)}~no term $\tau'$ satisfies the previous condition if $\SetV(\tau') \subset \SetV(\tau)$. We prove these next.

{\bf (1)}~We have 
$\tau \models \iuq \instance \cdot \Delta \models \Delta$
by Proposition~\ref{prop:forall-relation} and given that
$\tau$ is a prime implicant of $\iuq \instance \cdot \Delta$.
We next prove $\instance \models \tau$ by contradiction. 

Suppose $\instance \not \models \tau$. 
Then $\instance \not \models \l$ for some 
\(X\)-literal $\l$ in $\tau$. 
Let $\l' = \l \cup \{\instance[X]\}$ where
$\instance[X]$ is the state of variable $X$ in $\instance$.
Consider the term $\tau'$ obtained from $\tau$ by replacing literal $\l$ by $\l'$. Then the models of $\tau'$ are the models of $\tau$ plus the worlds $\w'$ obtained from a model $\w$ of $\tau$ by setting the value of variable $X$ to $\instance[X]$. If we can prove $\w' \models \iuq \instance \cdot \Delta$, then it follows that $\tau'$ is an implicant of $\iuq \instance \cdot \Delta$, which gives us a contradiction since $\tau$ is a prime implicant and $\tau \models \tau'$. 
We have $\w' \models \Delta$
by Proposition~\ref{prop:selection semantics} and since $\w \models \tau \models \iuq \instance \cdot \Delta$. Moreover, for any world $\w''$ obtained from $\w'$ by setting some variables to their states in $\instance$, $\w''$ can also be obtained from some model of $\tau$ by setting some variables to their states in $\instance$. Hence, by Proposition~\ref{prop:selection semantics}, $\w'' \models \Delta$. Proposition~\ref{prop:selection semantics} further tells us that $\w' \models \iuq \instance \cdot \Delta$. Thus, $\tau \models \tau' \models \iuq \instance \cdot \Delta$ and $\tau \not = \tau'$, which is a contradiction since $\tau'$ is a prime implicant. Hence, $\instance \models \tau$.

We now have $\instance \models \tau \models \Delta$.
To prove (1), we need to prove that $\tau$ is the weakest term satisfying the previous property. We prove this by contradiction. Suppose $\instance \models \tau \models \tau' \models \Delta$ and $\tau' \not = \tau$ for some term $\tau'$. Let $\w$ be a world such that $\w \models \tau' \models \Delta$. Since $\instance \models \tau'$, all literals in $\tau'$ are consistent with $\instance$. Therefore, 
$\w' \models \tau' \models \Delta$
for any world $\w'$ obtained from $\w$ by setting some variables in $\w$ to their states in $\instance$. By Proposition~\ref{prop:selection semantics}, $\w \models \iuq \instance \cdot \Delta$, which means all models of $\tau'$ are models of $\iuq \instance \cdot \Delta$, so $\tau'$ is an implicant of $\iuq \instance \cdot \Delta$. This is a contradiction since $\tau$ is a prime implicant of $\iuq \instance \cdot \Delta$. Hence, $\tau$ must be a weakest term satisfying 
$\instance \models \tau \models \Delta$, so (1) holds.

When proving (1), we did not use the variable-minimality of $\tau$. Hence, we make the following observation which
we use later in the proof: {\bf (A)} every prime implicant $\tau$ of $\iuq \instance \cdot \Delta$ satisfies $\instance \models \tau \models \Delta$.

{\bf (2)}~We prove this by contradiction. Suppose there exists a weakest term $\tau'$ satisfying $\instance \models \tau' \models \Delta$ and $\SetV(\tau') \subset \SetV(\tau)$. Since $\instance \models \tau'$, all literals in $\tau'$ are consistent with $\instance$. Therefore, for a world $\w \models \tau' \models \Delta$, every $\w'$ obtained from $\w$ by setting some variables in $\w$ to their state in $\instance$ satisfies $\w' \models \tau' \models \Delta$. By Proposition~\ref{prop:selection semantics}, $\tau'$ is an implicant of $\iuq \instance \cdot \Delta$. Since $\SetV(\tau') \subset \SetV(\tau)$, there is a prime implicant $\tau''$ of $\iuq \instance \cdot \Delta$ satisfying $\SetV(\tau'') \subseteq \SetV(\tau') \subset \SetV(\tau)$. This is a contradiction since $\tau$ is variable-minimal. Hence, (2) holds.

\item All GSRs are variable-minimal prime implicants of $\iuq \instance \cdot \Delta$.

Let $\tau$ be a GSR. We will prove {\bf (1)}~$\tau$ is a prime implicant of $\iuq \instance \cdot \Delta$ and {\bf (2)}~there is no prime implicant $\tau'$ of $\iuq \instance \cdot \Delta$ satisfying $\SetV(\tau') \subset \SetV(\tau)$.

{\bf (1)}~Since $\tau$ is a GSR, $\tau$ is a weakest term such that $\instance \models \tau \models \Delta$. Therefore, all literals in $\tau$ are consistent with $\instance$. Thus, for a world $\w \models \tau \models \Delta$, every world $\w'$ obtained from $\w$ by setting some variables in $\w$ to their states in $\instance$ satisfies $\w' \models \tau \models \Delta$. By Proposition~\ref{prop:selection semantics}, $\w \models \iuq \instance \cdot \Delta$, so $\tau$ is an implicant of $\iuq \instance \cdot \Delta$. 
To prove (1), we next show that $\tau$ is prime by contradiction. Suppose $\tau$ is not prime. 
Then there must be a prime implicant $\tau'$ of $\iuq \instance \cdot \Delta$ satisfying $\tau \models \tau' \models \iuq \instance \cdot \Delta$ and $\tau \not = \tau'$. By observation (A) in the first direction, $\tau'$ satisfies $\instance \models \tau' \models \Delta$. This is a contradiction since $\tau$ is a GSR but not the weakest given $\instance \models \tau \models \tau' \models \Delta$. Hence, $\tau$ is prime, so (1) holds.

{\bf (2)}~We prove this by contradiction. 
Assume (2) does not hold. Then
there are some prime implicants $\tau'$ of $\iuq \instance \cdot \Delta$ satisfying $\SetV(\tau') \subset \SetV(\tau)$. 
Let $\tau''$ be a variable-minimal prime implicant among all such prime implicants $\tau'$. 
By the first direction, $\tau''$ is a GSR, which implies $\tau$ is not a GSR because $\tau$ is not variable minimal. This is a contradiction, so (2) holds.

\end{description}

{\bf GNRs are the variable-minimal prime implicates of $\iuq \instance \cdot \Delta$.}
We prove both directions next.

\begin{description}
\item All variable-minimal prime implicates of $\iuq \instance \cdot \Delta$ are GNRs. 

Let $\sigma$ be a variable-minimal prime implicate of $\iuq \instance \cdot \Delta$. Our goal is to prove {\bf (1)} $\sigma$ is a strongest clause satisfying $\instance \mymodels \sigma$ and $(\instance \myminus \sigma) \AND \NOT{\sigma} \models \NOT{\Delta}$ and {\bf (2)} no clause $\sigma'$ satisfies the previous condition if $\SetV(\sigma') \subset \SetV(\sigma)$.

{\bf (1)}~We first prove $\instance \mymodels \sigma$. Assume the opposite: there is an \(X\)-literal $\l$ in $\sigma$ such that $\instance \not \models \l$. Let $\sigma'$ be a clause obtained from $\sigma$ by removing $\l$. If we show $\iuq \instance \cdot \Delta \models \sigma'$, we get a contradiction because $\sigma$ is a prime implicate of $\iuq \instance \cdot \Delta$ and $\sigma' \models \sigma$. This would prove
$\instance \mymodels \sigma$. 

We show $\iuq \instance \cdot \Delta \models \sigma'$ by showing $\iuq \instance \cdot \Delta \not \models \sigma'$ is impossible. $\iuq \instance \cdot \Delta \not \models \sigma'$ only if $\w \models \iuq \instance \cdot \Delta \models \sigma$  
and $\w \not \models \sigma'$ for some world \(\w\) 
(recall, $\sigma$ is a prime implicate of $\iuq \instance \cdot \Delta$). 
Such a world $\w$ satisfies $\w \models \l$ (since $\l$ is the only distinction from $\sigma$ and $\sigma'$) and, hence, $\w \models \NOT{\sigma'}$. If such a world $\w$ exists, let $\w'$ be a world obtained from 
such a $\w$ by setting variable \(X\) to its state in $\instance$. Note that $\sigma'$ does not mention variable $X$, so $\w' \models \NOT{\sigma'}$. 
We have $\w' \models \Delta$
by Proposition~\ref{prop:selection semantics} since $\w \models \iuq \instance \cdot \Delta$. Moreover, for any world $\w''$ obtained from $\w'$ by setting some variables to their states in $\instance$, $\w''$ can also be obtained from $\w$ by setting some variables to their state in $\instance$. Hence, by Proposition~\ref{prop:selection semantics}, $\w'' \models \Delta$ for all such $\w''$, which means $\w' \models \iuq \instance \cdot \Delta$. 
Finally, $\w' \not \models \sigma$
since $\w' \models \NOT{\sigma'}$ and $\w' \not \models \l$. This is a contradiction since $\iuq \instance \cdot \Delta \models \sigma$. Hence, no such world \(\w\) exists, which shows $\iuq \instance \cdot \Delta \not \models \sigma'$ is impossible. Thus, $\iuq \instance \cdot \Delta \models \sigma'$, which gives us another contradiction. Hence, $\instance \mymodels \sigma$. 

We did not use the fact that $\sigma$ is variable-minimal when proving $\instance \mymodels \sigma$. Therefore, we have the following observation which we use later in the proof: 
{\bf (B)} $\instance \mymodels \sigma$ holds for any prime implicate $\sigma$ of $\iuq \instance \cdot \Delta$.

We next prove $(\instance \myminus \sigma) \AND \NOT{\sigma} \models \NOT{\Delta}$. Since $\instance \mymodels \sigma$, $\SetV(\instance \myminus \sigma) = \SetV(\instance) \setminus \SetV(\sigma)$. Let $\w$ be a world such that $\w \models (\instance \myminus \sigma) \AND \NOT{\sigma}$. We next prove $\w \models \NOT{\Delta}$ by contradiction. Assume $\w \models \Delta$. Then for every world $\w'$ obtained from $\w$ by setting some variables to their states in $\instance$, if $\w' \not = \w$, then $\w' \models \sigma$ because $\instance \mymodels \sigma$ and $\w \models (\instance \myminus \sigma)$. By observation (B) above, all literals in every prime implicate of $\iuq \instance \cdot \Delta$ are consistent with $\instance$. Then, consider any $\w' \not = \w$, which immediately implies $\w' \models \sigma$. $\w'$ must satisfy all prime implicate of $\iuq \instance \cdot \Delta$. Otherwise, since the subset of $\w'$ that disagrees with $\instance$ mentions fewer variables than $\sigma$, $\sigma$ cannot be a variable-minimal prime implicate. By Lemma~\ref{lemma:formula equal IPs}, since $\w'$ satisfies all prime implicate of $\iuq \instance \cdot \Delta$, $\w' \models \iuq \instance \cdot \Delta$ for all such $\w' \not = \w$. Since $\w' \models \Delta$ and $\w \models \Delta$, by Proposition~\ref{prop:selection semantics}, $\w \models \iuq \instance \cdot \Delta$, which means $\w \models \sigma$. This is a contradiction, since $\w \models \NOT{\sigma}$. Hence, $\w \models \NOT{\Delta}$, which implies $(\instance \myminus \sigma) \AND \NOT{\sigma} \models \NOT{\Delta}$.

We now prove that $\sigma$ is a strongest clause satisfying $\instance \mymodels \sigma$ and $(\instance \myminus \sigma) \AND \NOT{\sigma} \models \NOT{\Delta}$,
which finishes the proof of (1). We prove this by contradiction. Assume there is a clause $\sigma'$ such that $\instance \mymodels \sigma'$, $(\instance \myminus \sigma') \AND \NOT{\sigma'} \models \NOT{\Delta}$, $\sigma' \models \sigma$, and $\sigma' \not = \sigma$. If we prove $\sigma'$ is an implicate of $\iuq \instance \cdot \Delta$, we get a contradiction. Consider $\w \models \iuq \instance \cdot \Delta$. If we prove $\w \models \sigma'$, then $\iuq \instance \cdot \Delta \models \sigma'$ follows. Assume $\w \not \models \sigma'$, i.e., $\w \models \NOT{\sigma'}$. Let $\w'$ be a world obtained from $\w$ by setting all variables mentioned by $\w$ but not by $\sigma'$ in $\w$ to their states in $\instance$. Then, $\w' \models \NOT{\sigma'}$ since the variables mentioned by $\sigma'$ are unchanged. Moreover, $\w' \models (\instance \myminus \sigma')$ since $\instance \mymodels \sigma'$, so $\w' \models (\instance \myminus \sigma') \AND \NOT{\sigma'} \models \NOT{\Delta}$. However, by Proposition~\ref{prop:selection semantics}, $\w' \models \Delta$ because $\w \models \iuq \instance \cdot \Delta$. This is a contradiction, so $\w \models \sigma'$, which shows $\sigma'$ is an implicate. 
Hence, (1) holds.

In the previous paragraph, we proved a property which we use later in the proof:
{\bf (C)} Every clause $\sigma'$ satisfying $\instance \mymodels \sigma'$ and $(\instance \myminus \sigma') \AND \NOT{\sigma'} \models \NOT{\Delta}$ is an implicate of $\iuq \instance \cdot \Delta$.

{\bf (2)}~We prove this by contradiction. Assume there is a strongest clause $\sigma'$ that satisfies $\instance \mymodels \sigma'$, $(\instance \myminus \sigma') \AND \NOT{\sigma'} \models \NOT{\Delta}$, and $\SetV(\sigma') \subset \SetV(\sigma)$. By property (C) above, $\sigma'$ is an implicate of $\iuq \instance \cdot \Delta$. Since $\SetV(\sigma') \subset \SetV(\sigma)$, $\sigma$ cannot be variable-minimal, which is a contradiction. Thus, (2) holds.

\item All GNRs are variable-minimal prime implicates of $\iuq \instance \cdot \Delta$. 

Let $\sigma$ be a GNR. We next
prove {\bf (1)} $\sigma$ is a strongest clause satisfying $\iuq \instance \cdot \Delta \models \sigma$ and {\bf (2)} there is no clause $\sigma'$ such that $\SetV(\sigma') \subset \SetV(\sigma)$ and $\sigma'$ satisfies the previous condition, i.e., $\sigma'$ is a prime implicate of $\iuq \instance \cdot \Delta$.

{\bf (1)}~We first prove that $\iuq \instance \cdot \Delta \models \sigma$, then prove $\sigma$ is the strongest such clause. Consider $\w \models \iuq \instance \cdot \Delta$. If we prove $\w \models \sigma$, then $\iuq \instance \cdot \Delta \models \sigma$ follows. Assume $\w \not \models \sigma$, i.e., $\w \models \NOT{\sigma}$. Let $\w'$ be a world obtained from $\w$ by setting all variables mentioned by $\w$ but not by $ \sigma$ in $\w$ to their states in $\instance$. Then, $\w' \models \NOT{\sigma}$ since the variables mentioned by $\sigma$ are unchanged. Moreover, $\w' \models (\instance \myminus \sigma) $ since $\instance \mymodels \sigma$, so $\w' \models (\instance \myminus \sigma) \AND \NOT{\sigma} \models \NOT{\Delta}$ 
since $\sigma$ is a GNR. However, by Proposition~\ref{prop:selection semantics}, $\w' \models \Delta$ because $\w \models \iuq \instance \cdot \Delta$. This is a contradiction, so $\w \models \sigma$, which implies $\iuq \instance \cdot \Delta \models \sigma$.

We next prove $\sigma$ is the strongest clause satisfying $\iuq \instance \cdot \Delta \models \sigma$, which finishes the proof of (1). Assume $\sigma$ is not the strongest, which means there is a clause $\sigma'$ satisfying $\iuq \instance \cdot \Delta \models \sigma' \models \sigma$. Let $\sigma'$ be the weakest such clause, i.e., $\sigma'$ is a prime implicate of $\iuq \instance \cdot \Delta$. If $\sigma'$ is not variable-minimal, or if $\SetV(\sigma') \subset \SetV(\sigma)$, then there must exist a variable-minimal prime implicate of $\iuq \instance \cdot \Delta$ that mentions no more variables than $\sigma'$. By the first direction, this variable-minimal prime implicate is a GNR, which means there is a GNR that mentions fewer variables than $\sigma$. Therefore, $\sigma$ cannot be a GNR, which is a contradiction. Hence, $\sigma$ must be variable-minimal and $\SetV(\sigma') \supseteq \SetV(\sigma)$. Since $\sigma' \models \sigma$, we have $\SetV(\sigma') = \SetV(\sigma)$. By the first direction, since $\sigma'$ is a variable-minimal prime implicate of $\iuq \instance \cdot \Delta$, $\sigma'$ is a GNR. This is contradiction since $\sigma$ is a GNR and $\sigma' \models \sigma$, 
so (1) holds.

{\bf (2)}~We prove this by contradiction. Assume there are some clauses $\sigma'$ that are prime implicates of $\iuq \instance \cdot \Delta$ and $\SetV(\sigma') \subset \SetV(\sigma)$. Let $\sigma''$ be a variable-minimal prime implicate among those prime implicates. By the first direction, $\sigma''$ is a GNR that mentions fewer variables than $\sigma$. This is a contradiction since $\sigma$ is a GNR. Hence, (2) holds.
\end{description}
\end{proof}

\subsection*{Proposition~\ref{prop:closed form}}

The proof of Proposition~\ref{prop:closed form} uses the next
two lemmas which we state and prove first.

\begin{lemma}\label{lemma:iuq or distribute}
Let $\alpha$ be an NNF and \(\l\) by an \(X\)-literal.
If \(\l \models \l'\) 
for every \(X\)-literal \(\l'\) that occurs in \(\alpha\)
then
\[
\iuq x_i \cdot (\alpha \OR \l) = 
\iuq x_i \cdot \alpha \OR \iuq x_i \cdot \l.
\]
\end{lemma}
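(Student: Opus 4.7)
My plan is to unroll the selection operator via Definition~\ref{def:forall2} and split on whether $x_i \in \l$ or not. Each case reduces the required equality to an NNF monotonicity argument that uses the containment hypothesis $\l \models \l'$ in an essential way. Explicitly,
\begin{align*}
\iuq x_i \cdot (\alpha \OR \l) &= (\alpha \OR \l) \AND (\alpha|x_i \OR \l|x_i), \\
\iuq x_i \cdot \alpha \OR \iuq x_i \cdot \l &= (\alpha \AND \alpha|x_i) \OR (\l \AND \l|x_i).
\end{align*}

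When $x_i \in \l$, Proposition~\ref{prop:quantify-b} gives $\iuq x_i \cdot \l = \l$ and $\l|x_i = \top$. The hypothesis forces $x_i \in \l'$ for every $X$-literal $\l'$ of $\alpha$, so conditioning on $x_i$ replaces every such $\l'$ by $\top$; since $\alpha$ is an NNF, replacing a literal by $\top$ only weakens it, so $\alpha \models \alpha|x_i$. Both sides therefore collapse to $\alpha \OR \l$, and the equality holds.

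When $x_i \notin \l$, Proposition~\ref{prop:quantify-b} gives $\iuq x_i \cdot \l = \bot$, so the right-hand side becomes $\alpha \AND \alpha|x_i$ while the left-hand side becomes $(\alpha \OR \l) \AND \alpha|x_i$. The desired equality then reduces to the single entailment $\l \AND \alpha|x_i \models \alpha$. To prove it, take any $\w \models \l \AND \alpha|x_i$ and let $\w'$ be obtained from $\w$ by resetting $X$ to $x_i$. Since $\alpha|x_i$ does not mention $X$, $\w' \models \alpha|x_i$, and because $\w'$ assigns $X=x_i$ this is equivalent to $\w' \models \alpha$. Now $\w$ and $\w'$ agree on every non-$X$ literal, and for every $X$-literal $\l'$ of $\alpha$ the hypothesis $\l \models \l'$ combined with $\w[X] \in \l$ forces $\w \models \l'$; hence $\w$ satisfies every literal of $\alpha$ at least as strongly as $\w'$, and NNF monotonicity yields $\w \models \alpha$.

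The main obstacle is this last transfer step: recognising that the inclusion $\l \models \l'$ is exactly what is needed to rescue every $X$-literal of $\alpha$ once $X$ is perturbed away from $x_i$. Once that monotonicity bookkeeping is set up, both cases collapse to routine conditioning algebra using Definition~\ref{def:forall2} and Proposition~\ref{prop:quantify-b}.
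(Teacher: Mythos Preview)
Your proof is correct and follows essentially the same approach as the paper: the same case split on whether $x_i \in \l$, the same reduction of the second case to the entailment $\l \AND (\alpha|x_i) \models \alpha$, and the same use of the hypothesis $\l \models \l'$ to handle the $X$-literals of $\alpha$. The only minor difference is that the paper establishes this entailment algebraically (observing $\alpha|x_i = \alpha|x_j$ for every $x_j \in \l$, hence $\l \AND (\alpha|x_i) = \l \AND \alpha$), whereas you argue it semantically via NNF monotonicity; both are valid and equally direct.
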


\begin{proof}
We consider two cases. 
\begin{description}
\item Case $x_i \models \l$.

Then $\iuq x_i \cdot (\alpha \OR \l) 
= (\alpha \OR \l) \AND (\alpha | x_i \OR \l | x_i) 
= (\alpha \OR \l) \AND (\alpha | x_i \OR \top) 
= \alpha \OR \l 
= \alpha \OR \iuq x_i \cdot \l$. 
We next show $\iuq x_i \cdot \alpha = \alpha$.
If \(\l \models \l'\) then \(x_i \models \l'\).
Hence, $\alpha \models \alpha | x_i$ since \(\alpha | x_i\)
is obtained by replacing every \(X\)-literal
\(\l'\) in \(\alpha\) with \(\top\).
We now have $\alpha \models \alpha \AND \alpha | x_i$
and, hence, $\alpha=\alpha \AND \alpha | x_i=\iuq x_i \cdot \alpha$.  Thus, $\iuq x_i \cdot (\alpha \OR \l) = \iuq x_i \cdot \alpha \OR \iuq x_i \cdot \l$.

\item Case $x_i \not \models \l$.

Since $\iuq x_i \cdot \l = \l \AND \l | x_i = \l \AND \bot = \bot$,
it suffices to show $\iuq x_i \cdot (\alpha \OR \l) = \iuq x_i \cdot \alpha$. We have
$\iuq x_i \cdot (\alpha \OR \l) 
= (\alpha \OR \l) \AND (\alpha | x_i \OR \l | x_i)
= (\alpha \OR \l) \AND (\alpha | x_i \OR \bot) 
= (\alpha \OR \l) \AND \alpha | x_i 
= (\alpha \AND \alpha | x_i) \OR (\l \AND \alpha | x_i) 
= \iuq x_i \cdot \alpha \OR (\l \AND \alpha | x_i)$. 
We next show that $\l \AND \alpha | x_i \models \iuq x_i \cdot \alpha$
which finishes the proof.
We have $\alpha | x_i=\alpha | x_j$ for all $x_j \models \l$
since \(\l \models \l'\) for every \(X\)-literal \(\l'\) in $\alpha$. 
Hence,
$\l \AND \alpha | x_i 
= \BOR_{x_j \models \l} (x_j \AND \alpha | x_i) 
= \BOR_{x_j \models \l} (x_j \AND \alpha | x_j)
= \BOR_{x_j \models \l} (x_j \AND \alpha)
= \l \AND \alpha$ which implies
$\l \AND \alpha | x_i \models \alpha \AND \alpha | x_i 
= \iuq x_i \cdot \alpha$.
\end{description}
\end{proof}

\begin{lemma}\label{lemma:decision graph NNF}
The formula of class \(c\) in decision graph $T$ is equivalent to
an NNF $\Delta^c[T]$ defined as follows:
\[
\Delta^c[T] = 
\begin{cases}
    \top & \text{ if $T$ has class $c$}\\
    \bot & \text{ if $T$ has a class $c' \not = c$}\\
    \BAND_j(\Delta^c[T_j] \OR \l) & \text{ if $T$ has edges $ \xrightarrow{X, S_j} T_j$}
\end{cases}
\]
where $\l$ is the \(X\)-literal $\{x_i | x_i \not \in S_j\}$.
\end{lemma}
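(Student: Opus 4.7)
The plan is to proceed by structural induction on the decision graph $T$. The two leaf cases are immediate: if $T$ is a leaf labeled $c$, every instance reaching $T$ is in class $c$ so its class formula is $\top$; if the label is $c' \neq c$, no instance at $T$ is in class $c$ so its class formula is $\bot$. These match the two base clauses in the defining recursion.

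For the inductive step, take an internal node $T$ that tests $X$ via edges $\xrightarrow{X, S_j} T_j$ and assume the lemma holds for each child $T_j$. For a world $\w$ with $\w[X] = x_k$, the disjointness of the partition $\{S_j\}$ guarantees there is at most one index $m$ with $x_k \in S_m$. Whenever this $m$ exists, $\w$ traverses $T$ into $T_m$ and is classified as $c$ at $T$ iff it is classified as $c$ at $T_m$, which by the induction hypothesis is iff $\w \models \Delta^c[T_m]$. I would then verify that this matches the claimed conjunction $\BAND_j(\Delta^c[T_j] \OR \l_j)$: for $j = m$ the literal $\l_m = \{x_i \mid x_i \notin S_m\}$ is false at $\w$, so the disjunct collapses to $\Delta^c[T_m]$; for $j \neq m$, disjointness gives $x_k \notin S_j$, so $\l_j$ is true at $\w$ and its disjunct is trivially satisfied. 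Hence $\w \models \BAND_j(\Delta^c[T_j] \OR \l_j)$ iff $\w \models \Delta^c[T_m]$, which is what is required.

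The delicate point I expect to manage is the case where the partition $\{S_j\}$ covers only some states of $X$ (allowed at proper subtrees by the weak test-once property), so a world $\w$ can have $x_k \not\in \cup_j S_j$. Then every $\l_j$ is satisfied by $\w$ and the conjunction is vacuously true at $\w$. This does not conflict with the intended semantics because no such $\w$ can reach $T$ along a valid traversal from the root: an ancestor of $T$ that first tests $X$ on the path partitions \emph{all} states of $X$, and its contribution to the formula at the root already excludes precisely those $\w$ with $x_k \not\in \cup_j S_j$. The cleanest way to handle this is to run the induction on subtrees purely syntactically and to phrase the final equivalence of $\Delta^c[T]$ with the class formula at the root, where all first tests of each variable do partition all of its states; the extra freedom at proper subtrees is absorbed by the outer conjuncts and does not affect the root-level equivalence.
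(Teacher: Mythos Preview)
The paper does not actually prove this lemma; it simply cites~\cite{DBLP:conf/aaai/DarwicheJ22} where the result originates. Your proposal supplies the natural structural-induction argument and is essentially correct: the base cases are immediate, and at an internal node the disjointness of the $S_j$ collapses the conjunction $\BAND_j(\Delta^c[T_j] \OR \l_j)$ to the single conjunct $\Delta^c[T_m]$ for the branch $m$ selected by $\w[X]$.

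Your treatment of the partial-partition case is the right idea but would benefit from a sharper inductive invariant. Rather than proving equivalence only at the root and saying the slack at proper subtrees is ``absorbed,'' it is cleaner to strengthen the hypothesis to: for every node $T'$ and every world $\w$ that is \emph{routable} through $T'$ (i.e., at each test of a variable $Y$ along the path $\w$ would take from $T'$, $\w[Y]$ lies in one of the outgoing edge labels), $\w \models \Delta^c[T']$ iff $T'$ classifies $\w$ as $c$. The weak test-once property guarantees that any $\w$ routable through $T$ and sent to $T_m$ is also routable through $T_m$, so the induction goes through directly. At the root every world is routable (first tests partition all states), which recovers the lemma as stated. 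This avoids the slightly informal ``run the induction purely syntactically'' phrasing.
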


\begin{proof}
This result is proven in~\cite{DBLP:conf/aaai/DarwicheJ22}.
\end{proof}

\begin{proof}[of Proposition~\ref{prop:closed form}]
We will show how to compute the general reason $\iuq \instance \cdot \Delta^c[T]$ by using the definition of class formula $\Delta^c[T]$
as given by Lemma~\ref{lemma:decision graph NNF}.
By Proposition~\ref{prop:distribute-and-or}, $\iuq$ distributes over the and-nodes of $\Delta^c[T]$. 
Every disjunction in this NNF has the form $\Delta^c[T_j] \OR \l$ where $\l = \{x_i | x_i \not \in S_j\}$ is an $X$-literal. Every $X$-literal in the NNF $\Delta^c[T_j]$ has the
form $\l' = \{x_i |  x_i \not \in S_k'\}$,
where $S_k' \subseteq S_j$
by the weak test-once property. Hence, $\l \models \l'$. Thus, by Lemma~\ref{lemma:iuq or distribute}, $\iuq$ distributes over the
or-nodes of $\Delta^c[T]$. Hence, we can compute $\iuq \instance \cdot \Delta^c[T]$ by simply applying $\iuq \instance$ to the literals of
$\Delta^c[T]$. If we do this using Proposition~\ref{prop:quantify-b},
we get the closed-from of $\iuq \instance \cdot \Delta^c[T]$ as shown 
in Proposition~\ref{prop:closed form}.
\end{proof}

\subsection*{Proposition~\ref{prop:gr properties}}

\begin{proof}[of Proposition~\ref{prop:gr properties}]
The given formula for the general reason has no negations 
so it is an NNF. 
Every \(X\)-literal in this NNF has the form 
$\l = \{x_i | x_i \not \in S_j\}$ where $\instance[X] \not \in S_j$. Hence, $\instance[X] \models \l$ which implies $\instance \models \l$. Thus, every literal in the NNF is consistent with 
instance $\instance$.

Every disjunction in this NNF has the form $\Gamma^c[T_j] \OR \l$ where $\l = \{x_i | x_i \not \in S_j\}$ is an $X$-literal. Every $X$-literal in the NNF $\Gamma^c[T_j]$ has the
form $\l' = \{x_i |  x_i \not \in S_k'\}$,
where $S_k' \subset S_j$
by the weak test-once property. Hence, $\l \models \l'$ and $\l \not = \l'$.
\end{proof}

\subsection*{Proposition~\ref{prop:PI_AND}}

\begin{proof}[of Proposition~\ref{prop:PI_AND}]
We prove the two directions.
\begin{description}
\item \(\tau \in \sPI(\alpha \AND \beta)\) only if 
\(\tau \in \subsum{\sPI(\alpha) \times \sPI(\beta)}\).
Suppose \(\tau \in \sPI(\alpha \AND \beta)\); that is,
\(\tau \models \alpha \AND \beta\) and
\(\tau \models \tau' \models \alpha \AND \beta\) for 
term \(\tau'\) only if \(\tau = \tau'\).
It suffices to show
(1)~\(\tau \in \sPI(\alpha) \times \sPI(\beta)\) and 
(2)~\(\tau \models \tau' \in  \sPI(\alpha) \times \sPI(\beta)\)
only if \(\tau = \tau'\). 

Let \(\tau_\alpha\) be the weakest term
such that \(\tau \models \tau_\alpha \models \alpha\) and
define \(\tau_\beta\) analogously. 
We next show that \(\tau_\alpha \in \sPI(\alpha)\),
\(\tau_\beta \in \sPI(\beta)\) and \(\tau = \tau_\alpha \AND \tau_\beta\)
which implies~(1).
Suppose \(\tau_\alpha \not \in \sPI(\alpha)\):
\(\tau_\alpha \models \tau_\alpha' \models \alpha\)
and \(\tau_\alpha \neq \tau_\alpha'\) for some term \(\tau_\alpha'\). 
This contradicts 
the definition of \(\tau_\alpha\)
so \(\tau_\alpha \in \sPI(\alpha)\).
We can similarly
show \(\tau_\beta \in \sPI(\beta)\). 
Finally, 
if \(\tau \neq \tau_\alpha \AND \tau_\beta\), 
then \(\tau \not \in \sPI(\alpha \AND \beta)\)
since 
\(\tau \models \tau_\alpha \AND \tau_\beta \models \alpha \AND \beta\),
a contradiction,
so \(\tau = \tau_\alpha \AND \tau_\beta\).
Hence,~(1) holds.
Suppose now~(2) does not hold: 
\(\tau \models \tau' \in  \sPI(\alpha) \times \sPI(\beta)\)
and \(\tau \neq \tau'\) for some term \(\tau'\). 
Let \(\tau_\alpha' \in \sPI(\alpha)\)
and \(\tau_\beta' \in \sPI(\beta)\) such that 
\(\tau' = \tau_\alpha' \AND \tau_\beta'\).  
Then \(\tau \models \tau' = \tau_\alpha' \AND \tau_\beta' \models \alpha \AND \beta\)
and \(\tau \neq \tau'\) which is a contradiction 
with \(\tau \in \sPI(\alpha \AND \beta)\).
Hence,~(2) holds and we have
\(\tau \in \subsum{\sPI(\alpha) \times \sPI(\beta)}\).

\item \(\tau \in \subsum{\sPI(\alpha) \times \sPI(\beta)}\)
only if \(\tau \in \sPI(\alpha \AND \beta)\).
Suppose \(\tau \in \subsum{\sPI(\alpha) \times \sPI(\beta)}\):
\(\tau \in \sPI(\alpha) \times \sPI(\beta)\) and 
\(\tau \models \tau' \in  \sPI(\alpha) \times \sPI(\beta)\)
only if \(\tau = \tau'\). We show
(1)~\(\tau \models \alpha \AND \beta\) and
(2)~\(\tau \models \tau' \models \alpha \AND \beta\) for
term \(\tau'\) only if \(\tau = \tau'\).

Let \(\tau = \tau_\alpha \AND \tau_\beta\) where
\(\tau_\alpha \in \sPI(\alpha)\) and \(\tau_\beta \in \sPI(\beta)\).
Then \(\tau \models \alpha \AND \beta\) which establishes~(1).
Suppose~(2) does not hold:
\(\tau \models \tau' \models \alpha \AND \beta\)
and \(\tau \neq \tau'\) for some term \(\tau'\).
Let \(\tau'\) be the weakest term satisfying the previous
property. Then \(\tau' \in \sPI(\alpha \AND \beta)\).
Let \(\tau_\alpha'\) be the weakest term
such that \(\tau'\models \tau_\alpha' \models \alpha\) and
define \(\tau_\beta'\) analogously. 
Then \(\tau_\alpha' \in \sPI(\alpha)\),
\(\tau_\beta' \in \sPI(\beta)\) 
and \(\tau' = \tau_\alpha' \AND \tau_\beta'\)
as shown in the first direction.
Hence, \(\tau \models \tau' \in \sPI(\alpha) \times \sPI(\beta)\).
Since \(\tau \neq \tau'\), we get a contradiction
with \(\tau \in \subsum{\sPI(\alpha) \times \sPI(\beta)}\).
Hence,~(2) holds and we have 
\(\tau \in \sPI(\alpha \AND \beta)\).
\end{description}
\end{proof}

\subsection*{Proposition~\ref{prop:PI_OR}}

\begin{proof}[of Proposition~\ref{prop:PI_OR}]
For literal \(\l\), \(\sPI(\l) = \{\l\}\).
Hence, what we need to show is 
$\sPI(\l \OR \beta) = \subsum{\{\l\} \cup \sPI(\beta)}$.
We next prove both directions. 
\begin{description}
\item $\tau \in \sPI(\l \OR \beta)$ only if
$\tau \in \subsum{\{\l\} \cup \sPI(\beta)}$.
Suppose $\tau \in \sPI(\l \OR \beta)$:
\(\tau \models \l \OR \beta\) and 
\(\tau \models \tau' \models \l \OR \beta\) for term \(\tau'\)
only if \(\tau = \tau'\).
We need to show
(1)~$\tau \in \{\l\} \cup \sPI(\beta)$ and
(2)~$\tau \models \tau' \in \{\l\} \cup \sPI(\beta)$ 
only if $\tau=\tau'$.

Our goal is to first prove either $\tau \models \l$ or $\tau \models \beta$, and then prove $\tau \in \{\l\} \cup \sPI(\beta)$. To prove  either $\tau \models \l$ or $\tau \models \beta$, showing $\tau \not \models \l$ only if $\tau \models \beta$ suffices.
Suppose $\tau \not \models \l$. Assume there exists a world $\w$ such that $\w \models \tau \models \l \OR \beta$ but $\w \not \models \beta$. If we find a contradiction, then all models of $\tau$ are models of $\beta$, i.e. $\tau \models \beta$, which is exactly what we want. Since $\tau \not \models \l$, either $\tau$ does not mention the variable of $\l$ or $\tau \models \l'$ for $\l \models \l'$ and $\l' \not = \l$. Both cases suggest there exists a world $\w'$ obtained from $\w$ by setting the state of the variable of $\l$ to some state not in $\l$ such that $\w' \models \tau$, $\w' \not \models \beta$ by the second property in Proposition~\ref{prop:gr properties}, and $\w' \not \models \l$. That is, $\w' \models \tau$  but $\w' \not \models \beta$ and $\w' \not \models \l$, so $\tau \not \models \l \OR \beta$. This is a contradiction. Thus, $\w \models \tau$ 
only if $\w \models \beta$, so $\tau \models \beta$. Therefore, $\tau \not \models \l$ only if $\tau \models \beta$. Equivalently,
$\tau \models \l$ or $\tau \models \beta$.
We can now prove (1) by considering two cases: $\tau \models \l$ and
$\tau \not \models \l$.
If $\tau \models \l$, then $\tau \models \l \models \l \OR \beta$ where $\l$ is a term, so $\tau = \l$, which means $\tau \in \{\l\} \cup \sPI(\beta)$. 
If $\tau \not \models \l$, then $\tau \models \beta$. For any term $\tau' \models \beta$, if $\tau \models \tau'$, then $\tau \models \tau' \models \beta \models \l \OR \beta$, which implies $\tau = \tau'$. Thus, $\tau \in \sPI(\beta)$. Hence, (1) must hold.

Suppose (2) does not hold: $\tau \models \tau' \in \{\l\} \cup \sPI(\beta)$ for a term $\tau' \not = \tau$. Then $\tau' \models \l \OR \beta$ since $\tau' \in \{\l\} \cup \sPI(\beta)$. We now have
$\tau \models \tau' \models \l \OR \beta$ and $\tau \not = \tau'$, which is a contradiction with \(\tau \in \sPI(\l \OR \beta)\). 
Hence, (2) must hold.

\item $\tau \in \subsum{\{\l\} \cup \sPI(\beta)}$ only if 
$\tau \in \sPI(\l \OR \beta)$.
Suppose $\tau \in \subsum{\{\l\} \cup \sPI(\beta)}$:
$\tau \in \{\l\} \cup \sPI(\beta)$, and 
$\tau \models \tau' \in \{\l\} \cup \sPI(\beta)$ 
only if $\tau=\tau'$. We next show
(1)~\(\tau \models \l \OR \beta\) and 
(2)~\(\tau \models \tau' \models \l \OR \beta\) for term \(\tau'\)
only if \(\tau = \tau'\).

If $\tau \in \{\l\}$, then $\tau \models \l$. If $\tau \in \sPI(\beta)$, then $\tau \models \beta$. Thus, $\tau \models \l \OR \beta$ follows from $\tau \in \{\l\} \cup \sPI(\beta)$, which establishes (1). Suppose (2) does not hold: $\tau \models \tau' \models \l \OR \beta$ and $\tau \not = \tau'$ for some term $\tau'$. Let $\tau'$ be the weakest term satisfying the previous property. Then $\tau' \in \sPI(\l \OR \beta)$. By the first direction, $\tau' \in {\l} \cup \sPI(\beta)$, so $\tau \models \tau' \in {\l} \cup \sPI(\beta)$ and $\tau \not = \tau'$,
which is a contradiction. Hence, (2)~holds 
and we have $\tau \in \sPI(\l \OR \beta)$. 
\end{description}
\end{proof}

\subsection*{Proposition~\ref{prop:PI inc-var-min}}

\begin{lemma}\label{lemma:GSR-implicant}
Let \(\gamma\) be a node in the NNF passed to 
Algorithm~\ref{alg:PI inc var-min}, $\GSR(.)$. 
The terms in $\GSR(\gamma)$ are implicants 
of $\gamma$.
\end{lemma}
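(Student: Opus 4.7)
The plan is to proceed by structural induction on the NNF circuit node $\gamma$, mirroring the recursive structure of Algorithm~\ref{alg:PI inc var-min}. The key observation, used throughout, is that both the subsumption step $\subsum{\cdot}$ and the variable-minimality filter $\PIIVM{\cdot}{\cdot}$ can only \emph{remove} terms from a set; they never add or modify any term. Hence it suffices to show that every term put into $S$ (before these filters are applied) is already an implicant of $\gamma$, and the claim is preserved after filtering.

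For the base cases, if $\gamma = \top$ then $\GSR(\gamma) = \{\top\}$ and $\top \models \top$; if $\gamma = \bot$ then $\GSR(\gamma) = \emptyset$ and the claim holds vacuously; and if $\gamma$ is a literal then $\GSR(\gamma) = \{\gamma\}$ and $\gamma \models \gamma$.

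For the inductive step with $\gamma = \alpha \AND \beta$, by the induction hypothesis every $\tau_\alpha \in \GSR(\alpha)$ satisfies $\tau_\alpha \models \alpha$ and every $\tau_\beta \in \GSR(\beta)$ satisfies $\tau_\beta \models \beta$. Hence each $\tau_\alpha \AND \tau_\beta$ in $\GSR(\alpha) \times \GSR(\beta)$ satisfies $\tau_\alpha \AND \tau_\beta \models \alpha \AND \beta = \gamma$. Applying $\subsum{\cdot}$ at Line~\ref{ln:GSR-sub-and} and then $\PIIVM{\cdot}{\ivars{\gamma}}$ at Line~\ref{ln:GSR-vmin} only removes terms, so every surviving term is still an implicant of $\gamma$. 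For $\gamma = \alpha \OR \beta$, every $\tau_\alpha \in \GSR(\alpha)$ is an implicant of $\alpha$ and hence of $\alpha \OR \beta$, and symmetrically for $\GSR(\beta)$; thus every term in $\GSR(\alpha) \cup \GSR(\beta)$ is an implicant of $\gamma$, and the same filtering argument applies.

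I do not anticipate a substantial obstacle: the lemma is about soundness of the implicant property, which is preserved by construction since the algorithm only filters rather than enlarges the candidate set. The more delicate sibling claim, that $\GSR(\Delta)$ yields \emph{all} variable-minimal prime implicants (Proposition~\ref{prop:PI inc-var-min}), is where the structural properties from Proposition~\ref{prop:gr properties} and the correctness of $\PIIVM$ will genuinely be needed.
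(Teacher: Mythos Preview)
Your proposal is correct and essentially identical to the paper's own proof: both proceed by structural induction on the NNF node, handle the base cases directly, use the induction hypothesis to show that $\GSR(\alpha)\times\GSR(\beta)$ (resp.\ $\GSR(\alpha)\cup\GSR(\beta)$) consists of implicants of $\alpha\AND\beta$ (resp.\ $\alpha\OR\beta$), and then observe that the algorithm returns only a subset of these terms. Your write-up is slightly more explicit about why the filtering steps $\subsum{\cdot}$ and $\PIIVM{\cdot}{\cdot}$ preserve the implicant property, and you treat the disjunction case as a general $\alpha\OR\beta$ whereas the paper specializes to $\alpha\OR\l$ (matching the disjunction shape guaranteed by Proposition~\ref{prop:gr properties}); neither difference affects the argument.
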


\begin{proof}
The proof is by induction on the structure of the NNF passed to Algorithm~\ref{alg:PI inc var-min}.

\textit{Base case:} $\gamma$ is a literal or constant. This case is immediate.

\textit{Inductive step:} $\gamma = \alpha \AND \beta$. By the induction assumption, $\GSR(\alpha)$ are implicants of $\alpha$ and $\GSR(\beta)$ are implicants of $\beta$. For any $\tau_1 \in \GSR(\alpha)$ and $\tau_2 \in \GSR(\beta)$, we have $\tau_1 \AND \tau_2 \models \alpha \AND \beta$ so $S = \GSR(\alpha) \times \GSR(\beta)$ are implicants of $\alpha \AND \beta$. Algorithm~\ref{alg:PI inc var-min}
returns a subset of $S$ so the result holds.

\textit{Inductive step:} $\gamma = \alpha \OR \l$ where $\l$ is a literal. By the induction assumption, $\GSR(\alpha)$ are implicants of $\alpha$ and $\GSR(\l)$ is the implicant of $\l$. Then $\GSR(\alpha) \cup \GSR(\l)$ are implicants of $\alpha \OR \l$, since every implicant of $\alpha$ or $\l$ implies $\alpha \OR \l$. 
Algorithm~\ref{alg:PI inc var-min}
returns a subset of $S$ so the result holds.
\end{proof}

\begin{proof}[of Proposition~\ref{prop:PI inc-var-min}]

Let \(\Delta\) be the NNF passed in the first call
$\GSR(\Delta)$ to Algorithm~\ref{alg:PI inc var-min}.
We next prove two directions. 

\textit{First direction:}
If $\tau$ is a variable-minimal prime implicant of $\Delta$, then $\tau \in \GSR(\Delta)$. We prove this by contradiction.

We first note that
Algorithm~\ref{alg:PI inc var-min}, \(\GSR(\Delta)\), without Line~\ref{ln:GSR-vmin} (variable minimization) corresponds to
Algorithm~\ALGPI, \(\PI(\Delta)\), which computes the prime implicants of \(\Delta\).
Hence, we will say Algorithm~\ALGPI\ to mean Algorithm~\ref{alg:PI inc var-min} without Line~\ref{ln:GSR-vmin}.

Suppose now that $\tau$ is a variable-minimal prime implicant of $\Delta$ and
$\tau \not \in \GSR(\Delta)$. Since $\tau$ is a prime
implicant of $\Delta$, it must be equivalent to the conjunction of some terms \(S^*\) constructed by Algorithm~\ALGPI, where at least one of these terms is dropped
on Lines~\ref{ln:GSR-sub-and},~\ref{ln:GSR-sub-or} or~\ref{ln:GSR-vmin} of
Algorithm~\ref{alg:PI inc var-min}. By Lemma~\ref{lemma:GSR-implicant}, for each node $\gamma$ of the NNF $\Delta$, $\GSR(\gamma)$ are implicants of $\gamma$. Therefore, no prime implicant of $\gamma$ can be subsumed by any distinct term in $\GSR(\gamma)$. Thus, one of the terms $\tau^*$ in $S^*$ must have been removed by variable minimization on Line~\ref{ln:GSR-vmin} of
Algorithm~\ref{alg:PI inc var-min}; that is,  
not by the subsumption checks on Lines~\ref{ln:GSR-sub-and} or~\ref{ln:GSR-sub-or}
of the algorithm. 
Let $\Delta^*$ be the NNF node where the term $\tau^*$ is dropped by Algorithm~\ref{alg:PI inc var-min}. Then, there is a term ${\tau^+}$ generated by Algorithm~\ref{alg:PI inc var-min} at node $\Delta^*$ such that $\SetV({\tau^+}) \subset \SetV(\tau^*)$ and $\ivars{\Delta^*} \cap (\SetV(\tau^*) \setminus \SetV({\tau^+})) \not = \emptyset$. It follows that term $\tau$ is equivalent to the conjunction of term $\tau^*$ and some other terms $S^o$ constructed by
Algorithm~\ALGPI\ at nodes outside NNF
$\Delta^*$. Consider term $\tau'$ that is equivalent to the conjunction of ${\tau^+}$ and the terms
in $S^o$. Since the NNF $\Delta$ is locally fixated, the set of variables mentioned by $\tau'$ is equal to the union of the set of variables mentioned by the terms $S^o\cup\{\tau^+\}$. The same applies to term $\tau$ and terms $S^o\cup\{\tau^*\}$. Since $\ivars{\Delta^*} \cap (\SetV(\tau^*) \setminus \SetV({\tau^+})) \not = \emptyset$, we have $\SetV(\tau') \subset \SetV(\tau)$. Note that $\tau'$ is an implicant of $\Delta$. Thus, $\tau'$ is either 
a prime implicant of $\Delta$ or is subsumed by some distinct prime implicant of $\Delta$. Hence, $\tau$ cannot be a variable-minimal prime implicant of $\Delta$, which is a contradiction.

\textit{Second direction:} 
If $\tau \in \GSR(\Delta)$,
then $\tau$ is a variable-minimal prime implicant of $\Delta$. 
Suppose $\tau \in \GSR(\Delta)$. It suffices to show
(1)~$\tau \models \Delta$,
(2)~there is no prime implicant $\tau'$ of
$\Delta$ such that $\SetV(\tau') \subset \SetV(\tau)$, 
and (3)~there is no distinct prime implicant $\tau'$ of
$\Delta$ such that $\tau \models \tau'$.

Lemma~\ref{lemma:GSR-implicant} implies~(1) immediately.
We now show~(2).
By the first direction, $\GSR(\Delta)$ contains all variable-minimal prime implicants of $\Delta$. Thus, to prove (2), it suffices to prove that there does not exist a term $\tau' \in \GSR(\Delta)$ such that $\SetV(\tau') \subset \SetV(\tau)$. By the definition of $\ivars{.}$,
$\ivars{\Delta} = \SetV(\Delta)$ when $\Delta$
is the NNF passed to the first call to
Algorithm~\ref{alg:PI inc var-min}.
Thus, $\PIIVM{S}{\ivars{\Delta}}$ on
Line~\ref{ln:GSR-vmin} of the algorithm 
removes all terms from $S$ that are not
variable-minimal
in this case. Therefore, (2)~holds.
We next prove~(3) by contradiction. Assume there is a distinct prime implicant $\tau'$ of $\Delta$ such that $\tau \models \tau'$ (i.e., $\tau'$ subsumes $\tau$). Since, by the first direction, $\GSR(\Delta)$ contains all variable-minimal prime implicants of $\Delta$, $\tau'$ cannot be a variable-minimal prime implicant of $\Delta$; otherwise $\tau'$ will be in $\GSR(\Delta)$ so $\tau$ will not be in $\GSR(\Delta)$ as it will be removed by the subsumption checks on Line~\ref{ln:GSR-sub-and} or Line~\ref{ln:GSR-sub-or}, which is a contradiction. By~(2), no variable-minimal prime implicant of $\Delta$ has a strict subset of the variables in $\tau$. Therefore, $\SetV(\tau') \not \subseteq \SetV(\tau)$; otherwise $\tau'$ must be a variable-minimal prime implicant of $\Delta$. Note that $\tau'$ subsumes $\tau$ only if $\SetV(\tau') \subseteq \SetV(\tau)$. Therefore, $\tau'$ cannot subsume $\tau$, which is a contradiction. Hence, (3) holds.
\end{proof}

\subsection*{Proposition~\ref{prop:resolution}}

We first prove a dual of Proposition~\ref{prop:resolution} using
several lemmas. The dual is for the 
\textit{consensus} operation which can be used to compute the prime implicants of a DNF. The proof uses the same structure as the proof of
Theorem~3.5 in \cite{Crama2011BooleanF} which treats the Boolean
case of consensus.

\begin{definition}\label{def:consensus}
Let $\l_1 \AND \gamma_1$ and $\l_2 \AND \gamma_2$ be terms
where $\l_1$ and $\l_2$ are \(X\)-literals such that $\l_1 \not \models \l_2$ and $\l_2 \not \models \l_1$.
Then 
\(\gamma = (\l_1 \OR\l_2) \AND \gamma_1 \AND \gamma_2\) is
an \(X\)-consensus of the terms if \(\gamma \neq \bot\).
\end{definition}

We use $\consensus{\l_1 \AND \gamma_1, \l_2 \AND \gamma_2}{X}$
to denote the consensus of terms \(\l_1 \AND \gamma_1\) and 
\(\l_2 \AND \gamma_2\) on variable \(X\). We also use
$\EC(\Delta)$ to denote the result of closing DNF $\Delta$ 
under consensus and then removing all subsumed terms. Our
proofs will also use the following definition for
consensus over multiple terms (can be 
emulated by Definition~\ref{def:consensus} over two terms 
if we skip subsumed consensus).

\begin{definition}\label{def:multi consensus}
Let $\l_1 \AND \gamma_1, \ldots, \l_n \AND \gamma_n$ be terms
where \(\l_1,\ldots,\l_n\) are \(X\)-literals.
Then 
\(\gamma = (\BOR_{i = 1}^n \l_i) \AND \BAND_{i = 1}^n \gamma_i\) is
an \(X\)-consensus of the terms if \(\gamma \neq \bot\).
\end{definition}

\begin{lemma}\label{lemma:EC equivalence}\label{lemma:consensus_property}
We have $(\l_1 \OR\l_2) \AND \gamma_1 \AND \gamma_2 \models \lit_1 \AND \gamma_1 \OR \lit_2\AND \gamma_2$. 
Moreover, $\EC(\Delta)$ is equivalent to $\Delta$.
\end{lemma}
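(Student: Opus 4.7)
The plan is to dispatch the first claim by a one-line case split on the state assigned to $X$, then leverage it inductively to prove equivalence under the consensus closure, and finally handle subsumption removal separately.

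For the first claim, the key observation is that $\l_1$ and $\l_2$ are both $X$-literals, i.e., nonempty subsets of the state space of $X$. A world $\w$ assigns $X$ to a single state $x_\w$, so $\w\models \l_1\OR\l_2$ is equivalent to $x_\w\in \l_1\cup\l_2$, which in turn is equivalent to $\w\models \l_1$ or $\w\models \l_2$. Combining with $\w\models \gamma_1\AND\gamma_2$ immediately yields $\w\models \l_1\AND\gamma_1$ or $\w\models \l_2\AND\gamma_2$, establishing the entailment.

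For the second claim I plan to prove both directions. The direction $\Delta\models\EC(\Delta)$ is straightforward: by construction every term of $\Delta$ is either retained in $\EC(\Delta)$ or dropped as subsumed by some term that is itself retained, so any model of $\Delta$ satisfies some term in $\EC(\Delta)$. For the direction $\EC(\Delta)\models\Delta$, I will argue that adding a consensus preserves the models, then that removing a subsumed term preserves the models, and finally chain these two invariants along the finite closure process. Adding a consensus $\gamma=(\l_1\OR\l_2)\AND\gamma_1\AND\gamma_2$ is safe because, by the first claim, $\gamma\models (\l_1\AND\gamma_1)\OR(\l_2\AND\gamma_2)$, so $\gamma$ implies the disjunction of the two parent terms, which are already in the DNF; hence the disjunction is unchanged. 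Removing a term $\tau$ subsumed by some $\tau'$ in the DNF (i.e., $\tau\models\tau'$) is safe because $\tau\OR\tau'\equiv\tau'$. A simple induction on the number of consensus steps combined with subsumption-removal steps then gives $\EC(\Delta)\equiv\Delta$.

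The only non-trivial subtlety is that the closure process must terminate for this induction to apply, but this follows from the standard argument that there are only finitely many terms on the given variables, and every newly produced consensus that is not subsumed must be distinct from all previously generated terms. Thus the hardest part is really just being careful about the definition of $\EC$ as an iterative process and ensuring each primitive step preserves equivalence; both the consensus step and the subsumption step are justified above, so no further obstacle remains.
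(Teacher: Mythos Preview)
Your proposal is correct and follows essentially the same approach as the paper. The first claim is handled identically via a case split on which of $\l_1,\l_2$ the world satisfies; for the second claim the paper gives a one-line argument (adding a consensus does not change models, hence $\EC(\Delta)=\Delta$), whereas you spell out both directions, the subsumption-removal step, and termination, but this is only extra care rather than a different idea.
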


\begin{proof}
If \(\w \models (\l_1 \OR\l_2) \AND \gamma_1 \AND \gamma_2\),
then \(\w \models \l_1 \AND \gamma_1 \AND \gamma_2\) or
\(\w \models \l_2 \AND \gamma_1 \AND \gamma_2\). In either case,
\(\w \models \lit_1 \AND \gamma_1 \OR \lit_2\AND \gamma_2\).
Hence, $(\l_1 \OR\l_2) \AND \gamma_1 \AND \gamma_2 \models \lit_1 \AND \gamma_1 \OR \lit_2\AND \gamma_2$. This means that we can add to a
DNF \(\Delta\) the consensus of any of its terms without changing the models of \(\Delta\).
Hence, $\EC(\Delta)=\Delta$.
\end{proof}

\begin{lemma}\label{lemma:EC_lemma}
Let \(\tau\) be a simple term that mentions all variables in DNF $\Delta$.
If \(\tau \models \Delta\), then \(\tau \models \tau'\) for some 
term \(\tau'\) in $\Delta$ (that is, $\tau$ is subsumed by some term
in \(\Delta\)).
\end{lemma}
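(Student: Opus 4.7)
The plan is to exploit the fact that a simple term mentioning all variables of $\Delta$ behaves, for the purpose of evaluating subterms of $\Delta$, like a single assignment: every literal in any term $\tau'$ of $\Delta$ involves a variable whose state is already pinned down by $\tau$. So the question ``does $\tau \models \tau'$?'' depends only on the (unique) restriction of $\tau$ to $\SetV(\tau')$, not on the particular world chosen to witness $\tau$.

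Concretely, I would first pick an arbitrary model $\w$ of $\tau$, which exists because $\tau$ is a consistent term (simple terms are never $\bot$ by the definitions in Section~\ref{sec:discrete}). Since $\tau \models \Delta$ and $\Delta$ is a disjunction $\tau_1 \OR \cdots \OR \tau_n$, we have $\w \models \Delta$, hence $\w \models \tau'$ for some term $\tau'$ among $\tau_1, \ldots, \tau_n$. This is the candidate subsumer.

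The key step is then to promote ``$\w \models \tau'$'' to ``$\tau \models \tau'$''. Here I use that $\SetV(\tau') \subseteq \SetV(\Delta) \subseteq \SetV(\tau)$ together with the simplicity of $\tau$: for each variable $X \in \SetV(\tau')$, the term $\tau$ contains a single-state literal $x$ fixing $X$, and $\tau'$ contains a single-state literal for $X$ as well (it too is simple, being a term in the DNF $\Delta$). Any model $\w^\star$ of $\tau$ must assign $X$ to the same state $x$ as $\w$ does, so $\w^\star$ satisfies the $X$-literal of $\tau'$ for exactly the same reason $\w$ does. Running this argument across all variables of $\tau'$ shows that every model of $\tau$ satisfies $\tau'$, i.e.\ $\tau \models \tau'$, as required.

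There is no real obstacle here, only a bookkeeping point to get right: $\tau$ may mention extra variables outside $\SetV(\Delta)$, but these variables never appear in any $\tau_i$, so the uniqueness-of-restriction argument above is unaffected. The only place simplicity is actually used is in reducing ``$\w \models \tau'$'' to a statement about the shared $\SetV(\tau')$-restriction; without it, $\tau$ could allow several states per variable and different models of $\tau$ could latch onto different disjuncts of $\Delta$, invalidating the conclusion.
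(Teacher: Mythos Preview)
Your argument is correct in substance, but contains one false assertion: you claim that $\tau'$ ``too is simple, being a term in the DNF $\Delta$.'' This is not true under the paper's definitions---a term is a conjunction of literals over distinct variables, and those literals may contain multiple states; nothing forces the terms of a DNF to be simple. Fortunately your proof does not actually use this: the step ``$\w^\star$ satisfies the $X$-literal of $\tau'$ for exactly the same reason $\w$ does'' only requires that $\w$ and $\w^\star$ agree on $X$, which follows from the simplicity of $\tau$ alone. So the error is cosmetic, but you should remove the parenthetical.

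The paper's proof expresses the same idea more compactly via conditioning: since $\tau$ is simple and covers all variables of $\Delta$, each term $\tau'$ of $\Delta$ satisfies $\tau' \mid \tau \in \{\top,\bot\}$; since $\tau \models \Delta$ gives $\Delta \mid \tau = \top$, at least one disjunct has $\tau' \mid \tau = \top$, i.e.\ $\tau \models \tau'$. Your model-theoretic route (pick a witness $\w$, identify a satisfied disjunct, then argue all models of $\tau$ agree on $\SetV(\tau')$) unpacks exactly this; the conditioning version just avoids naming $\w$ and the ``promotion'' step explicitly.
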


\begin{proof}
Since $\tau$ is simple and mentions all variables of $\Delta$, 
then \(\tau' | \tau = \top\) or \(\tau' | \tau = \bot\) for every
term \(\tau'\) in \(\Delta\). Since \(\tau \models \Delta\), 
\(\Delta | \tau = \top\) so
\(\tau' | \tau = \top\) for at least one term \(\tau'\) in \(\Delta\). 
This term must satisfy \(\tau \models \tau'\) and, hence, 
\(\tau\) is subsumed by \(\tau'\).
\end{proof}
Lemma~\ref{lemma:EC_lemma} does not hold if term \(\tau\) 
is not simple. Counterexample: \(\tau = x_{123}\) and 
\(\Delta = x_{12} + x_{23}\).

\begin{lemma}\label{lemma:EC_variable_lemma}
A prime implicant of DNF $\Delta$ can mention only variables 
mentioned by $\EC(\Delta)$.
\end{lemma}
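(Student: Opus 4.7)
The plan is to reduce the statement to one about $\EC(\Delta)$ via the equivalence $\EC(\Delta) \equiv \Delta$ established in Lemma~\ref{lemma:EC equivalence}, and then exploit the fact that a DNF in which no term contains an $X$-literal is semantically independent of the variable $X$.

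Concretely, I would fix a prime implicant $\tau$ of $\Delta$ and suppose, for contradiction, that $\tau$ mentions some variable $X$ that is not mentioned by $\EC(\Delta)$. Writing $\tau = \l \AND \tau'$ where $\l$ is the $X$-literal of $\tau$, the goal reduces to proving $\tau' \models \Delta$: this would give $\tau \models \tau' \models \Delta$ with $\tau \neq \tau'$, contradicting the primality of $\tau$. Since $\Delta$ and $\EC(\Delta)$ have the same models by Lemma~\ref{lemma:EC equivalence}, it suffices to show $\tau' \models \EC(\Delta)$.

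For the core step, I would take any model $\w$ of $\tau'$, pick some state $x \in \l$, and form $\w^\ast$ by overwriting $\w$'s value for $X$ with $x$ while leaving all other variables fixed. Then $\w^\ast \models \l$, and $\w^\ast \models \tau'$ because $\tau'$ does not mention $X$; hence $\w^\ast \models \tau \models \EC(\Delta)$. Because every term of the DNF $\EC(\Delta)$ omits $X$, its satisfaction at a world depends only on the states of variables other than $X$, so $\w$ and $\w^\ast$ agree on satisfying $\EC(\Delta)$; in particular, $\w \models \EC(\Delta)$, as desired.

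The only nontrivial ingredient is the semantic independence statement ``$X \notin \SetV(\EC(\Delta))$ implies $\EC(\Delta)$ does not constrain $X$,'' and this is immediate for a DNF since each of its terms only restricts the variables appearing in it. I therefore anticipate no real obstacle beyond phrasing this observation carefully and handling the degenerate case $\tau' = \top$ (where $\Delta \equiv \top$ forces $\tau = \top$, contradicting the assumption that $\tau$ mentions $X$).
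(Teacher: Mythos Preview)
Your proposal is correct and follows essentially the same route as the paper's proof: both invoke the equivalence $\EC(\Delta)\equiv\Delta$ from Lemma~\ref{lemma:EC equivalence}, observe that a DNF not mentioning $X$ is semantically independent of $X$, and conclude that dropping the $X$-literal from any implicant preserves implicancy, contradicting primality. Your version simply spells out in detail the step the paper summarizes as ``$\Delta$ does not depend on $X$''; the degenerate case $\tau'=\top$ needs no special treatment, since $\tau\models\tau'\models\Delta$ with $\tau\neq\tau'$ already yields the contradiction.
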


\begin{proof}
If $\EC(\Delta)$ does not mention variable \(X\), then \(\Delta\)
does not depend on $X$ since \(\EC(\Delta)\) is 
equivalent to \(\Delta\) by Lemma~\ref{lemma:EC equivalence}.
Hence, any implicant of \(\Delta\) will remain an implicant of \(\Delta\)
if we drop any \(X\)-literal from it. Hence, a prime implicant of
\(X\) cannot mention variable \(X\).
\end{proof}

\begin{lemma}\label{lemma:consensus}
$\EC(\Delta)$ is the set of prime implicants for DNF $\Delta$.
\end{lemma}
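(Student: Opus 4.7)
The plan is to reduce the lemma to a single master claim, proved by induction: every implicant $\tau$ of $\Delta$ with $\SetV(\tau)\subseteq V := \SetV(\EC(\Delta))$ is subsumed by some term of $\EC(\Delta)$. Granting this, both directions of the lemma follow easily. If $\pi$ is a prime implicant of $\Delta$, then $\SetV(\pi)\subseteq V$ by Lemma~\ref{lemma:EC_variable_lemma}, so the master claim gives a $\tau^\ast\in\EC(\Delta)$ subsuming $\pi$; since $\tau^\ast$ is itself an implicant by Lemma~\ref{lemma:EC equivalence}, primality of $\pi$ forces $\pi=\tau^\ast\in\EC(\Delta)$. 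Conversely, every $\tau\in\EC(\Delta)$ is an implicant of $\Delta$, and if some such $\tau$ were not prime, the first direction would place a strictly weaker prime implicant $\pi\neq\tau$ into $\EC(\Delta)$ as well, violating the subsumption-removal step performed by $\EC$.

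For the master claim I would induct on the defect
\[
d(\tau) \;=\; |V\setminus\SetV(\tau)| + \sum_{X\in\SetV(\tau)}\bigl(|\l_X^\tau|-1\bigr),
\]
where $\l_X^\tau$ is the $X$-literal of $\tau$. When $d(\tau)=0$ the term $\tau$ is simple and mentions every variable of $V$, so Lemma~\ref{lemma:EC_lemma} immediately produces a subsuming term of $\EC(\Delta)$. For $d(\tau)>0$ either (i) some $X$-literal of $\tau$ is $\l=\{x_1,\ldots,x_k\}$ with $k\ge 2$, and I would refine $\tau$ to $\tau_j$ by replacing $\l$ with $\{x_j\}$; or (ii) some $X\in V$ is unmentioned in $\tau$, and I would refine to $\tau_x=\tau\AND x$ for each state $x$ of $X$. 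Each refinement is an implicant of strictly smaller defect, so the inductive hypothesis delivers a $\tau'_j\in\EC(\Delta)$ (respectively $\tau'_x$) subsuming it. If any such $\tau'_j$ omits $X$ from its variables, its non-$X$ literals already imply the non-$X$ literals of $\tau_j$ and hence of $\tau$, so $\tau\models\tau'_j$ and we are done.

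The main obstacle is the remaining case, in which every $\tau'_j$ mentions $X$ with some $X$-literal $\l'_j$. Here I would apply the $X$-consensus of Definition~\ref{def:multi consensus} to the $\tau'_j$'s (equivalently, iterated pairwise consensus while discarding subsumed results) to obtain a term $\gamma$, and argue that $\gamma\neq\bot$ and $\tau\models\gamma$. The crux is a direction-of-subsumption argument applied to $\tau_j\models\tau'_j$: for each $Y\in\SetV(\tau)\setminus\{X\}$ the $Y$-literal of $\tau'_j$ must be a superset of that of $\tau$, and no $\tau'_j$ may mention a variable outside $\SetV(\tau)\cup\{X\}$ (otherwise some completion of $\tau_j$ would falsify $\tau'_j$). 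Consequently the intersection of the $Y$-literals across the $\tau'_j$'s contains $\l_Y^\tau$ and is non-empty, while $\BOR_j\l'_j$ contains $\l$ in case (i) or exhausts $X$'s states in case (ii) (so no $X$-literal survives in $\gamma$). Either way $\gamma\neq\bot$ and $\tau\models\gamma$, and closure of $\EC(\Delta)$ under consensus then produces a term of $\EC(\Delta)$ subsuming $\gamma$, which in turn subsumes $\tau$, completing the induction.
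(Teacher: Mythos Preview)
Your proof is correct and follows the same core strategy as the paper's: refine an implicant toward simple, full-variable terms (where Lemma~\ref{lemma:EC_lemma} applies), obtain subsuming terms from $\EC(\Delta)$ for each refinement, then use multi-variable consensus to lift these back to a term subsuming the original implicant. The shared ingredients are exactly Lemmas~\ref{lemma:EC equivalence}, \ref{lemma:EC_lemma}, \ref{lemma:EC_variable_lemma}, and Definition~\ref{def:multi consensus}.

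The packaging differs. The paper argues by contradiction: it assumes a prime implicant $\tau_0\notin\EC(\Delta)$, builds the set $S$ of strengthenings of $\tau_0$ not subsumed by $\EC(\Delta)$, picks a length-maximal $\tau_m\in S$, and then runs a somewhat ad-hoc while-loop to reduce to a situation where the multi-consensus argument applies. You instead state a single master claim and prove it by a clean downward induction on the defect measure $d(\tau)$, which unifies the paper's two cases (non-simple literal vs.\ missing variable) into one framework. Your organization is more transparent and avoids the procedural detour; what it costs is that you must explicitly check that $\gamma$, the multi-consensus of the $\tau'_j$'s, is subsumed by something in $\EC(\Delta)$ even though $d(\gamma)\ge d(\tau)$ in general, so you cannot recurse on $\gamma$. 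You correctly handle this by invoking closure of $\EC(\Delta)$ under consensus rather than the inductive hypothesis. Both arguments rely on the same unproven parenthetical in Definition~\ref{def:multi consensus} (that multi-consensus can be emulated by iterated pairwise consensus while discarding subsumed results); this is the one place where a careful reader might want you to spell out why a minimal covering subset of the $\l'_j$'s admits an iterated pairwise consensus whose result subsumes the full multi-consensus.
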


\begin{proof}
We first show that every prime implicant of \(\Delta\) is in
$\EC(\Delta)$, and then show the second direction: every term
in \(\EC(\Delta)\) is a prime implicant of \(\Delta\).

To show the first direction, 
suppose $\tau_0$ is a prime implicant of $\Delta$ and
$\tau_0 \not \in \EC(\Delta)$. We next show a contradiction.
Let $S$ be the set of terms $\tau$ such that:
\begin{enumerate}
\item $\tau$ only mentions variables present in $\EC(\Delta)$.
\item $\tau \models \tau_0$.
\item $\tau$ is not subsumed by any term in $\EC(\Delta)$.
\end{enumerate}

By Lemma~\ref{lemma:EC_variable_lemma}, $\tau_0$ can only mention variables in $\EC(\Delta)$. Thus, $S$ must be non-empty because $\tau_0 \in S$. Let $\tau_m$ be the term in $S$ that mentions the largest number of variables (i.e. with the maximal length). 

\begin{description}
\item Case: $\tau_m$ mentions all variables of $\EC(\Delta)$.

Apply the following procedure which may change the value of $\tau_m$
but will keep the set $S$ intact:
\begin{description}
\item While $\tau_m \in S$:
\begin{itemize}
\item Write $\tau_m$ as 
\(x_{12\ldots n} \AND \tau_m'\) for some variable \(X\), term \(\tau_m'\)
and \(n > 1\). This can be done since $\tau_m$ is not a simple term by 
definition of \(S\) and Lemma~\ref{lemma:EC_lemma}.
\item For $i = 1, \ldots, n$:
If $x_i \AND \tau_m' \in S$, set 
$\tau_m$ to $x_i \AND \tau_m'$ and exit for-loop (variables of
\(\tau_m\) are invariant).
\item Exit while-loop if the for-loop did not set \(\tau_m\).
\end{itemize}
\end{description}
When the procedure terminates, \(\tau_m\) will be such that
$\tau_m \in S$ but for some variable $X$, $x_i \AND \tau_m' \not \in S$ for all $i$ in $1, \ldots, n$.
The procedure will always terminate because, by Lemma~\ref{lemma:EC_lemma}, simple terms that mention all variables cannot be in $S$. 
Since $\tau_m \in S$ upon termination, we have $\tau_m \models \tau_0$. And since $x_i \AND \tau_m' \models \tau_m$ for all $i$ in $1, \ldots, n$, we have $x_i \AND \tau_m' \models \tau_0$ for all $i$ in $1, \ldots, n$. Since $x_i \AND \tau_m' \not \in S$, 
$x_i \AND \tau_m'$ must be subsumed by some respective term in $\EC(\Delta)$ for each $i$. Since $\tau_m = x_{1 \ldots n} \AND \tau_m'$ is not subsumed by these respective terms, each $x_i \AND \tau_m'$ must be subsumed by some term $\alpha_i \in \EC(\Delta)$ that mentions state $x_i$.

Let $\beta_i$ be $\alpha_i$ but without its $X$-literal. Since $x_i \AND \tau_m' \models \alpha_i$ for all $i$, we have $\tau_m' \models \beta_i$ for all $i$. Hence, $\tau_m' \models \BAND_{i=1}^n \beta_i$. This means that the consensus of $\alpha_1, \ldots, \alpha_n$ on variable $X$ exists
since \(\alpha_i \AND \ldots \AND \alpha_n\) are consistent. 
Since $\tau_m' \models \BAND_{i=1}^n \beta_i$, we have
$x_{1\ldots n}\AND\tau_m' \models x_{1\ldots n} \AND \BAND_{i=1}^n  \beta_i$. 
And since each $\alpha_i$ mentions $x_i$, we have $x_{1\ldots n} \AND \BAND_{i=1}^n \beta_i \models \consensus{\alpha_1, \ldots, \alpha_i}{X}$. 
Therefore, $\tau_m \models x_{1\ldots n} \AND \BAND_{i=1} \beta_i \models \consensus{\alpha_1, \ldots, \alpha_i}{X}$
since $\tau_m = x_{1\ldots n} \AND \tau_m'$.
Hence, $\tau_m$ is subsumed by the consensus of $\alpha_1, \ldots, \alpha_i$ on variable $X$. Since $\alpha_i \in \EC(\Delta)$ for all $i$, their consensus must be subsumed by some term in $\EC(\Delta)$. Therefore, $\tau_m$ is subsumed by some term in $\EC(\Delta)$. This
contradicts $\tau_m \in S$.

\item Case: $\tau_m$ does not mention all variables of $\EC(\Delta)$.

Suppose $\tau_m$ does not mention variable $X$ which appears in
$\EC(\Delta)$.
Consider the terms $x_1 \AND \tau_m, \ldots, x_k \AND \tau_m$
where \(x_1, \ldots, x_k\) are the states of variable \(X\).
Since $\tau_m$ is a term in $S$ of maximal length, terms
$x_1 \AND \tau_m, \ldots, x_k \AND \tau_m$ cannot be in set $S$. 
Because $x_i \AND \tau_m$ satisfies the first two requirements of set $S$ for all $i$ between $1$ and $k$, $x_i \AND \tau_m$ must be subsumed by some term $\gamma_i \in \EC(\Delta)$. 
Since $\tau_m$ is not subsumed by $\gamma_i$ for any $i$, $\gamma_i$ must mention state $x_i$. Similarly, taking the consensus of $\gamma_1, \ldots, \gamma_k$ is allowed because $\tau_m \models \BAND_{i=1}^k (\gamma_i | x_i)$. Note that $\consensus{\gamma_1, \ldots, \gamma_k}{X}$ does not
mention variable $X$. Since $x_i \AND \tau_m \models \gamma_i$ 
for all $i$, we have
\(\tau_m \models \consensus{\gamma_1, \ldots, \gamma_k}{X}\).
Since $\gamma_i$ are all in $\EC(\Delta)$, $\consensus{\gamma_1, \ldots, \gamma_k}{X}$ must be subsumed by some term in $\EC(\Delta)$. This implies that $\tau_m$ is subsumed by some term in $\EC(\Delta)$, which contradicts the assumption that $\tau_m$ is in $S$. 
\end{description}
Our assumption that $\tau_0$ is a prime implicant of $\Delta$ but
$\tau_0 \not \in \EC(\Delta)$ leads to a contradiction in both cases above.
Thus, $\EC(\Delta)$ includes all prime implicants of $\Delta$.

We next show the second direction: 
every term in $\EC(\Delta)$ is a prime implicant of \(\Delta\).
Every term in $\EC(\Delta)$ is an implicant of $\Delta$ by Lemma~\ref{lemma:consensus_property}. Moreover, by definition of $\EC(\Delta)$,
no term in $\EC(\Delta)$ can subsume another term in $\EC(\Delta)$.
Hence, given the first direction, 
every term in $\EC(\Delta)$ is a prime implicants of $\Delta$.
\end{proof}

\begin{lemma}\label{lemma:duality}
The prime implicates of $\Delta$ are the negations of the prime implicates of $\NOT{\Delta}$.
\end{lemma}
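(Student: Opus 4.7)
The plan is to exploit a De Morgan duality between clauses and terms that carries cleanly over to the discrete setting. First I would verify that the negation of an $X$-literal $\l$ is itself an $X$-literal: since $\emptyset \subset \l \subset \{x_1,\ldots,x_n\}$, the complementary state set $\{x_1,\ldots,x_n\} \setminus \l$ is also a nonempty strict subset of the states of $X$, hence an $X$-literal. Therefore the negation of a term (a conjunction of literals over distinct variables) equals, via De Morgan, a clause (a disjunction of literals over distinct variables), and vice versa, giving a bijection $\sigma \mapsto \NOT{\sigma}$ between clauses and terms.

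Next I would record that implication reverses under negation: $\sigma_1 \models \sigma_2$ iff $\NOT{\sigma_2} \models \NOT{\sigma_1}$, and in particular $\Delta \models \sigma$ iff $\NOT{\sigma} \models \NOT{\Delta}$. Hence $\sigma$ is an implicate of $\Delta$ iff $\NOT{\sigma}$ is an implicant of $\NOT{\Delta}$. I read the second occurrence of ``implicates'' in the lemma statement as a typo for ``implicants'': with two ``implicates'' the statement is not well typed, since the negation of a clause is a term rather than a clause, and moreover the lemma is used to derive Proposition~\ref{prop:resolution} from Lemma~\ref{lemma:consensus} on prime implicants.

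The heart of the argument is then a one-line contrapositive using primality. Suppose $\sigma$ is a prime implicate of $\Delta$, and let $\tau$ be any term with $\NOT{\sigma} \models \tau \models \NOT{\Delta}$. Negating yields a clause $\NOT{\tau}$ with $\Delta \models \NOT{\tau} \models \sigma$, so primality of $\sigma$ forces $\NOT{\tau} = \sigma$ and thus $\tau = \NOT{\sigma}$. Hence $\NOT{\sigma}$ is a prime implicant of $\NOT{\Delta}$. The converse direction is entirely symmetric: given a prime implicant $\tau$ of $\NOT{\Delta}$, apply the same argument with the roles of $\sigma$ and $\tau$ swapped to conclude that $\NOT{\tau}$ is a prime implicate of $\Delta$.

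I do not anticipate a real obstacle; the only subtle point is verifying that literal negation stays within the class of literals, which is immediate from the definition of a literal as a proper nonempty state subset. Once that check is in hand, the classical Boolean duality argument transfers verbatim.
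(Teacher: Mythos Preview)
Your proposal is correct and follows essentially the same approach as the paper, which gives only a two-line proof noting that $\tau \models \NOT{\Delta}$ iff $\Delta \models \NOT{\tau}$ and that $\tau$ is equivalent to a term iff $\NOT{\tau}$ is equivalent to a clause. You are also right that the second ``implicates'' is a typo for ``implicants''; your version simply spells out in more detail the literal-complement check and the primality contrapositive that the paper leaves implicit.
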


\begin{proof}
This follows since \(\tau \models \NOT \Delta\) iff \(\Delta \models \NOT \tau\), and since \(\tau\) is equivalent to a term iff
\(\NOT \tau\) is equivalent to a clause.
\end{proof}

\begin{lemma}\label{lemma:resolution_consensus_duality}
For terms $\lit_1 \AND \tau_1$ and $\lit_2 \AND \tau_2$ where $\lit_1$, $\lit_2$ are \(X\)-literals, the negation of the consensus of $\lit_1 \AND \tau_1$ and $\lit_2 \AND \tau_2$ on $X$ is equivalent to the resolvent of $\NOT{\lit_1} \OR \NOT{\tau_1}$ 
and $\NOT{\lit_2} \OR \NOT{\tau_2}$ on $X$.
\end{lemma}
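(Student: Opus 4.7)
The plan is to prove the lemma by direct application of De Morgan's laws, showing that both sides reduce to the same syntactic object once simplified, and then to verify that the side conditions (non-entailment of literals and non-triviality) correspond.

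First I would unfold the consensus. By Definition~\ref{def:consensus}, the $X$-consensus of $\lit_1 \AND \tau_1$ and $\lit_2 \AND \tau_2$ is $\gamma = (\lit_1 \OR \lit_2) \AND \tau_1 \AND \tau_2$. Applying De Morgan,
\[
\NOT \gamma \;\equiv\; \NOT{(\lit_1 \OR \lit_2)} \OR \NOT \tau_1 \OR \NOT \tau_2 \;\equiv\; (\NOT \lit_1 \AND \NOT \lit_2) \OR \NOT \tau_1 \OR \NOT \tau_2.
\]
Next I would unfold the resolvent. Since $\lit_1,\lit_2$ are $X$-literals (proper, non-empty subsets of the states of $X$), $\NOT \lit_1$ and $\NOT \lit_2$ are also $X$-literals, and $\NOT \tau_1,\NOT \tau_2$ are clauses. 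Hence the two input clauses $\NOT \lit_1 \OR \NOT \tau_1$ and $\NOT \lit_2 \OR \NOT \tau_2$ have the shape required by Definition~\ref{def:resolution}, and their $X$-resolvent is precisely $(\NOT \lit_1 \AND \NOT \lit_2) \OR \NOT \tau_1 \OR \NOT \tau_2$, which matches $\NOT \gamma$ above.

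Next I would check that the side conditions on literals agree. The consensus requires $\lit_1 \not\models \lit_2$ and $\lit_2 \not\models \lit_1$, i.e.\ $\lit_1 \not\subseteq \lit_2$ and $\lit_2 \not\subseteq \lit_1$. Taking complements, these are equivalent to $\NOT \lit_2 \not\subseteq \NOT \lit_1$ and $\NOT \lit_1 \not\subseteq \NOT \lit_2$, which are exactly the conditions $\NOT \lit_1 \not\models \NOT \lit_2$ and $\NOT \lit_2 \not\models \NOT \lit_1$ required to form the resolvent.

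Finally I would match the non-triviality conditions. The consensus is defined only when $\gamma \neq \bot$; the resolvent is defined only when the candidate clause is not equivalent to $\top$. Since the candidate resolvent is $\NOT \gamma$, it is equivalent to $\top$ iff $\gamma$ is equivalent to $\bot$, so the two conditions coincide. No obstacle is expected here: the argument is essentially a one-line application of De Morgan once the definitions are unpacked; the only bookkeeping is to verify that the complement of an $X$-literal is still an $X$-literal (which holds because $\lit_1$ is neither $\emptyset$ nor the full state set) so that Definition~\ref{def:resolution} applies.
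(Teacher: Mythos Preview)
Your proposal is correct and follows essentially the same approach as the paper: unfold the consensus and resolvent definitions and observe via De Morgan that one is the negation of the other. In fact you are more thorough than the paper's three-line proof, since you additionally verify that the literal non-entailment conditions and the non-triviality conditions correspond under negation, which the paper leaves implicit.
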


\begin{proof}
The consensus of $\lit_1 \AND \tau_1$ and $\lit_2 \AND \tau_2$ is $(\lit_1 \OR \lit_2) \AND \tau_1 \AND \tau_2$. 
The resolvent of $\NOT{\lit_1} \OR \NOT{\tau_1}$ 
and $\NOT{\lit_2} \OR \NOT{\tau_2}$ is 
$(\NOT{\lit_1} \AND \NOT{\lit_2}) \OR \NOT{\tau_1} \OR \NOT{\tau_2}$. 
Finally, 
$(\NOT{\lit_1} \AND \NOT{\lit_2}) \OR \NOT{\tau_1} \OR \NOT{\tau_2} = \NOT{(\lit_1 \OR \lit_2) \AND \tau_1 \AND \tau_2}$.
\end{proof}

\begin{proof}[of Proposition~\ref{prop:resolution}]
Let $\Delta$ be a CNF. By Lemma~\ref{lemma:consensus}, closing the DNF $\NOT{\Delta}$ under consensus and removing subsumed terms yields the prime implicants of $\NOT{\Delta}$. By Lemma~\ref{lemma:resolution_consensus_duality}, the negations of consensus generated while closing DNF $\NOT{\Delta}$ under consensus correspond 
to resolvents generated while closing CNF $\Delta$ under resolution. By Lemma~\ref{lemma:duality}, the prime implicates of $\Delta$ are the negations of the prime implicants of $\NOT{\Delta}$. Hence, closing $\Delta$ under resolution generates all the negations of the prime implicants of $\NOT{\Delta}$, which are the prime implicates of $\Delta$. Therefore, closing $\Delta$ under resolution and removing subsumed clauses yields exactly the prime implicates of $\Delta$.
\end{proof}

\subsection*{Proposition~\ref{prop:inc vd clauses}}

The proof of this proposition uses two lemmas
which effectively say that when applying resolution
to a locally fixated CNF, the variables of resolvents
grow monotonically. That is, if a clause \(\sigma^*\)
was derived using a clause \(\sigma\), then the variables
of \(\sigma^*\) are a superset of the variables of \(\sigma\).

\begin{lemma}\label{lemma:inc vd clauses}
Let $\alpha=\l_1 \OR \sigma_1$, 
$\beta=\l_2 \OR \sigma_2$ be two clauses which are locally fixated on some instance $\instance$. 
If $\l_1$ and $\l_2$ are \(X\)-literals, and if
\(\sigma\) is the \(X\)-resolvent
of clauses \(\alpha\) and \(\beta\), then
$\SetV(\sigma) = \SetV(\alpha) \cup \SetV(\beta)$.
\end{lemma}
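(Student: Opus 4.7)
The plan is to unpack the definition of the $X$-resolvent $\sigma = (\ell_1 \AND \ell_2) \OR \sigma_1 \OR \sigma_2$ and show that, after rewriting it as an honest clause (one literal per variable), every variable of $\alpha$ and every variable of $\beta$ still appears, and no spurious variables are introduced. Since trivially $\SetV(\sigma) \subseteq \SetV(\alpha) \cup \SetV(\beta)$, the real content is the inclusion $\SetV(\alpha) \cup \SetV(\beta) \subseteq \SetV(\sigma)$, and equivalently that none of the combined literals collapse to~$\top$ or vanish.

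First, I would handle the pivot variable $X$. By local fixation, every literal of $\alpha$ and $\beta$ is consistent with the instance $\instance$, so both $X$-literals $\ell_1$ and $\ell_2$ must contain the state $\instance[X]$; hence $\ell_1 \AND \ell_2$ (i.e.\ $\ell_1 \cap \ell_2$) is non-empty. Combined with the Definition~\ref{def:resolution} conditions $\ell_1 \not\models \ell_2$ and $\ell_2 \not\models \ell_1$, the intersection is a proper non-empty subset of the states of $X$, so it is a legitimate $X$-literal. Consequently $X \in \SetV(\sigma)$.

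Next, I would handle every other variable $Y$ that is mentioned in $\sigma_1$ or $\sigma_2$. If $Y$ occurs only in one of the two sub-clauses, its literal is inherited verbatim into $\sigma$. If $Y$ occurs in both, say as $Y$-literals $\ell_1^Y$ in $\sigma_1$ and $\ell_2^Y$ in $\sigma_2$, the disjunction $\ell_1^Y \OR \ell_2^Y$ must be rewritten as a single $Y$-literal $\ell_1^Y \cup \ell_2^Y$. By local fixation both literals contain $\instance[Y]$, so the union is non-empty; and because $\sigma \neq \top$ by hypothesis in Definition~\ref{def:resolution}, the union cannot equal the full set of states of $Y$ (otherwise this literal would be $\top$ and the whole clause would collapse). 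Thus the merged literal is a valid $Y$-literal, and $Y \in \SetV(\sigma)$.

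Putting the two pieces together yields $\SetV(\sigma) = \{X\} \cup \SetV(\sigma_1) \cup \SetV(\sigma_2) = \SetV(\alpha) \cup \SetV(\beta)$. The only subtle point, and where I expect care is needed, is the case of a variable shared between $\sigma_1$ and $\sigma_2$: I must use both local fixation (to get a common state and hence a non-empty union) and the explicit non-triviality assumption $\sigma \neq \top$ (to prevent the union from swallowing all states of that variable). Everything else is syntactic bookkeeping.
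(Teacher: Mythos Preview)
Your proposal is correct and follows essentially the same approach as the paper: use local fixation to show $\ell_1 \cap \ell_2 \neq \emptyset$ so that $X$ survives, and use the non-triviality assumption $\sigma \neq \top$ to show that no merged $Y$-literal in $\sigma_1 \OR \sigma_2$ becomes $\top$, so every variable of $\sigma_1$ and $\sigma_2$ survives. Two small remarks on presentation: (i) the conditions $\ell_1 \not\models \ell_2$ and $\ell_2 \not\models \ell_1$ are not what make $\ell_1 \cap \ell_2$ a \emph{proper} subset of the state space---that already follows from $\ell_1 \cap \ell_2 \subseteq \ell_1$, which is a literal and hence proper; and (ii) you do not need local fixation to argue that the union of two $Y$-literals is non-empty (literals are non-empty by definition), so that sentence can be dropped.
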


\begin{proof}
Recall that a clause is a disjunction of literals
over distinct variables.
Suppose that $\l_1$ and $\l_2$ are \(X\)-literals.
If \(\sigma\) is the \(X\)-resolvent
of clauses \(\alpha\) and \(\beta\), then \(\sigma\)
is the clause equivalent to $(\l_1 \AND \l_2) \OR \sigma_1 \OR \sigma_2$ and \(\sigma \neq \top\).
Since $\alpha$ and $\beta$ are locally fixated on $\instance$, all literals in $\alpha$ and $\beta$ are consistent with $\instance$, so
$\l_1 \AND \l_2 \not = \bot$ and $X \in \SetV(\sigma)$. Since $\sigma \not = \top$, then
$\sigma_1 \OR \sigma_2 \not = \top$ so the
variables of the clause equivalent to \(\sigma_1 \OR \sigma_2\) are
$\SetV(\sigma_1) \cup \SetV(\sigma_2)$. Hence,  $\SetV(\sigma) = \SetV(\alpha) \cup \SetV(\beta)$.
\end{proof}

We will say that clause $\sigma^*$ is a \textit{descendant resolvent} of clause $\sigma$ if $\sigma^* = \sigma$ or if $\sigma^*$ was obtained by a sequence of resolutions that involved clause $\sigma$.

\begin{lemma}\label{lemma:descendant resolvent}
Let $S$ be a set of clauses which are locally fixated on some instance $\instance$, and 
let $S^*$ be 
the result of closing $S$ under resolution. If $\sigma^* \in S^*$ is a descendant resolvent of some $\sigma \in S$, then $\SetV(\sigma) \subseteq \SetV(\sigma^*)$.
\end{lemma}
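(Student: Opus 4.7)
The plan is to prove this by induction on the number of resolution steps used to derive $\sigma^*$ from $\sigma$, with Lemma~\ref{lemma:inc vd clauses} as the main engine. Since that lemma requires both parents of each resolution step to be locally fixated, I first need the auxiliary invariant that \emph{every} clause generated while closing $S$ under resolution remains locally fixated on $\instance$. To verify the invariant, note that $\instance$ assigns each variable a single state, so local fixation of a clause amounts to saying every $X$-literal in it contains $\instance[X]$. If $\alpha = \l_1 \OR \sigma_1$ and $\beta = \l_2 \OR \sigma_2$ are both locally fixated and $\l_1,\l_2$ are $X$-literals, then $\instance[X] \in \l_1 \cap \l_2$, so the combined $X$-literal $\l_1 \AND \l_2$ in the resolvent is non-empty and consistent with $\instance$; the remaining literals come from $\sigma_1$ and $\sigma_2$ and are already consistent with $\instance$. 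Hence the resolvent is locally fixated, and the invariant follows by induction on the closure construction.

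With the invariant in hand, I would induct on the number $k$ of resolution steps used to obtain $\sigma^*$ from $\sigma$. The base case $k=0$ gives $\sigma^* = \sigma$ and the inclusion $\SetV(\sigma) \subseteq \SetV(\sigma^*)$ is immediate. For the inductive step, $\sigma^*$ is the $Y$-resolvent of two clauses $\alpha$ and $\beta$ in $S^*$, and by the definition of a descendant resolvent at least one of them, say $\alpha$, is a descendant resolvent of $\sigma$ obtained in fewer than $k$ steps. The induction hypothesis yields $\SetV(\sigma) \subseteq \SetV(\alpha)$. By the invariant, $\alpha$ and $\beta$ are both locally fixated, so Lemma~\ref{lemma:inc vd clauses} applies and gives $\SetV(\sigma^*) = \SetV(\alpha) \cup \SetV(\beta) \supseteq \SetV(\alpha) \supseteq \SetV(\sigma)$, completing the induction.

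The main obstacle is establishing and then carefully propagating the local fixation invariant through the closure construction, because Lemma~\ref{lemma:inc vd clauses} would fail without it (e.g., if $\l_1 \cap \l_2 = \emptyset$, the resolvent could drop variable $X$ and the inclusion could be lost). Once that invariant is secured, the induction itself is a straightforward chain of set inclusions.
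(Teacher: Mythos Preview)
Your proposal is correct and follows essentially the same approach as the paper, which simply states that the lemma ``follows directly from Lemma~\ref{lemma:inc vd clauses}.'' You have spelled out in full the induction on the length of the resolution derivation and, importantly, made explicit the invariant that local fixation is preserved under resolution---a point the paper's one-line proof leaves implicit but which is indeed required for Lemma~\ref{lemma:inc vd clauses} to apply at every step.
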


\begin{proof}
This lemma follows directly from Lemma~\ref{lemma:inc vd clauses}.
\end{proof}

\begin{proof}[of Proposition~\ref{prop:inc vd clauses}]
We prove both directions.

\textit{First direction:} 
If $\sigma^*$ is a variable-minimal prime implicate of $S$, then $\sigma^*$ is a variable-minimal prime implicate of $S \setminus \{\sigma\}$.
Let $\sigma^*$ be a variable-minimal prime implicate of $S$. Our goal is to show that (1)~$\sigma^*$ is a prime implicate of $S \setminus \{\sigma\}$ and (2)~there does not exist another prime implicate ${{\sigma^{+}}}$ of $S \setminus \{\sigma\}$ such that $\SetV({{\sigma^{+}}}) \subset \SetV(\sigma^*)$.

To prove~(1), it suffices to show that 
(1a)~$\sigma^*$ is an implicate of $S \setminus \{\sigma\}$ and (1b)~$\sigma^*$ is not subsumed by any other implicate of $S \setminus \{\sigma\}$. 
To prove~(1a), we first recall that $ \SetV(\sigma) \supset \SetV(\sigma')$ for some 
clause $\sigma' \in S$ by the conditions of Proposition~\ref{prop:inc vd clauses}. Since $\sigma^*$ is a prime implicate of $S$, it must be derivable from $S$ using resolution by Proposition~\ref{prop:resolution}. 
Suppose there is a resolution
proof of $\sigma^*$ that involves clause \(\sigma\).
We will now show a contradiction, therefore establishing $\sigma^*$ as an implicate of $S\setminus\{\sigma\}$. 
First,
$\sigma^*$ is descendant resolvent of $\sigma$ in this case so $\SetV(\sigma^*) \supseteq \SetV(\sigma)$
by Lemma~\ref{lemma:descendant resolvent}. This implies
that $\SetV(\sigma^*) \supseteq \SetV(\sigma)
\supset \SetV(\sigma')$
for some clause $\sigma' \in S$. If $\sigma'$ is
a prime implicate of $S$, then 
$\sigma^*$ cannot be a variable-minimal 
prime implicate of $S$ since $\SetV(\sigma^*) 
\supset \SetV(\sigma')$. If $\sigma'$ is not
a prime implicate of $S$, then it must be subsumed by some prime implicate of $S$ which must mention a subset of the variables in $\sigma'$ so 
$\sigma^*$ cannot be a variable-minimal 
prime implicate of $S$. In either case, we have
a contradiction. Hence, $\sigma^*$
can be derived from $S$ using resolution without
involving clause $\sigma$. This means that
$\sigma^*$ is an implicate of $S\setminus\{\sigma\}$
so~(1a) holds.
We next show (1b) by contradiction. 
Suppose $\sigma^*$ is subsumed by some other implicate $\sigma^{**}$ of $S \setminus \{\sigma\}$. 
Then
$\sigma^*$ cannot be a prime implicate of $S$ as it is subsumed by $\sigma^{**}$
which must also be an implicate of $S$. This is a contradiction so $\sigma^*$ is not subsumed by any other implicate of $S \setminus \{\sigma\}$ and~(1b) holds. Hence,~(1) holds.

We now prove~(2) by contradiction. Suppose ${\sigma^{+}}$ is a
prime implicate of $S \setminus \{\sigma\}$ such that $\SetV({{\sigma^{+}}}) \subset \SetV(\sigma^*)$. 
Since ${\sigma^{+}}$ is an implicates of $S \setminus \{\sigma\}$, it is also an implicate of $S$.
Hence, either ${\sigma^{+}}$ is a prime implicate of $S$ or a clause that subsumes ${\sigma^{+}}$ (mentions a subset of ${\sigma^{+}}$'s variables) is a prime implicate of $S$. Either way, $\sigma^*$ cannot be a variable-minimal prime implicate of $S$, which is a contradiction, so~(2) holds.

\textit{Second direction:}
If $\sigma^*$ is a variable-minimal prime implicate of $S\setminus \{\sigma\}$, then $\sigma^*$ is a variable-minimal prime implicate of $S$.
Let $\sigma^*$ be a variable-minimal prime implicate of $S \setminus \{\sigma\}$. Our goal is to show that (1)~$\sigma^*$ is a prime implicate of $S$ and 
(2)~there does not exist another prime implicate $\sigma^{+}$ of $S$ such that $\SetV({{\sigma^{+}}}) \subset \SetV(\sigma^*)$.

To prove~(1), it suffices to show that 
(1a)~$\sigma^*$ is an implicate of $S$ and (1b)~$\sigma^*$ is not subsumed by any other prime implicate of $S$. Since $\sigma^*$ is an implicate of $S\setminus \{\sigma\}$, it must be an implicate of $S$ so~(1a) holds immediately. 
We next show~(1b). 
Since $\sigma^*$ is a prime implicate of $S \setminus \{\sigma\}$, it cannot be subsumed by any other prime implicate of $S \setminus \{\sigma\}$.
Suppose $\sigma^{**}$ is a prime implicate of $S$ but not 
a prime implicate of $S \setminus \{\sigma\}$.
Then $\sigma^{**}$ can be derived from $S$ using a resolution proof that involves $\sigma$. 
Hence, 
$\sigma^{**}$ is a descendent resolvent of $\sigma$ so
$\SetV(\sigma^{**}) \supseteq \SetV(\sigma)$ by Lemma~\ref{lemma:descendant resolvent}.
Moreover, 
$\SetV(\sigma) \supset \SetV(\sigma')$ 
for some clause $\sigma' \in S$ by
the conditions of Proposition~\ref{prop:inc vd clauses}.
Since
$\sigma^{**}$ subsumes $\sigma^*$ only if $\SetV(\sigma^{**}) \subseteq \SetV(\sigma^*)$ and
since 
$\SetV(\sigma') \subset \SetV(\sigma^{**})$,
then $\sigma^{**}$ cannot subsume $\sigma^*$;
otherwise, $\SetV(\sigma') \subset \SetV(\sigma^*)$, which implies there is a prime implicate of $S \setminus \{\sigma\}$ that subsumes $\sigma'$ and that mentions only a strict subset of the variables in $\sigma^*$, so $\sigma^*$ cannot be a variable-minimal prime implicate of $S \setminus \{\sigma\}$ which is a contradiction. As such,
$\sigma^*$ cannot be subsumed by any other prime implicate of $S$ so~(1b) and~(1) hold.

We next prove~(2) by contradiction. Suppose there is a prime implicate $\sigma^{+}$ of $S$ such that $\SetV(\sigma^{+}) \subset \SetV(\sigma^*)$.  
Then, $\sigma^+$ can be derived from $S$ using resolution
by Proposition~\ref{prop:resolution}. 
We consider two cases. 
First case: the resolution proof does not involve clause $\sigma$. Then $\sigma^+$
is an implicant of $S\setminus\{\sigma\}$.
Since some clause that subsumes $\sigma^+$ must be a prime implicate of $S\setminus\{\sigma\}$ and must
mention only a subset of the variables in $\sigma^+$, $\sigma^*$ 
cannot be a variable-minimal prime implicate of $S \setminus \{\sigma\}$ which is a contradiction.
Second case: the
resolution proof involves clause $\sigma$. 
In this case,
$\sigma^{+}$ is a descendant resolvent of $\sigma$ so $\SetV(\sigma^{+}) \supseteq \SetV(\sigma)$ by Lemma~\ref{lemma:descendant resolvent}. This further implies
$\SetV(\sigma^{+}) \supset \SetV(\sigma')$ for some clause $\sigma' \in S$ by the conditions of Proposition~\ref{prop:inc vd clauses}, and
also 
$\SetV(\sigma^{*}) \supset \SetV(\sigma^{+}) \supset \SetV(\sigma')$. 
Since $\sigma' \in S \setminus \{\sigma\}$, some prime implicate of $S \setminus \{\sigma\}$ must subsume $\sigma'$ and must mention only a subset of its variables. Therefore, $\sigma^*$ cannot be a variable-minimal prime implicate of $S \setminus \{\sigma\}$, a contradiction. 
We get a contradiction in both cases, so~(2) holds.
\end{proof}

\clearpage
\section{Path Explanations}
\label{sec:path exp diff}

Consider the decision tree in Figure~\ref{fig:PathFigure1} which classifies the instance 
\( (x_1\!\!=\!\!1,x_2\!\!=\!\!1,x_3\!\!=\!\!1,x_4\!\!=\!\!1)\) as Y using the red path.
This path corresponds to the term $(x_1 \in \{1, 2\}, x_2 \in \{1, 2\}, x_3 \in \{1\}, x_4 \in \{1\})$ which implies the
formula \(\Delta_Y\) for class Y.
This term is normally viewed as an explanation for the decisions on instances that follow this path.
However, the shorter term $(x_1 \in \{1, 2\}, x_2 \in \{1, 2\}, x_3 \in \{1\})$ also implies the class formula \(\Delta_Y\) and can therefore be viewed as a better explanation since
feature $x_4$ is irrelevant to such decisions.
This phenomena was observed in~\cite{DBLP:journals/jair/IzzaIM22} which introduced the notion of an abductive path explanation (APXp): a minimal subset of the literals on a path that implies the corresponding class formula. The APXp is a syntactic notion
as it depends on the specific decision tree. That is,
two different decision trees that represent the same classifier may lead to different APXps. This is in contrast
to the notion of a GSR that we propose which is a semantic notion that depends only on the underlying classifier (i.e., its class formulas). 
That is, two distinct decision trees that represent the same classifier always lead to the same GSRs for any instance.

 \begin{figure}[h]
        \centering
        \scalebox{0.8}{
        \begin{tikzpicture}[
        roundnode/.style={circle ,draw=black, thick},
        squarenode/.style={rectangle, draw=black, thick},
        ]
        \node[roundnode] (x4) {\Large $x_4$};
        \node[squarenode]     (Y0)        [below=of x4, xshift = -1.2cm, yshift = 0.8cm] {\Large Y};
        \node[roundnode]     (x3)        [below=of x4, xshift = 1.2cm, yshift = 0.8cm] {\Large $x_3$};
        \node[roundnode]     (x1l)       [below=of x3, xshift = -1.8cm, yshift = 0.8cm] {\Large $x_1$};
        \node[squarenode]    (Y1)       [below=of x1l, xshift = -0.8cm, yshift = 0.8cm] {\Large Y};
        \node[squarenode]    (N1)       [below=of x1l, xshift = 0.8cm, yshift = 0.8cm] {\Large N};
        
        \node[roundnode]     (x2)       [below=of x3, xshift = 1.8cm, yshift = 0.8cm] {\Large $x_2$};
        \node[roundnode]     (x1r)       [below=of x2, xshift = -0.8cm, yshift = 0.8cm] {\Large $x_1$};
        \node[squarenode]    (N2)       [below=of x2, xshift = 0.8cm, yshift = 0.8cm] {\Large N};
        \node[squarenode]    (Y2)       [below=of x1r, xshift = -0.8cm, yshift = 0.8cm] {\Large Y};
        \node[squarenode]    (N3)       [below=of x1r, xshift = 0.8cm, yshift = 0.8cm] {\Large N};

        \draw[-latex, thick] (x4.240) -- node [anchor = center, xshift = -4mm, yshift = 2mm] {$\{0\}$} (Y0.north);
        \draw[-latex, thick, red] (x4.300) -- node [anchor = center, xshift = 4mm, yshift = 2mm] {$\{1\}$} (x3.north);
        
        \draw[-latex, thick] (x3.240) -- node [anchor = center, xshift = -2mm, yshift = 2mm] {$\{0\}$} (x1l.north);
        \draw[-latex, thick, red] (x3.300) -- node [anchor = center, xshift = 2mm, yshift = 2mm] {$\{1\}$} (x2.north);

        \draw[-latex, thick] (x1l.240) -- node [anchor = center, xshift = -4mm, yshift = 1.5mm] {$\{1\}$} (Y1.north);
        \draw[-latex, thick] (x1l.300) -- node [anchor = center, xshift = 6mm, yshift = 1.5mm] {$\{2, 3\}$} (N1.north);

        \draw[-latex, thick, red] (x2.240) -- node [anchor = center, xshift = -6mm, yshift = 1.5mm] {$\{1, 2\}$} (x1r.north);
        \draw[-latex, thick] (x2.300) -- node [anchor = center, xshift = 4mm, yshift = 1.5mm] {$\{3\}$} (N2.north);

        \draw[-latex, thick, red] (x1r.240) -- node [anchor = center, xshift = -6mm, yshift = 1.5mm] {$\{1, 2\}$} (Y2.north);
        \draw[-latex, thick] (x1r.300) -- node [anchor = center, xshift = 4mm, yshift = 1.5mm] {$\{3\}$} (N3.north);
        \end{tikzpicture}
        }
        \caption{A decision tree with two classes: Y and N. Variables $x_1$, $x_2$ are ternary. Variables $x_3$, $x_4$ are binary. \label{fig:PathFigure1}}
\end{figure}
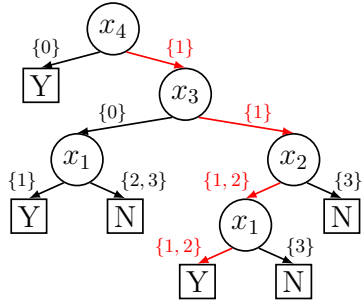

For the decision tree in Figure~\ref{fig:PathFigure1},
the decision on instance \((x_1\!\!=\!\!1,x_2\!\!=\!\!1,x_3\!\!=\!\!1,x_4\!\!=\!\!1)\)
has a GSR,
($x_1 \in \{1\}, x_2 \in \{1, 2\}$), which
does not correspond
to any APXp of any path in the decision tree.
Moreover, this GSR generates
the SR $(x_1\!\!=\!\!1,x_2\!\!=\!\!1$) as ensured
by our Proposition~\ref{prop:gsr_no_lost}.

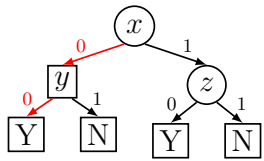
\begin{figure}[h]
        \centering
        \scalebox{0.8}{
        \begin{tikzpicture}[
        roundnode/.style={circle ,draw=black, thick},
        squarenode/.style={rectangle, draw=black, thick},
        ]
        \node[roundnode] (x) {\Large $x$};
        \node[squarenode]     (y)        [below=of x, xshift = -1.2cm, yshift = 0.7cm] {\Large $y$};
        \node[squarenode]    (Y1)       [below=of y, xshift = -0.6cm, yshift = 0.7cm] {\Large Y};
        \node[squarenode]    (N1)       [below=of y, xshift = 0.6cm, yshift = 0.7cm] {\Large N};
        
        \node[roundnode]     (z)        [below=of x, xshift = 1.2cm, yshift = 0.7cm] {\Large $z$};
        \node[squarenode]    (Y2)       [below=of z, xshift = -0.6cm, yshift = 0.7cm] {\Large Y};
        \node[squarenode]    (N2)       [below=of z, xshift = 0.6cm, yshift = 0.7cm] {\Large N};
        \draw[-latex, thick, red] (x.240) -- node [anchor = center, xshift = -2mm, yshift = 1.5mm] {$0$} (y.north);
        \draw[-latex, thick] (x.300) -- node [anchor = center, xshift = 2mm, yshift = 1.5mm] {$1$} (z.north);
        
        \draw[-latex, thick, red] (y.240) -- node [anchor = center, xshift = -2mm, yshift = 1.5mm] {$0$} (Y1.north);
        \draw[-latex, thick] (y.300) -- node [anchor = center, xshift = 2mm, yshift = 1.5mm] {$1$} (N1.north);
        \draw[-latex, thick] (z.240) -- node [anchor = center, xshift = -2mm, yshift = 1.5mm] {$0$} (Y2.north);
        \draw[-latex, thick] (z.300) -- node [anchor = center, xshift = 2mm, yshift = 1.5mm] {$1$} (N2.north);
        \end{tikzpicture}
        }
        \caption{A decision tree with two classes: Y and N. All variables are binary.}\label{fig:PathFigure2}
\end{figure}

For another example, consider Figure~\ref{fig:PathFigure2} in which all variables are binary so one does not need to
go beyond simple explanations that are subsets of instances.
The instance $(\eql{x}{0}, \eql{y}{0}, \eql{z}{0})$ is classified as Y using the red path. This decision has two SRs, \((\eql{x}{0}, \eql{y}{0})\) and \((\eql{y}{0}, \eql{z}{0})\). The APXp for the red path is $(\eql{x}{0}, \eql{y}{0})$. The APXps for the other paths are $(\eql{x}{0}, \eql{y}{1})$, $(\eql{x}{1}, \eql{z}{0})$, $(\eql{x}{1}, \eql{z}{1})$. None correspond to the SR $(\eql{y}{0}, \eql{z}{0})$. 
This shows that APXps cannot even generate
all simple explanations (i.e., subsets of the instance).
In contrast, Proposition~\ref{prop:gsr_no_lost} guarantees that every SR will be generated by some GSR. 
A similar argument applies to the notion of contrastive path explanation (CPXp) proposed
in~\cite{DBLP:journals/jair/IzzaIM22}.
See also Example~6 in~\cite{DBLP:journals/jair/IzzaIM22}
for a related discussion of limitations. 

\section{Numeric Features}
\label{sec:numeric}

\begin{figure}[tb]
\centering
 \resizebox{0.2\textwidth}{!}{%
        \begin{tikzpicture}[
        roundnode/.style={text width = 0.55cm, circle ,draw=black, thick},
        squarednode/.style={rectangle, draw=black, thick},
        ]
        \node[squarednode]     (AGE1)                              {\Large Age};
        \node[squarednode]     (BMI2)       [below=of AGE1, xshift = -0.8cm, yshift = 0.4cm] {\Large BMI};
        \node[squarednode]     (AGE2)       [below=of AGE1, xshift = 0.8cm, yshift = -1.2cm] {\Large Age};
        \node[roundnode]       (NO3)   [, below=of BMI2, xshift = -0.5cm, yshift = 0.4cm] {\Large No};
        \node[roundnode]       (YES3)   [below=of BMI2, xshift = 0.5cm, yshift = 0.4cm] {\Large Yes};
        \node[squarednode]     (BMI31)       [below=of AGE2, xshift = -1.45cm, yshift = 0.4cm] {\Large BMI};
        \node[squarednode]     (BMI32)       [below=of AGE2, xshift = 0.15cm, yshift = -1.2cm] {\Large BMI};
        \node[roundnode]       (NO41)   [below=of BMI31, xshift = -0.5cm, yshift = 0.4cm] {\Large No};
        \node[roundnode]       (YES42)   [below=of BMI31, xshift = 0.5cm, yshift = 0.4cm] {\Large Yes};
        \node[roundnode]       (NO43)   [below=of BMI32, xshift = -0.2cm, yshift = 0.4cm] {\Large No};
        \node[roundnode]       (YES44)   [below=of BMI32, xshift = -1.2cm, yshift = 0.4cm] {\Large Yes};
        
        \draw[-latex, thick] (AGE1.240) -- node [anchor = center, xshift = -6mm, yshift = 1mm] {$<18$} (BMI2.north);
        \draw[-latex, thick] (AGE1.300) -- node [anchor = center, xshift = 6mm, yshift = 1mm] {$\ge18$} (AGE2.north);
        \draw[-latex, thick] (BMI2.240) -- node [anchor = center, xshift = -6mm] {$<30$} (NO3.north);
        \draw[-latex, thick] (BMI2.300) -- node [anchor = center, xshift = 6mm] {$\ge30$} (YES3.north);
        \draw[-latex, thick] (AGE2.240) -- node [anchor = center, xshift = -5mm, yshift = 1mm] {$<40$} (BMI31.north);
        \draw[-latex, thick] (AGE2.300) -- node [anchor = center, xshift = 5mm, yshift = 1mm] {$\ge40$} (BMI32.north);
        \draw[-latex, thick] (BMI31.240) -- node [anchor = center, xshift = -5mm] {$<27$} (NO41.north);
        \draw[-latex, thick] (BMI31.300) -- node [anchor = center, xshift = 5mm] {$\ge27$} (YES42.north);
        \draw[-latex, thick] (BMI32.240) -- node [anchor = center, xshift = -6mm] {$\ge25$} (YES44.north);
        \draw[-latex, thick] (BMI32.300) -- node [anchor = center, xshift = 5mm] {$<25$} (NO43.north);
        \end{tikzpicture}
}%
\quad
 \resizebox{0.35\textwidth}{!}{%
        \begin{tikzpicture}[
        roundnode/.style={text width = 0.55cm, circle ,draw=black, thick},
        squarednode/.style={rectangle, draw=black, thick},
        ]
        \node[squarednode]     (AGE1)                              {\Large Age};
        \node[squarednode]     (BMI2)       [below=of AGE1, xshift = -1.05cm, yshift = 0.4cm] {\Large BMI};
        \node[squarednode]     (AGE2)       [below=of AGE1, xshift = 0.8cm, yshift = -1.2cm] {\Large Age};
        \node[roundnode]       (NO3)   [, below=of BMI2, xshift = -0.5cm, yshift = 0.4cm] {\Large No};
        \node[roundnode]       (YES3)   [below=of BMI2, xshift = 0.5cm, yshift = 0.4cm] {\Large Yes};
        \node[squarednode]     (BMI31)       [below=of AGE2, xshift = -2.85cm, yshift = 0.4cm] {\Large BMI};
        \node[squarednode]     (BMI32)       [below=of AGE2, xshift = 0.2cm, yshift = -1.2cm] {\Large BMI};
        \node[roundnode]       (NO41)   [below=of BMI31, xshift = -0.5cm, yshift = 0.4cm] {\Large No};
        \node[roundnode]       (YES42)   [below=of BMI31, xshift = 0.5cm, yshift = 0.4cm] {\Large Yes};
        \node[roundnode]       (YES44)   [below=of BMI32, xshift = -1.3cm, yshift = 0.4cm] {\Large Yes};
        \node[roundnode]       (NO43)   [below=of BMI32, xshift = 0cm, yshift = 0.4cm] {\Large No};
        
        \draw[-latex, thick] (AGE1.240) -- node [anchor = center, xshift = -8mm, yshift = 1mm] {$[0, 18)$} (BMI2.north);
        \draw[-latex, thick] (AGE1.300) -- node [anchor = center, xshift = 10mm, yshift = 8mm] {$[18, 40), [40, \infty)$} (AGE2.north);
        \draw[-latex, thick] (BMI2.240) -- node [anchor = center, xshift = -21mm] {$[0, 25), [25, 27), [27, 30)$} (NO3.north);
        \draw[-latex, thick] (BMI2.300) -- node [anchor = center, xshift = 7mm] {$[30, \infty)$} (YES3.north);
        \draw[-latex, thick] (AGE2.240) -- node [anchor = center, xshift = -8mm, yshift = 2mm] {$[18, 40)$} (BMI31.north);
        \draw[-latex, thick] (AGE2.300) -- node [anchor = center, xshift = 6mm, yshift = 1mm] {$[40, \infty)$} (BMI32.north);
        \draw[-latex, thick] (BMI31.240) -- node [anchor = center, xshift = -15mm] {$[0, 25), [25, 27)$} (NO41.north);
        \draw[-latex, thick] (BMI31.300) -- node [anchor = center, xshift = 15mm] {$[27, 30), [30, \infty)$} (YES42.north);
        \draw[-latex, thick] (BMI32.240) -- node [anchor = center, xshift = -23mm] {$[25, 27), [27, 30), [30, \infty)$} (YES44.north);
        \draw[-latex, thick] (BMI32.300) -- node [anchor = center, xshift = 8mm] {$[0, 25)$} (NO43.north);
        \end{tikzpicture}
}%
\caption{Numeric features (left) and their discretization (right). \label{fig:continuous-dg}}
\end{figure}

GSRs and GNRs are
particularly significant when explaining the decisions of classifiers with numeric features, such as decision trees
and random forests.
Consider the decision tree in
Figure~\ref{fig:continuous-dg}(left). 
One can discretize its 
numeric features to yield the decision tree in Figure~\ref{fig:continuous-dg}(right) as is commonly practiced. 
For example,
$\Age$ is discretized into three intervals: $[0, 18), [18, 40)$ and $[40, \infty)$ so it can be treated as a ternary discrete variable. Similarly, $\BMI$ is discretized into four intervals: $[0, 25), [25, 27), [27, 30), [30, \infty)$. The numeric and discrete decision trees are equivalent as they make the same 
decision on every instance. This follows since two distinct instances will be classified equally by the numeric decision 
tree if the point values of their features fall into the same intervals. 

The decision on instance $(\eql{\Age}{42} \AND \eql{\BMI}{28})$ is $\yes$. 
To explain this decision, one usually works with the
discrete decision tree which views this as the discrete 
instance $(\eql{\Age}{[40,\infty)} \AND \eql{\BMI}{[27,30)})$, which can be notated equivalently 
as $(\Age \GE 40) \AND (27 \LE \BMI \LT 30)$.
There is only one SR for the decision
on this instance, which is
$(\Age \GE 40) \AND (27 \LE \BMI \LT 30)$; that is,
the instance itself.
But there are two GSRs:
$(\Age \GE 18 \AND \BMI \GE 27)$ and 
$(\Age \GE 40 \AND \BMI \GE 25)$ which are
significantly more informative. 
SRs are quite limited 
in this case as they can only reference simple literals that
appear in the instance: 
$\eql{\Age}{[40,\infty)}$ and 
$\eql{\BMI}{[27,30)}$.
GSRs can reference any literal implied by
the instance, such as $\Age \IN \{[18, 40),[40,\infty)\}$,
which allows them to provide more informative explanations.

The NRs for the above decision are $\Age\GE 40$ and $27 \LE \BMI \LT 30$. All we can learn from the second one, 
as an example, is that it is possible to flip the decision by 
changing $\BMI$ to some value $\NIN [27, 30)$. 
If we change $\BMI$ to $32$, keeping $\Age$ the same, this NR is violated but the decision is not changed (we are only guaranteed that \textit{some} change that violates the NR will flip the decision).
In contrast, 
the GNRs are $\Age \GE 18$ and $\BMI \GE 25$ which come with stronger guarantees as mentioned earlier.
For example, 
the second GNR, $\BMI \GE 25$,
tells us that changing $\BMI$ to \(\LT 25\), while keeping $\Age$ the same, is guaranteed to flip the decision which is significantly more informative.

\section{More on General Reasons, GSRs and GNRs}
\label{sec:var-min}

Suppose $\Gamma$ is a 
general reason; 
$\tau_1, \ldots, \tau_n$ are the GSRs 
(variable-minimal prime implicants of $\Gamma$),
and $\sigma_1, \ldots, \sigma_m$ are the GNRs (variable-minimal prime implicates of $\Gamma$).
Then it is possible that $\Gamma \neq \BOR_{i=1}^n \tau_i$,
$\Gamma \neq \BAND_{i=1}^m \sigma_i$
and/or
$\BOR_{i=1}^n \tau_i \neq \BAND_{i=1}^m \sigma_i$.
To illustrate this, 
consider the class formula $\Delta = x_1 \AND y_1 \OR x_{12} \AND y_{12} \AND z_1$ and instance $\instance = x_1 \AND y_1 \AND z_1$. The general reason is $\iuq \instance \cdot \Delta = \Delta$. The only GSR is $x_1 \AND y_1$ and the GNRs are $x_{12}$ and $y_{12}$. We have,
$\Delta \neq x_1 \AND y_1$;
$\Delta \neq x_{12} \AND y_{12}$;
and
$x_1 \AND y_1 \neq x_{12} \AND y_{12}$.
This is different from the case for simple explanations where the disjunction of SRs, the conjunction of NRs, and the complete reason are all equivalent. Therefore, neither GSRs nor GNRs capture all the information contained in the general reason,
which suggest that general reasons may have futher
applications beyond GSRs and GNRs.

We now turn to another key observation.
Suppose $\Gamma$ is a general reason for instance $\instance$ and let $\sigma$
be one of its prime implicates ($\sigma$ is not necessarily variable-minimal and, hence, may not be a GNR). We can minimally change instance $\instance$
to violate $\sigma$ yet without necessarily flipping the
decision on $\instance$. This can never happen though if $\sigma$ is variable-minimal (by
Definition~\ref{def:gnr} and Proposition~\ref{prop:PI-IP}).

Consider the following example with 
ternary variables $X, Y, Z$,
instance $\instance = x_1 \AND y_1 \AND z_1$ 
and its class formula $\Delta = \NOT \Delta_n$
where 
\[
\begin{aligned}[t] \Delta_n = 
&\ (x_1 \AND y_2 \AND z_3)\ \OR
(x_1 \AND y_3 \AND z_2)\ \OR 
(x_1 \AND y_3 \AND z_3)\ \OR\\
&\ (x_2 \AND y_1 \AND z_2)\ \OR 
(x_3 \AND y_1 \AND z_2)\ \OR
(x_3 \AND y_1 \AND z_3)\ \OR\\
&\ (x_2 \AND y_2 \AND z_1)\ \OR
(x_2 \AND y_3 \AND z_1)\ \OR
(x_3 \AND y_2 \AND z_1).
\end{aligned}
\]
The general reason $\iuq \instance \cdot \Delta$ 
for the decision on instance $\instance$ is
\[
\begin{aligned}[t]
&\ (\bot \OR y_{13} \OR z_{12}) \AND
(\bot \OR y_{12} \OR z_{13}) \AND
(\bot \OR y_{12} \OR z_{12}) \AND\\
&\ (x_{13} \OR \bot \OR z_{13}) \AND 
 (x_{12} \OR \bot \OR z_{13}) \AND
(x_{12} \OR \bot \OR z_{12}) \AND \\
&\ (x_{13} \OR y_{13} \OR \bot) \AND
(x_{13} \OR y_{12} \OR \bot) \AND
(x_{12} \OR y_{13} \OR \bot).
\end{aligned}
\]
which simplifies to
\[
\iuq \instance \cdot \Delta 
= (y_{12} \OR z_1) \AND (y_1 \OR z_{12}) \AND (x_{12} \OR z_1) \AND (x_1 \OR z_{13}) \AND (x_{13} \OR y_1) \AND (x_1 \OR y_{13}).
\]
Note that $\sigma = x_1 \OR y_1 \OR z_1$ is a prime implicate of $\iuq I \cdot \Delta$ which can be obtained by resolving $y_1 \OR z_{12}$ with $x_1 \OR z_{13}$ on variable $Z$. 
However, any instance $\instance'$ that does not satisfy $\sigma$ is a model of $\Delta$. This follows since 
$\instance' \models {\NOT \sigma} = x_{23} \AND y_{23} \AND z_{23}$
and all models of $\NOT{\Delta} = \Delta_n$ contain $x_1$, $y_1$ or $z_1$. Therefore, violating the prime implicate $\sigma$ of the general reason does
not flip the decision. Note further
that this prime implicate $\sigma$ is not variable-minimal (i.e., not a GNR) since $\sigma' = y_{12} \OR z_1$ is also a prime implicate of the general reason and
$\vars(\sigma') \subset \vars(\sigma)$.

\end{document}